\crefname{defi}{Definition}{Definitions}
\crefname{lemm}{Lemma}{Lemmas}
\crefname{theo}{Theorem}{Theorems}
\crefname{prop}{Proposition}{Propositions}
\crefname{coro}{Corollary}{Corollaries}
\crefname{rema}{Remark}{Remarks}
\Crefname{rema}{Remark}{Remarks}
\crefname{table}{Table}{Tables}
\Crefname{table}{Table}{Tables}
\crefname{figure}{Figure}{Figures}
\Crefname{figure}{Figure}{Figures}
\crefname{assu}{Assumption}{Assumptions}
\crefname{assumption}{Assumption}{Assumptions}
\crefname{equation}{}{}
\Crefname{equation}{}{}
\crefname{algorithm}{Algorithm}{Algorithms}
\Crefname{algorithm}{Algorithm}{Algorithms}
\crefname{chapter}{Chapter}{Chapters}
\Crefname{chapter}{Chapter}{Chapters}
\crefname{enumi}{}{}
\Crefname{enumi}{}{}
\crefname{appendix}{Appendix}{Appendices}
\Crefname{appendix}{Appendix}{Appendices}
\crefname{section}{Section}{Sections}
\Crefname{section}{Section}{Sections}
\theoremstyle{plain}
\newtheorem{theo}{Theorem}[section]
\newtheorem{prop}[theo]{Proposition}
\newtheorem{lemm}[theo]{Lemma}
\theoremstyle{definition}
\newtheorem{defi}[theo]{Definition}
\newtheorem{assu}[theo]{Assumption}
\newtheorem{rema}[theo]{Remark}
\DeclarePairedDelimiter\paren{\lparen}{\rparen}  
\DeclarePairedDelimiter\brc{\{}{\}}  
\DeclarePairedDelimiter\sbra{\lbrack}{\rbrack}  
\DeclarePairedDelimiter\norm{\lVert}{\rVert}  
\DeclarePairedDelimiter\ceil{\lceil}{\rceil}  
\newcommand{\setR}[1][]{\mathbb{R}^{#1}}
\title{Block Coordinate Descent for Neural Networks Provably Finds Global Minima}
\author{%
  Shunta Akiyama \\
  CyberAgent\\
  \texttt{akiyama\_shunta@cyberagent.co.jp} \\
}
\begin{document}

\maketitle

\begin{abstract}
In this paper, we consider a block coordinate descent (BCD) algorithm for training deep neural networks and provide a new global convergence guarantee under strictly monotonically increasing activation functions. 
While existing works demonstrate convergence to stationary points for BCD in neural networks, our contribution is the first to prove convergence to global minima, ensuring arbitrarily small loss. 
We show that the loss with respect to the output layer decreases exponentially while the loss with respect to the hidden layers remains well-controlled. 
Additionally, we derive generalization bounds using the Rademacher complexity framework, demonstrating that BCD not only achieves strong optimization guarantees but also provides favorable generalization performance. Moreover, we propose a modified BCD algorithm with skip connections and non-negative projection, extending our convergence guarantees to ReLU activation, which are not strictly monotonic. 
Empirical experiments confirm our theoretical findings, showing that the BCD algorithm achieves a small loss for strictly monotonic and ReLU activations.
\end{abstract}
\section{Introduction} \label{sec:introduction}
Deep learning has driven remarkable progress across a wide range of fields, including computer vision, natural language processing, and reinforcement learning, achieving state-of-the-art results on numerous tasks. 
Despite these empirical successes, the theoretical understanding of the training dynamics and optimization behavior of deep neural networks remains elusive, primarily due to the highly non-convex structure of their loss landscapes \citep{li2018visualizing}. 
A central open problem is the establishment of convergence guarantees to global minima for gradient descent algorithms, particularly those implemented through backpropagation in deep architectures comprising multiple layers.
The neural tangent kernel (NTK) framework \citep{jacot2018neural} has provided partial theoretical insights by approximating the training dynamics by a linearized one within a reproducing kernel Hilbert space (RKHS). 
However, this linearized perspective does not fully capture the empirical efficacy of deep learning models.
In practice, deep learning often surpasses the performance of kernel methods, including those based on NTK, suggesting that the NTK regime captures only a limited aspect of optimization capabilities.

As an alternative to backpropagation-based gradient methods, block coordinate descent (BCD), a framework rooted in mathematical optimization (see, e.g., \cite{tseng2001convergence}), optimizes partitioned variable blocks iteratively while holding others fixed, offering computational efficiency through partial parameter updates. 
Its structure also supports parallel and distributed implementations \citep{boyd2011distributed, mahajan2017distributed}, making it well-suited for large-scale neural network training.

Given the highly non-convex nature of neural network loss landscapes, BCD-based approaches have emerged as promising alternatives to conventional gradient-based approaches \citep{carreira2014distributed, Askari2018-hf, lau2018proximal, zhang2017convergent, patel2020block, zeng2019global, nakamura2021block, qiao2021inertial, zhang20220, xu2024convergence}. 
A widely adopted strategy in this setting is to partition the network parameters by layer, treating the weights of each layer as individual blocks, allowing for sequential or alternating updates, as illustrated in \cref{fig:backprop}. 
This layer-wise decomposition enables the reformulation of the original global loss into a series of sub-objectives, each localized to a specific layer. 
These sub-problems typically exhibit more tractable optimization properties compared to the full loss function, thereby facilitating more efficient and stable training dynamics.

\begin{figure}[tp]
\centering
    \includegraphics[width=0.8\columnwidth]{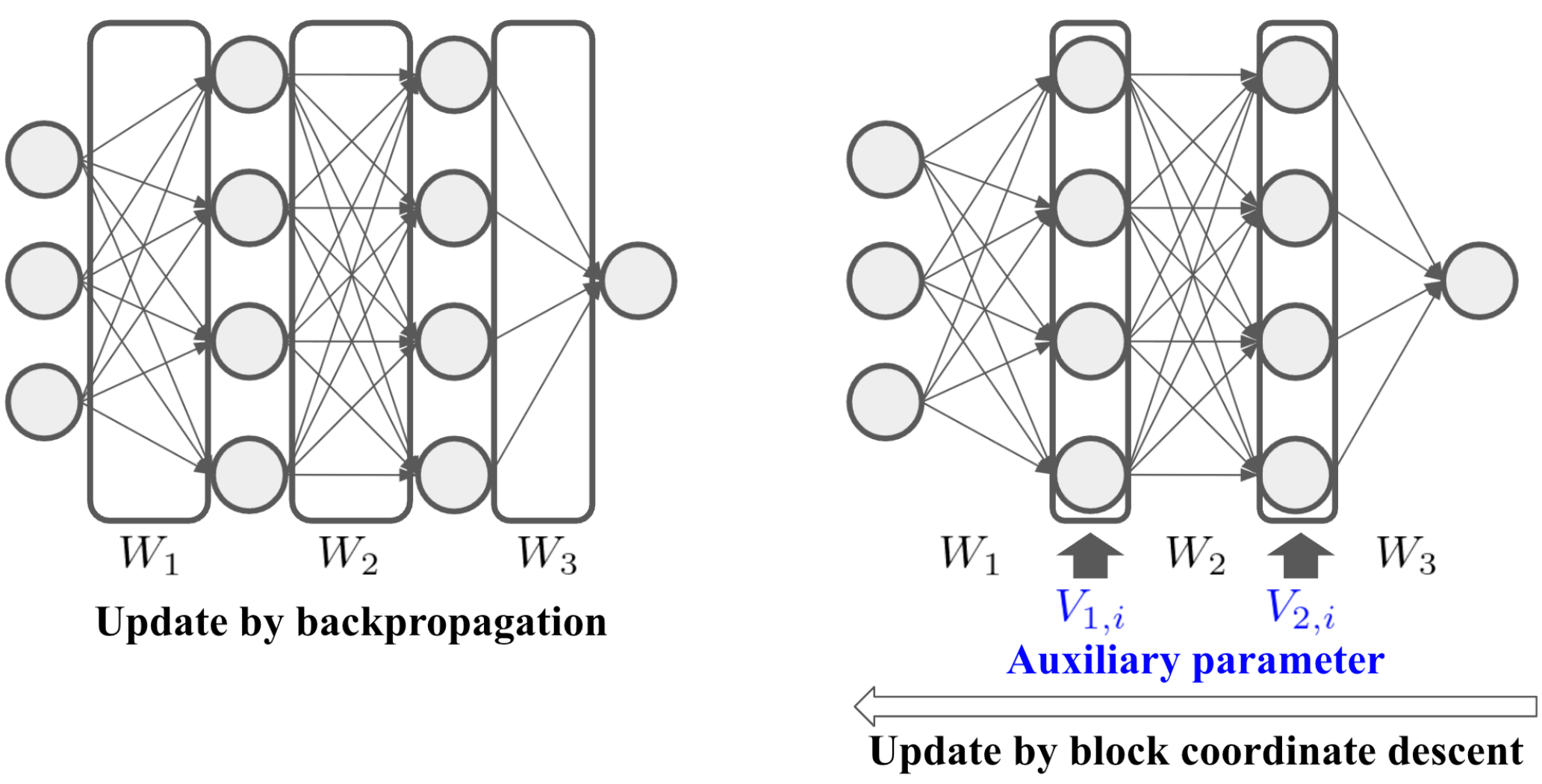}
    \caption{Graphical comparison between backpropagation and block coordinate descent. 
    In contrast, the block coordinate descent approach introduces auxiliary variables $V_{j,i}$, which serve as approximations of the hidden layer outputs, enabling layer-wise updates and a decoupled optimization structure (see \cref{sec:BCD} for details).}
    \label{fig:backprop}
\end{figure}

Building on the practical advantages of block coordinate descent (BCD) in neural network training, recent research has increasingly examined its theoretical properties, particularly its convergence behavior. 
However, the current literature on BCD for neural networks \citep{zhang2017convergent, zeng2019global, zhang20220, xu2024convergence} has been limited to establishing convergence to stationary points, that is, points where the gradient vanishes. 
While such results are of theoretical interest, they do not ensure convergence to global optima, particularly in light of the highly non-convex loss landscapes intrinsic to deep neural networks \citep{li2018visualizing, safran2018spurious}.

Understanding how neural network training can achieve global minima has emerged as a central problem in the theoretical study of deep learning. 
Although BCD offers a promising alternative to gradient-based methods, existing guarantees have not addressed this critical aspect, remaining confined to local convergence results. 
To address this gap, we aim to establish convergence guarantees to global minima for BCD applied to neural network training.
Specifically, we consider multi-layer neural networks and analyze a BCD-type algorithm in which each block update is performed via standard (vanilla) gradient descent. 
Our main contributions are summarized as follows:
\begin{itemize}
\item We establish a global convergence guarantee for a block coordinate descent (BCD) algorithm applied to the training of deep neural networks with strictly monotonically increasing activation functions. 
To the best of our knowledge, this is the first theoretical result that ensures convergence to global minima in deep neural networks with an arbitrary number of layers, without relying on assumptions associated with the neural tangent kernel (NTK) regime.


\item Under the assumption that the training data are i.i.d., we derive generalization bounds for networks trained via BCD. 
A cornerstone of our analysis is the boundedness of the weight matrices at each layer—a property we establish as a direct consequence of the convergence proof. 
Leveraging this boundedness together with the Rademacher complexity framework introduced by \cite{bartlett2017spectrally}, we derive upper bounds on the generalization gap. 

\item 
Since ReLU does not satisfy the monotonicity condition required by our initial analysis, the corresponding convergence results are not directly applicable. 
To address this issue, we propose a modified BCD algorithm that integrates skip connections \citep{he2016deep} and non-negative projection steps. 
This augmentation enables us to extend our global convergence guarantees to networks employing ReLU activations, thereby broadening the practical relevance and applicability of our theoretical contributions.
\end{itemize}
\section{Related Work}
\paragraph{Convergence Guarantees for Gradient Descent and Stochastic Gradient Descent in Neural Networks}

The convergence properties of gradient descent (GD) and stochastic gradient descent (SGD) in the context of neural network training have been the subject of extensive theoretical investigation in recent years. 
A prominent line of research focuses on the \textit{neural tangent kernel} (NTK) regime \citep{jacot2018neural, allen2019convergence, arora2019fine, du2019gradient, zou2020gradient},  wherein the training dynamics of highly overparameterized neural networks are approximated by linear models in a reproducing kernel Hilbert space (RKHS). 
While this framework facilitates global convergence analysis via convexity, it fails to capture the feature learning capability of neural networks. 
In particular, models trained under the NTK regime often exhibit minimal parameter movement from their initialization, thereby behaving more like kernel methods than adaptive learners. 
In contrast, our analysis operates outside of this kernel-based setting and does not rely on such overparameterization assumptions to ensure global convergence.

An alternative theoretical framework is the \textit{mean-field} (MF) regime \citep{nitanda2017stochastic, chizat2018global, mei2019mean, tzen2020mean, pham2021global, nguyen2023rigorous}, which interprets training dynamics as the evolution of probability measures over the parameter space. 
This formulation enables the conversion of the original non-convex optimization problem into a convex optimization over probability distributions. 
While the MF regime avoids the limitations of the NTK approach by preserving feature learning behavior and allowing global convergence analysis, most existing results are restricted to shallow architectures, typically two or three layers. 
In contrast, our work provides global convergence guarantees for neural networks with an arbitrary number of hidden layers.

More recently, \cite{banerjee2023restricted} introduced the concept of restricted strong convexity (RSC) to analyze neural network training. 
This framework derives global convergence by assuming a correlation between the gradients and outputs of neural networks throughout training. 
However, the validity and general applicability of this correlation assumption remain to be fully established.

\paragraph{Generalization Error Bounds for Multi-Layer Neural Networks}

The study of generalization properties in deep neural networks has also seen considerable progress \citep{neyshabur2015norm, wei2019data, bartlett2017spectrally, neyshabur2017pac, golowich2018size, bartlett2019nearly, arora2018stronger, suzuki2020compression}. 
These works develop upper bounds on generalization error by evaluating the capacity of neural networks using various complexity measures, such as VC-dimension, norms of weight matrices, and spectral properties. 
However, most of these studies focus solely on capacity control and do not address the optimization dynamics during training.
While some recent studies have extended generalization analysis to networks with more than two layers with global convergence guarantees, many remain constrained to three-layer architectures \citep{allen2019can, allen2019learning}.
\section{Preliminaries}
\subsection{Notations} 

For $x\in\setR[d]$, $\|x\|$ denotes its Euclidean norm. 
We denote the $d$-dimensional identity matrix by $I_d$.
For $A\in\mathbb{R}^{n\times m}$, the Frobenius norm is defined as $\|A\|_F\coloneqq\sqrt{\sum_{i,j}A_{ij}^2}$, and the operator norm is defined as $\|A\|_{op}\coloneqq \underset{\|x\|\le 1}{\sup}\|Ax\|$. 
For two symmetric matrices $A$ and $B$, we write $A\prec B$ ($A\preceq B$) if and only if the matrix $B-A$ is positive (non-negative) definite.
For $x=(x_1,\dots,x_d)^\top\in\setR[d]$, $\mathrm{diag}(x)\in\setR[d\times d]$ denotes a diagonal matrix whose $j$-th diagonal entry is $x_j$.
We denote $\min\brc*{a,b}\eqqcolon a\wedge b$.

\subsection{Problem Settings}
We consider a supervised learning setup where we are given $n$ training examples 
$
\mathcal{D} = \left\{(x_i, y_i)\right\}_{i=1}^n,
$
with input vectors $x_i \in \mathbb{R}^{d_{\mathrm{in}}}$ and corresponding labels $y_i \in \mathbb{R}^{d_{\mathrm{out}}}$. We define the data matrix as
$
X = (x_1, \dots, x_n)^\top \in \mathbb{R}^{n \times d_{\mathrm{in}}}.
$
Throughout this work, we consider a high-dimensional regime, where the data points do not exceed the input dimension, i.e., $n \le d_{\mathrm{in}}$. To ensure the well-posedness of our theoretical results, we impose the following assumption on the data matrix:

\begin{assu}[Full-rank data matrix]
\label{assu:datamatrixfullrank}
The matrix $X$ has full row rank, i.e., $\mathrm{rank}(X) = n$.
\end{assu}

This assumption is essential for establishing global convergence guarantees. As demonstrated in the proof of our main result, the existence of global minima cannot be ensured without \cref{assu:datamatrixfullrank}.

We consider a fully connected feedforward neural network with $L$ layers, defined by
\begin{align}
f_{\mathrm{NN}}(x) \coloneqq W_L \sigma\paren*{W_{L-1} \sigma\left( \dots \sigma\left(W_2 \sigma(W_1 x)\right) \dots \right)},
\end{align}
where $\sigma$ is an element-wise activation function. The weight matrices are specified as $W_1 \in \mathbb{R}^{r \times d_{\mathrm{in}}}$, $W_j \in \mathbb{R}^{r \times r}$ for $j \in \{2, \dots, L-1\}$, and $W_L \in \mathbb{R}^{d_{\mathrm{out}} \times r}$.
We assume all hidden layers share a common width $r$.

Next, we introduce our assumption on the activation function.
\begin{assu}[Activation]
    \label{assu:activation}
    $\sigma:\setR\to \setR$ is monotonically increasing and satisfies $\sigma(0)=0$. Especially, there exists a constant $0<\alpha<2$ such that $\underset{x\in\setR}{\inf} \sigma'(x)\ge\alpha$ holds\footnote{If $\sigma$ is not differentiable, we assume that $\sigma(x_1)-\sigma(x_2)\ge \alpha(x_1-x_2)$ for any $x_1$, $x_2\in\setR$.}.
    Moreover, $\sigma$ is $\ell$-Lipschitz, i.e., for any $u_1$, $u_2\in\setR[]$, $|\sigma(u_1)-\sigma(u_2)|\le \ell|u_1-u_2|$ holds.
\end{assu}

A typical example of an activation function satisfying \cref{assu:activation} is the LeakyReLU activation defined by $x \mapsto \max\{x, ax\}$ for $a < 1$, which satisfies the assumption with $\alpha = a$ and $\ell = 1$. On the other hand, the standard ReLU activation $x \mapsto \max\{x, 0\}$ does not satisfy \cref{assu:activation}. Nevertheless, we provide a global convergence result for networks with ReLU activation through a modified BCD algorithm, as detailed in \cref{sec:BCDReLU}.

Given this neural network formulation, we formalize the supervised regression problem as
\begin{align}
\label{eq:regressionprob}
\min_{\mathbf{W}} \sum_{i=1}^n \left\| f_{\mathrm{NN}}(x_i) - y_i \right\|^2,
\end{align}
where $\mathbf{W} = (W_1, \dots, W_L)$ denotes the collection of weight matrices. 
One of the most commonly used methods for solving \eqref{eq:regressionprob} is the (stochastic) gradient method, which updates parameters based on the gradient of the loss function. In contrast, we employ a layer-wise optimization method known as \emph{block coordinate descent}, which we introduce in the following section.

\section{Block Coordinate Descent} \label{sec:BCD}

In this section, we first introduce the basic concept of \textit{block coordinate descent} (BCD) and then describe the specific BCD algorithm considered in this paper.
Originally developed within the field of mathematical optimization (see, e.g., \cite{tseng2001convergence}), BCD is a general framework for solving high-dimensional optimization problems by partitioning the set of variables into disjoint blocks and optimizing each block iteratively while keeping the others fixed.

In our setting, instead of directly minimizing the original loss in \eqref{eq:regressionprob}, we introduce auxiliary variables \( V_{1,i}, \dots, V_{L-1,i} \) for each training input \( x_i \), where each \( V_{j,i} \in \mathbb{R}^r \) serves as an approximation of the output of the $j$-th hidden layer for the $i$-th input. 
This leads us to reformulate the objective as:
\begin{align}
\label{eq:BCDloss}
\min_{\mathbf{W}, \mathbf{V}}~F(\mathbf{W}, \mathbf{V}) 
\coloneqq \sum_{i=1}^n \left[ \|W_L V_{L-1,i} - y_i\|^2 
+ \gamma \sum_{j=1}^{L-1} \|\sigma(W_j V_{j-1,i}) - V_{j,i}\|^2 \right],
\end{align}
where \( \gamma > 0 \) is a regularization hyperparameter, \( V_{0,i} \coloneqq x_i \), \( \mathbf{W} = (W_1, \dots, W_L) \), and $ \mathbf{V}$ 
denotes the collection of all auxiliary variables.
In this formulation, the second term in \eqref{eq:BCDloss} quantifies the layer-wise reconstruction loss, measuring how well each auxiliary variable \( V_{j,i} \) approximates the true output of the $j$-th layer, and the first term corresponds to the prediction error at the output layer. By construction, if \( (\mathbf{W}^*, \mathbf{V}^*) \) satisfies \( F(\mathbf{W}^*, \mathbf{V}^*) = 0 \), then the corresponding weight matrices \( \mathbf{W}^* \) form a global minimizer of the original problem \eqref{eq:regressionprob}. 

One of the key advantages of the reformulated objective in \cref{eq:BCDloss} is that it allows us to treat the optimization with respect to the weights of each layer \( W_1, \dots, W_L \) independently. 
This decoupling simplifies the optimization process and enables efficient implementation strategies, such as parallelization. 
Although various methods have been proposed for optimizing \cref{eq:BCDloss}, we focus on a relatively simple yet effective scheme: we update the weight matrices \( W_j \) and the auxiliary variables \( V_{j,i} \) sequentially, starting from the output layer and proceeding backward through the network.
Concretely, the update sequence is given by
\begin{align}
    \textstyle
    W_L \rightarrow V_{L-1,i} \rightarrow W_{L-1} \rightarrow \dots \rightarrow V_{1,i} \rightarrow W_1,    
\end{align}
by using the objective funtion \cref{eq:BCDloss}. 
We summarize the full algorithm considered in this paper in \cref{algo:bcd}. In the following, we provide a detailed explanation of each step of the algorithm.
\begin{algorithm}[tb]
    \caption{Block Coordinate Descent}
    \label{algo:bcd}
    \SetAlgoLined
    \SetAlgoNoEnd
    \SetKwInOut{Input}{Input}
    \SetKwInOut{Initialization}{Initialization}
    \Input{$K$: outer iterations, $K_{V}$: inner iterations for $V_{j,i}$, $K_{W}$: inner iterations for $W_1$, \\
    $\eta_V$: step size for $V_{j,i}$, $\eta_W^{(1)}$, $\eta_W^{(2)}$: step sizes for weight updates}
    \Initialization{
    $(W_1)_{ab} \overset{\text{i.i.d.}}{\sim} \mathcal{N}(0, d_{\mathrm{in}}^{-1})$, \quad
    $(W_j)_{ab} \overset{\text{i.i.d.}}{\sim} \mathcal{N}(0, r^{-1})$ for $j = 2, \dots, L$. \\
    Apply singular value bounding to $W_j$ for $j = 2, \dots, L$ (see \cref{algo:SVB}). \\
    Set $V_{0,i} \gets x_i$, and $V_{j,i} \gets \sigma(W_j V_{j-1,i})$ for $j = 1, \dots, L-1$.\\
    }
    \For{$k \gets 1$ \KwTo $K$}{
        $W_{L}\gets W_L-\eta_W^{(1)} \nabla_{W_L}\sum_{i=1}^n\|WV_{L-1,i}-y_i\|^2$\; \label{line:WLupdate}
    
        \For{$i \gets 1$ \KwTo $n$}{ \label{line:VLupdate}
            $V_{L-1,i}\gets V_{L-1,i}-\eta_V \nabla_{V_{L-1,i}}\|WV_{L-1,i}-y_i\|^2$\;
        }
    
        \For{$j \gets L-1$ \KwTo $2$}{
            $W_{j}\gets W_{j}-\gamma\eta_W^{(1)}\nabla_{W_j}\sum_{i=1}^n\|\sigma(W_jV_{j-1,i})-V_{j,i}\|^2$\; \label{line:Wjupdate}
            \For{$i \gets 1$ \KwTo $n$}{
                \For{$k_{in} \gets 1$ \KwTo $K_V$}{
                    $V_{j-1,i}\gets V_{j-1,i}-\gamma\eta_V\nabla_{V_{j-1,i}}\|\sigma(W_jV_{j-1,i})-V_{j,i}\|^2$\; \label{line:Vjupdate}
                }
            }
        }
    
        \For{$k_{in} \gets 1$ \KwTo $K_W$}{
            $W_{1}\gets W_{1}-\gamma\eta^{(2)}_W\nabla_{W_1}\sum_{i=1}^n\|\sigma(W_1V_{0,i})-V_{1,i}\|^2$\; \label{line:W1update}
        }
    }
\end{algorithm}

\begin{rema}
    The introduction of auxiliary variables $V_{j,i}$ slightly increases memory usage with the number of samples, but the computation for each block can be executed in parallel or distributed across devices.
    In practice, this allows the method to scale efficiently even for moderately large datasets.
\end{rema}

\paragraph{Initialization:}

Each weight matrix \( W_j \) is initialized with Gaussian entries: \( \mathcal{N}(0, d_{\mathrm{in}}^{-1}) \) for \( W_1 \), and \( \mathcal{N}(0, r^{-1}) \) for \( j \geq 2 \). For layers \( j = 2, \dots, L \), we apply \emph{singular value bounding} (SVB)~\citep{jia2017improving} by computing the singular value decomposition \( W_j = U \Sigma V \), clipping singular values in \( \Sigma \) to the range \( [s_1, s_2] \), and reconstructing \( W_j = U \Sigma' V \).
The detailed implementation is provided in \cref{algo:SVB} and is deferred to \cref{sec:relucode} due to space limitations\footnote{Unlike~\cite{jia2017improving}, which applies SVB throughout training, we restrict it to initialization. With a suitably small update step, the singular values of \( W_j \) remain bounded, preserving the benefits of SVB without repeated enforcement.}.

Originally introduced to stabilize gradient-based training, SVB also enhances BCD performance. It improves the conditioning of the hidden layer loss \( \|\sigma(W_j V_{j-1,i}) - V_{j,i}\|^2 \) and mitigates large activations by constraining the operator norm of \( W_j \).
Auxiliary variables \( V_j \) are then initialized exactly as
\(
V_{j,i} = \sigma(W_j V_{j-1,i}) \quad \text{for all } j = 1, \dots, L-1,\ i = 1, \dots, n,
\)
This approach ensures zero hidden loss at initialization and promotes faster convergence.


\paragraph{Update of \texorpdfstring{$V$}{V}:}

We update the auxiliary variables \( V_{j,i} \) via gradient descent with a common step size \( \eta_V \), applying multiple iterations per update. For \( V_{L-1,i} \), we minimize the output loss:
\begin{align}
    \label{eq:VLupdate}
    V_{L-1,i} \gets V_{L-1,i} - \eta_V \nabla_{V_{L-1,i}} \|W_L V_{L-1,i} - y_i\|^2,
\end{align}
which corresponds to solving the linear system \( W_L V_{L-1,i} = y_i \). A unique solution exists if \( W_L \in \mathbb{R}^{d_{\mathrm{out}} \times r} \) has full row rank.

For \( j = 2, \dots, L-1 \), we update \( V_{j-1,i} \) using the hidden layer loss:
\begin{align}
    \label{eq:Vjupdate}
    V_{j-1,i} \gets V_{j-1,i} - \gamma \eta_V \nabla_{V_{j-1,i}} \|\sigma(W_j V_{j-1,i}) - V_{j,i}\|^2.
\end{align}

Assuming \( \sigma \) satisfies \cref{assu:activation}, its strict monotonicity implies invertibility. Hence, updating \( V_j \) approximates solving \( W_{j+1} V_{j,i} = \sigma^{-1}(V_{j+1,i}) \).
Provided \( W_{j+1} \in \mathbb{R}^{r \times r} \) is invertible, gradient descent with a properly chosen \( \eta_V \) will converge to a solution.

Note that, unlike the hidden representations $V_{j,i}$ for $j < L-1$, the update of the final hidden representation $V_{L-1,i}$ (for the output layer) is performed only once per outer iteration, unlike the hidden layers.
This asymmetry is intentional and theoretically justified (see \cref{sec:BCDproof}), since the output-layer subproblem is linear and converges in a single step.

\begin{rema}
In \cref{algo:bcd}, each hidden representation $V_{j,i}$ is updated using only the local loss
$\|\sigma(W_j V_{j-1,i}) - V_{j,i}\|^2$, instead of solving the full BCD subproblem that also involves the adjacent layers.
This simplified update is sufficient for convergence (see \cref{theo:bcd}, 
since the hidden-layer losses become very small after each outer iteration, and further updates would have little effect on the overall optimization.
\end{rema}

\paragraph{Update of \texorpdfstring{$W$}{W}:}

Each weight matrix \( W_j \) is updated using its corresponding loss term. For the output layer, the loss is \( \sum_{i=1}^n \|W_L V_{L-1,i} - y_i\|^2 \), while for hidden layers \( j = 1, \dots, L-1 \), we use \( \sum_{i=1}^n \|\sigma(W_j V_{j-1,i}) - V_{j,i}\|^2 \).

Weights \( W_j \) for \( j = 2, \dots, L \) are updated once per iteration using a step size \( \eta_W^{(1)} \) (line~\ref{line:WLupdate} and \ref{line:Wjupdate}):
\begin{align}
    \label{eq:WLupdate}
    W_L &\gets W_L - \eta_W^{(1)} \nabla_{W_L} \sum_{i=1}^n \|W_L V_{L-1,i} - y_i\|^2, \\
    \label{eq:Wjupdate}
    W_j &\gets W_j - \gamma \eta_W^{(1)} \nabla_{W_j} \sum_{i=1}^n \|\sigma(W_j V_{j-1,i}) - V_{j,i}\|^2.
\end{align}

In contrast, \( W_1 \) is updated multiple times per round using a different step size \( \eta_W^{(2)} \) (line~\ref{line:W1update}):
\begin{align}
    \label{eq:W1update}
    W_1 \gets W_1 - \gamma \eta_W^{(2)} \nabla_{W_1} \sum_{i=1}^n \|\sigma(W_1 V_{0,i}) - V_{1,i}\|^2.
\end{align}

This asymmetry is essential for convergence. Since both \( W_j \) and \( V_{j-1,i} \) are updated for \( j \geq 2 \), a single weight update suffices to ensure the hidden loss decreases linearly by applying multiple updates to the auxiliary variables \( V_{j-1,i} \), assuming the singular values of \( W_j \) remain bounded. 
However, multiple updates are unnecessary and may destabilize training when \( n > r \), as exact minimizers may not exist.
For \( W_1 \), fixed inputs \( V_{0,i} = x_i \) enable linear convergence under \cref{assu:datamatrixfullrank} (\( d_{\mathrm{in}} \ge n \), \( \mathrm{rank}(X) = n \)), ensuring a global minimizer exists and justifying repeated updates.

\begin{rema}
\label{rema:extension}
Unlike prior BCD-based approaches that incorporate regularization or proximal updates~\citep{zhang2017convergent, lau2018proximal, patel2020block}, our method uses plain gradient descent. 
Though the convergence analysis is tailored to this setting, the framework extends naturally to alternative loss functions, classification tasks, and regularized objectives, as discussed in \cref{sec:extension}.
In addition, the same framework can be extended to handle non-bijective activations such as ReLU by incorporating skip connections and non-negative projections (see \cref{sec:BCDReLU}).
\end{rema}

\section{Global Convergence of Block Coordinate Descent}
In this section, we demonstrate that block coordinate descent (BCD) applied to neural networks with activation functions satisfying \cref{assu:activation} converges to global minima. 
That is, the objective value $F$ can be made arbitrarily small through the proposed training procedure.
We focus first on the single-output case where \( d_{\mathrm{out}} = 1 \). The extension to the multi-output case is discussed in \cref{sec:multidim}. 
Additionally, for the single-output setting, we derive a generalization error bound under the assumption of i.i.d. data, utilizing the Rademacher complexity framework.
\subsection{Global Convergence with Monotonically Increasing Activation}
We consider the case of single-output regression (\( d_{\mathrm{out}} = 1 \)), for which the objective function takes the form:
\begin{align}
    \label{prob:BCDoutput1dim}
    \underset{\mathbf{W}, \mathbf{V}}{\min}~F(\mathbf{W},\mathbf{V}) & \coloneqq \sum_{i=1}^n\Bigl[\paren*{W_LV_{L-1,i}-y_i}^2+\gamma\sum_{j=1}^{L-1}\norm*{\sigma(W_jV_{j-1,i})-V_{j,i}}^2\Bigr].
\end{align}

We now state the first main result, the global convergence of BCD with activation satisfying \cref{assu:activation}.
\begin{theo}
    \label{theo:bcd}
    Suppose that activation $\sigma$ satisfies \cref{assu:activation}.
    Let $s\coloneqq \sigma_{\min}(X)$ denote the smallest singular value of the data matrix $X$.
    Let $R_i = |W_LV_{L-1,i}-y_i|$ be the initial residual at the output layer, and define $R\coloneqq\sum_{i=1}^nR_i^2$, $R_{\max}\coloneqq \underset{i}{\max}~R_i$, and $C_K\coloneqq \left(\frac{2}{\alpha}\right)^{L}\left(4R_{\max}\eta_V+\frac{2}{2-\alpha}\sqrt{\epsilon}\right)$.
    Then, there exists a constant $C_V>0$ such that under $(s_1,s_2)=(\frac34,\frac54)$, $\eta_V\le \frac{\alpha}{16\gamma\ell^4}$, $\eta_W^{(1)}\le \frac{\eta_V^{-1}}{8\sqrt{r}C_VK}\left(\frac{\alpha}{2}\right)^{L}$, $\eta_W^{(2)}\le \frac{1}{\gamma\ell^4\cdot \underset{i}{\max}\|x_i\|^2}$, and 
    $
    K =\ceil*{\frac{2}{\eta_V}\log\left(\frac{3R}{\epsilon}\right)},
    K_{V} = \ceil*{\frac{1}{\gamma\alpha\ell\eta_V}\log\left(\frac{48\gamma \ell^2(L-2)rnC_K^2}{\alpha^2\epsilon}\right)},
    K_W = \ceil*{\frac{1}{4\gamma s\alpha^2\eta_W^{(2)}}\log\left(\frac{3\ell^2\cdot\underset{i}{\max}\norm*{x_i}^2rnC_K^2}{\alpha^2s^2\epsilon}\right)},
    $
    it holds
    $$
        F(\mathbf{W},\mathbf{V})\le \epsilon,
    $$
    where $\mathbf{W}=(W_1,\dots,W_L)$ and $\mathbf{V}=(V_{1,1},\dots,V_{L-1,n})$ are the output of \cref{algo:bcd}.
\end{theo}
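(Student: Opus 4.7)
The plan is to bound $F$ by $\epsilon$ by budgeting roughly $\epsilon/3$ for the output residual $\sum_i(W_LV_{L-1,i}-y_i)^2$ and $\epsilon/(3(L-1))$ for each of the $L-1$ hidden reconstruction losses. The proof proceeds by induction on the outer iteration $k=1,\dots,K$ while maintaining two invariants: (i) every $W_j$ ($j\ge 2$) keeps its singular values in a small neighborhood of $[s_1,s_2]=[\tfrac34,\tfrac54]$, so that all the $W_j$ are uniformly invertible; (ii) the auxiliary variables $V_{j,i}$ and output residuals $R_i$ remain uniformly bounded (this is the role of the constant $C_V$). Invariant (i) is a consequence of the SVB initialization together with the cap $\eta_W^{(1)}\lesssim (\alpha/2)^L/K$: the per-round Frobenius drift of every $W_j$ is so small that its total drift over $K$ rounds cannot breach $s_1=\tfrac34$.

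\textbf{Output-layer contraction.} Both the $W_L$ and the $V_{L-1,i}$ updates are gradient steps on the quadratic $(W_LV_{L-1,i}-y_i)^2$. In the scalar-output case, a single $V$-step contracts the residual $R_i=W_LV_{L-1,i}-y_i$ by the factor $1-2\eta_V\|W_L\|^2$, and the subsequent $W_L$ step only further decreases this quadratic by standard smooth-convex analysis. Because $\|W_L\|^2\gtrsim 1$ under invariant (i), one outer iteration shrinks $\sum_i R_i^2$ by a factor $1-c\eta_V$ for an absolute constant $c>0$, so the chosen $K=\lceil 2\eta_V^{-1}\log(3R/\epsilon)\rceil$ outer rounds drive the output loss below $\epsilon/3$.

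\textbf{Hidden cascade.} This is the core of the argument. The output update shifts $V_{L-1,i}$ by an amount $2\eta_V W_L^\top R_i$ of norm $\lesssim \eta_V R_{\max}$, which introduces a mismatch at layer $L-1$. The inner loop on $V_{L-2,i}$ then minimizes $\|\sigma(W_{L-1}V_{L-2,i})-V_{L-1,i}\|^2$, which by \cref{assu:activation} is $\Theta(\alpha^2)$-strongly convex and $\Theta(\ell^2)$-smooth in $V_{L-2,i}$ under invariant (i); with $\eta_V\le\alpha/(16\gamma\ell^4)$, the $K_V$ inner gradient steps yield geometric contraction of this loss. However, closing the mismatch requires shifting $V_{L-2,i}$ by $\lesssim (\alpha\,\sigma_{\min}(W_{L-1}))^{-1}\|\Delta V_{L-1,i}\|$, so the perturbation fed to layer $L-2$ is amplified by a factor $\approx 2/\alpha$. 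Recursing the argument from $j=L-1$ down to $j=2$ produces exactly the amplification $(2/\alpha)^L$ that appears inside $C_K$; the prescribed $K_V$ absorbs this amplification logarithmically and keeps every hidden loss below $\epsilon/(3(L-1))$ at the end of each outer round. The input-layer weight $W_1$ is handled separately: since $V_{0,i}=x_i$ is fixed and $X$ has smallest singular value $s>0$ by \cref{assu:datamatrixfullrank}, the subproblem $\sum_i\|\sigma(W_1 x_i)-V_{1,i}\|^2$ is $\Theta(\gamma\alpha^2 s^2)$-strongly convex in $W_1$ and admits a global minimizer of value $0$ (using $n\le d_{\mathrm{in}}$ and full row rank), and the prescribed $K_W$ gradient steps give linear convergence of this loss below $\epsilon/(3(L-1))$.

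\textbf{Main obstacle.} The delicate part is the coupled hidden-layer cascade: one must simultaneously (a) propagate the per-round shift $\|\Delta V_{j,i}\|$ from $j=L-1$ down to $j=1$, tracking the multiplicative $2/\alpha$ amplification at each level, (b) verify that these shifts remain inside the compact region on which invariant (i) rests, which is precisely what motivates the $(\alpha/2)^L$ factor in the cap on $\eta_W^{(1)}$, and (c) balance the $(2/\alpha)^L$ amplification against the logarithmic inner budgets $K_V$ and $K_W$ so that all $L-1$ hidden losses terminate below $\epsilon/(3(L-1))$ simultaneously. Once this cascade estimate is combined with the output-layer contraction, summing the output and hidden pieces yields $F\le\epsilon$ after $K$ outer rounds.
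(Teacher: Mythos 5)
Your proposal is correct and follows essentially the same route as the paper's proof: the same decomposition of $F$ into the output residual plus per-layer reconstruction losses, the same weight-regularity invariant obtained from SVB plus the $(\alpha/2)^L/K$ cap on $\eta_W^{(1)}$, the same $(1-c\eta_V)$ contraction of the output loss, the same downward cascade in which the $V_{L-1,i}$ shift of size $O(\eta_V R_{\max})$ is amplified by $2/\alpha$ per layer (yielding the $(2/\alpha)^L$ factor in $C_K$) and absorbed by the inner loops, and the same separate treatment of $W_1$ via the full row rank of $X$. The only (immaterial) difference is the bookkeeping of the error budget, $\epsilon/(3(L-1))$ per hidden layer versus the paper's $\epsilon/(3(L-2))$ for layers $2,\dots,L-1$ and $\epsilon/3$ for layer $1$.
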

The proof is provided in \cref{sec:BCDproof}. \cref{theo:bcd} establishes that the proposed BCD algorithm provably converges to a global minimum. In particular, for any arbitrary accuracy level \( \epsilon > 0 \), the algorithm guarantees that the objective function value can be made less than \( \epsilon \).
While the definitions of the inner and outer loop iteration counts \( K \), \( K_V \), and \( K_W \) are somewhat technical, the total number of gradient computations required to reach an \( \epsilon \)-accurate solution is bounded by
\(
\tilde{O}\left(K(L K_V + K_W)\right) = \tilde{O}\left(nL\log^2 \left(\frac{1}{\epsilon}\right)\right).
\)

The proof is divided into two key parts:
(i) the output-layer loss linearly decreases monotonically across outer iterations, and (ii) the hidden-layer losses remain sufficiently small at the end of each iteration. 
Further details are deferred to \cref{sec:BCDproof}.

It is important to emphasize that the convergence guarantees provided in \cref{theo:bcd} fall outside the scope of the neural tangent kernel (NTK) regime~\citep{jacot2018neural}, among other related frameworks. 
Specifically, while the NTK regime assumes that the network parameters remain nearly constant throughout training, our analysis accommodates settings in which the parameters may evolve by a constant magnitude, i.e., change order \( \Omega(1) \).

\begin{rema}
While Assumption~3.1 requires the data matrix $X \in \mathbb{R}^{n \times d_{in}}$ to have full row rank, 
we note that the proposed algorithm remains well-behaved even when $n > d_{in}$.
In this case, the residual error at the first layer does not vanish completely but remains bounded,
leading to an effective error level $\epsilon_{\mathrm{total}} = \epsilon + \delta_1$, 
where $\delta_1$ represents the first-layer approximation error.
When $X$ approximately spans the relevant subspace for $V_{1,i}$, $\delta_1$ is small, 
and the convergence behavior is qualitatively similar to the $n \le d_{\mathrm{in}}$ regime.
\end{rema}

\subsection{Generalization Error Bound}
The objective of this subsection is to demonstrate that the BCD algorithm described in \cref{algo:bcd} not only enjoys strong convergence guarantees but also achieves favorable generalization performance.
To this end, we make the following assumption on the data distribution:
\begin{assu}
    \label{assu:sample}
    The training sample $\{(x_i,y_i)\}_{i=1}^n$ is independently sampled from a distribution $(x,y)\sim P$.
    Under the distribution $P$, it holds that $\|x\|\le B_X$ and $|y|\le B_Y$ almost surely.
\end{assu}
This assumption is commonly used in generalization error analysis. 
The first part specifies the i.i.d. nature of the data, while the boundedness conditions ensure that the loss remains controlled with high probability.
We next define the generalization gap:
\begin{defi}[Generalization gap]
    The generalization gap is defined as the difference between training and test error, that is,
    \begin{align}
        \mathsf{Gap}\coloneqq \mathbb{E}_{{(x,y)\sim P}} \sbra*{\paren*{\hat{f}_{\mathrm{NN}}(x)-y}^2}-\frac{1}{n}\sum_{i=1}^n\paren*{\hat{f}_{\mathrm{NN}}(x_i)-y_i}^2
    \end{align}
\end{defi}
We now present the main result of this subsection:
\begin{theo}
    \label{theo:generalizationerrorbound}
    Let $\hat{f}_{NN}$ be the output of \cref{algo:bcd} under the same condition as \cref{theo:bcd}. 
    Then, under \cref{assu:sample}, with probability at least \( 1 - \delta \) over the training samples \( \{(x_i, y_i)\}_{i=1}^n \), the generalization gap satisfies
    \begin{align}
        \mathsf{Gap} \le \tilde{O}\paren*{\frac{\norm*{X}}{n}(B_Y+2^L\ell^{L-1}B_X)d_{in}^{\frac12}L^{\frac32}(2r)^{\frac{L}{2}}\log r+(B_Y+2^L\ell^{L-1}B_X)^2\sqrt{\frac{\log(1/\delta)}{n}}}.
    \end{align}
\end{theo}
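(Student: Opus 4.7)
The plan is to view $\hat{f}_{\mathrm{NN}}$ as an element of a function class whose Rademacher complexity can be controlled via the spectrally-normalized machinery of \citep{bartlett2017spectrally}, where the requisite spectral-norm bounds on the weight matrices are harvested as a byproduct of the convergence analysis underlying \cref{theo:bcd}.

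First, I would revisit the proof of \cref{theo:bcd} and extract uniform spectral-norm bounds on each $W_j$ produced by \cref{algo:bcd}. The SVB initialization guarantees $\|W_j\|_{op} \le s_2 = 5/4$ for $j \ge 2$ at the start, and the step-size prescriptions $\eta_W^{(1)}, \eta_W^{(2)}$, together with the per-iteration control of the hidden-layer losses used in the convergence proof, confine the singular values to a fixed window throughout the $K$ outer iterations; it should suffice to show $\|W_j\|_{op} \le 2$ for all $j$ and all iterates. Since $\sigma(0) = 0$ and $\sigma$ is $\ell$-Lipschitz, the standard Lipschitz recursion then yields, for every $x$ with $\|x\| \le B_X$, the uniform output bound $|\hat{f}_{\mathrm{NN}}(x)| \le 2^L \ell^{L-1} B_X$, matching the $B_X$-dependent factor in the statement.

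Next, with outputs and labels bounded, the squared loss $(f(x) - y)^2$ is Lipschitz in $f$ with constant $L_{\mathrm{loss}} = 2(B_Y + 2^L \ell^{L-1} B_X)$ and uniformly bounded by $M = (B_Y + 2^L \ell^{L-1} B_X)^2$. Symmetrization together with McDiarmid's inequality then gives
\begin{align}
    \mathsf{Gap} \le 2 L_{\mathrm{loss}} \mathcal{R}_n(\mathcal{F}) + 3 M \sqrt{\frac{\log(2/\delta)}{2n}},
\end{align}
where $\mathcal{F}$ is the class of $L$-layer networks whose weight matrices satisfy the spectral-norm bounds derived in the first step. Applying the deep-network Rademacher complexity bound of \cite{bartlett2017spectrally} to $\mathcal{F}$, and using $\|W_j\|_F \le \sqrt{r}\,\|W_j\|_{op}$ for hidden layers (contributing factors of order $\sqrt{2r}$) and $\|W_1\|_F \le \sqrt{d_{\mathrm{in}}}\,\|W_1\|_{op}$ for the first layer (producing the $d_{\mathrm{in}}^{1/2}$ factor), delivers a Rademacher bound of order $\tfrac{\|X\|}{n}(2r)^{L/2} L^{3/2} \log r$. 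Multiplying by $L_{\mathrm{loss}}$ and combining with the McDiarmid tail yields exactly the claimed expression.

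The main obstacle is the first step: making the spectral-norm bounds on every $W_j$ produced by \cref{algo:bcd} explicit and uniform over all iterations. The convergence proof implicitly controls how far each $W_j$ can drift from its SVB-projected initialization, but converting this into a clean universal bound requires tracking the accumulation of per-iteration perturbations across outer loops, plausibly via an induction that jointly controls $\|W_j\|_{op}$ and the hidden-layer losses. Once those bounds are pinned down, the remainder is a relatively routine invocation of the Bartlett--Foster--Telgarsky complexity bound combined with the usual concentration argument.
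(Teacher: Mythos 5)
Your proposal follows essentially the same route as the paper: the spectral-norm bound $\|W_j\|_{op}\le 2$ is exactly what the paper establishes during the convergence analysis (its regularity lemma for the weight matrices, proved via Weyl's inequality and the step-size choice for $\eta_W^{(1)}$), the uniform output bound $|f(x)-y|\le B_Y+2^L\ell^{L-1}B_X$ is derived by the same Lipschitz recursion, and the final bound is obtained by plugging $B_j=2$ and $(2,1)$-norm bounds of order $2d_{\mathrm{in}}$ and $2r$ into the Bartlett--Foster--Telgarsky Rademacher complexity bound combined with the standard concentration inequality for bounded losses. The ``main obstacle'' you flag is precisely resolved in the paper's Lemma on the regularity of $W_j$ during training, so your outline is correct and matches the paper's argument.
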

The proof of \cref{theo:generalizationerrorbound} is provided in \cref{sec:generalizationproof}. This result establishes that deep neural networks trained via block coordinate descent not only converge to global minima but also generalize well, extending guarantees beyond the NTK regime.

The derivation is based on the generalization analysis framework introduced by \cite{bartlett2017spectrally}, which bounds the generalization gap in terms of the spectral norms of the weight matrices. As discussed in the previous sections, our convergence analysis guarantees that the spectral norm of each \( W_j \) remains bounded throughout training. 
Combining this boundedness with the Rademacher complexity-based bounds in \cite{bartlett2017spectrally}, we obtain a high-probability upper bound on the generalization error of trained networks.

\section{ReLU Activation}\label{sec:BCDReLU}

In this section, we propose a modified BCD algorithm tailored explicitly for neural networks with the ReLU activation function, defined as \( \sigma(x) \coloneqq \max\{x, 0\} \). 
This setting is excluded from the analysis in \cref{theo:bcd} due to the violation of \cref{assu:activation}, which requires strict monotonicity.
The primary difficulty in handling ReLU lies in its non-negative range. To achieve zero hidden layer loss of the form \( \|\sigma(W_j V_{j-1}) - V_j\|^2 \), we must ensure that \( V_j \) does not contain negative entries—otherwise, the approximation cannot reach zero due to the non-negativity constraint imposed by ReLU. 
This necessitates a modification to the original BCD algorithm (\cref{algo:bcd}).

\subsection{BCD for Neural Networks with Skip Connection}

To address this issue, we employ a residual network (ResNet) architecture~\citep{he2016deep}, incorporating skip connections into the model. 
This modifies the BCD objective as follows:
\begin{align}
    \underset{\mathbf{W}, \mathbf{V}}{\min}~F(\mathbf{W},\mathbf{V}) \coloneqq \sum_{i=1}^n\Bigl[&\paren*{W_LV_{L-1,i}-y_i}^2+\gamma\sum_{j=2}^{L-1}\norm*{\sigma(W_jV_{j-1,i})+V_{j-1,i}-V_{j,i}}^2\label{eq:BCDhiddenlossReLU}\\
    &+\gamma\norm*{\sigma(W_1V_{0,i})-V_{1,i}}^2\Bigr],
\end{align}
where the hidden layer loss now includes the skip connection term \( V_{j-1,i} \), modifying the structure compared to the original formulation in \eqref{prob:BCDoutput1dim}.
To guarantee that the auxiliary variables \( V_{j,i} \) remain within the feasible range of ReLU outputs, we introduce non-negative projection steps of the form \( V_{j,i}^{+} \coloneqq \max\{ V_{j,i}, 0 \} \). The complete algorithm, incorporating skip connections and projections, is detailed in \cref{algo:bcdReLU}, which is deferred to \cref{sec:relucode} due to space limitations.
In the following, we provide a detailed explanation of this modified procedure.

The initialization and update procedures for weight matrices remain unchanged between \cref{algo:bcd} and \cref{algo:bcdReLU}. 
However, several modifications are introduced to accommodate the ReLU activation.
First, Algorithm~\ref{algo:bcdReLU} includes a non-negative projection \( V \mapsto V^+ \) applied to each \( V_{j,i} \) after the inner loop, ensuring the equation 
$
\|\sigma(W_j V_{j-1,i}) - V_{j,i}\|^2 = 0
$
is solvable by aligning with the non-negative range of ReLU.
Second, in contrast to \cref{algo:bcd}, the output layer weights \( W_L \) are held fixed during training. 
This design ensures solvability of the equation \( W_L V_{L-1,i} = y_i \) under the constraint \( V_{L-1,i} \ge \mathbf{0} \). 
The following lemma formalizes this condition:

\begin{lemm}
    \label{lemm:relu-solvability}
    Suppose the vector $W_{L}$ contains both positive and negative entries. Then, for any $y_i$, there exists a non-negative vector $V_{L-1,i}$ such that $W_LV_{L-1,i}=y_i$.
\end{lemm}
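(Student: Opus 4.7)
The plan is to prove \cref{lemm:relu-solvability} by an explicit construction. Since the statement concerns the single-output setting, $W_L \in \mathbb{R}^{1\times r}$ is a row vector, which I will write as $W_L=(w_1,\dots,w_r)$, and $y_i\in\mathbb{R}$. By hypothesis, there exist indices $a,b\in\{1,\dots,r\}$ with $w_a>0$ and $w_b<0$. My strategy is to pick a candidate $V_{L-1,i}\in\mathbb{R}^r$ supported on just the two coordinates $a$ and $b$, choose its two nonzero entries in terms of $y_i$, and verify both the equation $W_LV_{L-1,i}=y_i$ and the non-negativity constraint.

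Concretely, I will define $V_{L-1,i}$ by setting all coordinates other than $a$ and $b$ to zero, and split into two cases based on the sign of $y_i$. When $y_i\ge 0$, I set $(V_{L-1,i})_a = y_i/w_a$ and $(V_{L-1,i})_b=0$; since $w_a>0$ and $y_i\ge 0$, both entries are non-negative, and $W_LV_{L-1,i}=w_a\cdot(y_i/w_a)=y_i$. When $y_i<0$, I instead set $(V_{L-1,i})_a=0$ and $(V_{L-1,i})_b=y_i/w_b$; since $w_b<0$ and $y_i<0$, the ratio is positive, and $W_LV_{L-1,i}=w_b\cdot(y_i/w_b)=y_i$. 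In either case, $V_{L-1,i}\ge\mathbf{0}$ and the required linear equation is satisfied, completing the proof.

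A cleaner unified form, which I may use in the write-up instead of splitting into cases, is to take
\[
(V_{L-1,i})_a = \frac{\max(y_i,0)}{w_a},\qquad (V_{L-1,i})_b = \frac{\max(-y_i,0)}{-w_b},
\]
with all other entries zero; non-negativity of both coordinates is then immediate from $w_a>0$, $-w_b>0$, and $W_LV_{L-1,i}=\max(y_i,0)-\max(-y_i,0)=y_i$.

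There is essentially no obstacle in this argument; it is a one-line linear-algebraic observation that the image of the non-negative orthant under a linear map with one positive and one negative coefficient covers all of $\mathbb{R}$. The only point worth flagging is the necessity of the sign hypothesis on $W_L$: if all entries of $W_L$ had the same sign, the image of $\{V\ge\mathbf{0}\}$ under $W_L$ would be a half-line, and targets $y_i$ of the opposite sign would be unreachable, which is precisely why \cref{algo:bcdReLU} needs to ensure that $W_L$ at initialization contains entries of both signs (and why $W_L$ is subsequently held fixed).
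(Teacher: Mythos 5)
Your construction is correct: choosing one positive and one negative coordinate of $W_L$ and placing $\max(y_i,0)/w_a$ and $\max(-y_i,0)/(-w_b)$ there (zeros elsewhere) yields a non-negative solution of $W_LV_{L-1,i}=y_i$, and your remark on the necessity of the mixed-sign hypothesis is also accurate. The paper itself states this lemma without providing a proof, so there is nothing to compare against; your argument is the natural one and fills that gap.
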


\cref{lemm:relu-solvability} ensures solvability if \( W_L \) has mixed-sign entries, a condition increasingly probable as hidden layer width \( r \) grows. 
Specifically, the probability is \( 1 - \frac{1}{2^{r-1}} \) under Gaussian initialization. Moreover, we provide a concentration bound on the positive and negative components of \( W_L \), relevant to convergence rates:

\begin{lemm}
    \label{lemm:wpm}
    Let $W_{L}^\top\sim\mathcal{N}(0,r^{-1}I_r)$, $w_+\coloneqq\max\{W_{L},\mathbf{0}^\top\}$, and $w_-\coloneqq\min\{W_{L},\mathbf{0}^\top\}$.
    Then, for any $\delta>0$, with probability at least $1-2\delta$, 
    $
    w^2_{\min}\coloneqq \norm*{w_+}^2\wedge \norm*{w_-}^2\ge\frac12-\sqrt{\frac{8\log(2/\delta)}{r}}
    $
    holds.
\end{lemm}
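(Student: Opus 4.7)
The plan is to reduce the claim to standard concentration inequalities for sums of i.i.d.\ random variables. Writing $\xi_i \coloneqq \sqrt{r}\,(W_L)_i$ so that $\xi_1,\ldots,\xi_r$ are i.i.d.\ standard Gaussian, I have
\begin{align*}
\|w_+\|^2 = \frac{1}{r}\sum_{i=1}^r \xi_i^2\,\mathbf{1}[\xi_i>0], \qquad \|w_-\|^2 = \frac{1}{r}\sum_{i=1}^r \xi_i^2\,\mathbf{1}[\xi_i<0],
\end{align*}
where each summand is an i.i.d.\ nonnegative random variable with mean $\tfrac12$ (and variance $\tfrac54$), and is sub-exponential because $\xi_i^2\sim\chi_1^2$.

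A convenient algebraic identity is $\xi^2\mathbf{1}[\xi>0] = \tfrac12\xi^2 + \tfrac12\xi|\xi|$, which decomposes each norm into a chi-square piece plus a symmetric perturbation:
\begin{align*}
\|w_\pm\|^2 = \frac{1}{2r}\sum_{i=1}^r \xi_i^2 \pm \frac{1}{2r}\sum_{i=1}^r \xi_i|\xi_i|, \qquad w_{\min}^2 = \frac{1}{2r}\sum_{i=1}^r \xi_i^2 - \frac{1}{2r}\Bigl|\sum_{i=1}^r \xi_i|\xi_i|\Bigr|.
\end{align*}
I would lower-bound the first term using the Laurent--Massart inequality $\Pr(\chi_r^2 \le r - 2\sqrt{rx}) \le e^{-x}$, which yields $\tfrac{1}{2r}\sum\xi_i^2 \ge \tfrac12 - \sqrt{x/r}$ with probability at least $1-e^{-x}$. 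For the second term, each $\xi_i|\xi_i|$ is symmetric (hence mean zero) with variance $\mathbb{E}[\xi_i^4]=3$, and its MGF $\tfrac12(1-2\lambda)^{-1/2} + \tfrac12(1+2\lambda)^{-1/2}$ is finite on $|\lambda|<\tfrac12$, so a Bernstein-type inequality gives a two-sided bound $\bigl|\sum_i\xi_i|\xi_i|\bigr| \le C\sqrt{r\log(2/\delta)}$ with high probability. Combining the two pieces and taking a union bound over the chi-square tail and the two-sided symmetric tail produces the stated inequality, with the prefactor $2\delta$ in the lemma absorbing these union-bound terms.

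The main obstacle is pinning down the explicit constant $\sqrt{8}$: the Laurent--Massart step alone contributes $\sqrt{x/r}$, which is essentially tight, so the remaining budget has to cover the symmetric-part fluctuations together with the union bound over $w_+$ and $w_-$. This is calibration rather than a conceptual issue, and one could equally well bypass the decomposition by applying a one-sided sub-exponential Bernstein bound directly to $\sum_i(\xi_i^2\mathbf{1}[\xi_i>0] - \tfrac12)$ --- where the lower tail is easier since each summand is bounded below by zero --- and then tune the resulting constant to match the claim.
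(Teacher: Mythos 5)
Your proposal is sound, and your closing remark --- that one could ``bypass the decomposition by applying a one-sided sub-exponential Bernstein bound directly to $\sum_i(\xi_i^2\mathbf{1}[\xi_i>0]-\tfrac12)$'' --- is in fact exactly what the paper does. The paper's proof first establishes $\mathbb{E}[\|w_+\|^2]=\mathbb{E}[\|w_-\|^2]=\tfrac12$ via the symmetry $\|w_+\|^2+\|w_-\|^2=\|W_L\|^2$, then invokes the chi-square-type concentration bound $\mathrm{P}(|\|w_\pm\|^2-\tfrac12|\ge t)\le 2\exp(-rt^2/8)$ (citing Example~2.11 of Wainwright), sets $t=\sqrt{8\log(2/\delta)/r}$, and union-bounds over $w_+$ and $w_-$; the $1-2\delta$ comes from that two-event union. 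Your primary route --- the identity $\xi^2\mathbf{1}[\xi>0]=\tfrac12\xi^2+\tfrac12\xi|\xi|$, Laurent--Massart for the $\chi^2_r$ piece, and Bernstein for the symmetric piece --- is a genuinely different and equally valid decomposition; it buys a sharp, citable bound for the dominant chi-square term at the cost of a second tail event and a constant-tracking exercise for the $\xi|\xi|$ fluctuations. You are right that pinning down the factor $\sqrt{8}$ is the only delicate point, and it is worth noting that the paper is no more careful here: it borrows the exponent $-rt^2/8$ from the full chi-square case even though the actual summands are $\xi_i^2\mathbf{1}[\xi_i>0]$, whose sub-exponential parameters are not verified (they are dominated by those of $\chi^2_1$, so the bound is plausible, but neither your write-up nor the paper's checks this explicitly). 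In short: correct approach, honest about the one loose end, and your fallback route is the paper's proof.
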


To simplify analysis, we fix \( W_L \) throughout training, since extending the bound in \cref{lemm:wpm} to dynamic updates across iterations is non-trivial.

The update of \( W_1 \) is performed via gradient descent (line~\ref{line:reluW1update}). 
Notably, the ReLU activation is omitted in this update. Since the projection step ensures \( V_{1,i} \ge \mathbf{0} \), solving \( \sigma(W_1 V_{0,i}) = V_{1,i} \) reduces to solving the linear system \( W_1 V_{0,i} = V_{1,i} \), which is more tractable.
As in the previous section, we consider the single-output case \( d_{\mathrm{out}} = 1 \). We now present the convergence result for \cref{algo:bcdReLU}, applied to ReLU networks with skip connections.

\begin{theo}
    \label{theo:bcdReLU}
    Define \( s \), \( R_i \), \( R_{\max} \), and \( C_K \) as in Theorem~\ref{theo:bcd}.
    Then, there exists a constant $C_V>0$ such that under $(s_1,s_2)=(0,\frac14)$, $\eta_V\le \frac{1}{2w^2_{\min}}\wedge\frac{1}{12\gamma}$, $\eta_W^{(1)}\le \frac{\eta_V^{-1}}{24\sqrt{r}C_VK}\left(\frac{2}{3}\right)^{L}$, $\eta_W^{(2)}\le \frac{1}{2\cdot \underset{i}{\max}\|x_i\|^2}$, and
    $    
    K = \ceil*{\frac{1}{4\eta_Vw^2_{\min}}\log\left(\frac{3R}{\epsilon}\right)},
    K_V = \ceil*{\frac{3}{4\gamma\eta_V}\log\paren*{\frac{245(L-2)rnC_K^2}{3\epsilon}}},
    K_W = \ceil*{\frac{1}{4\gamma s^2\eta_W^{(2)}}\log\left(\frac{3\underset{i}{\max}\norm*{x_i}^2C_K^2}{s^2\epsilon}\right)},
    $
    it holds
    \begin{align}
        F(\mathbf{W},\mathbf{V})\le \epsilon,
    \end{align}
    where $\mathbf{W}=(W_1,\dots,W_L)$ and $\mathbf{V}=(V_{1,1},\dots,V_{L-1,n})$ are the output of \cref{algo:bcdReLU}.
\end{theo}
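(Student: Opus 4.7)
The plan is to preserve the two-part strategy that drove \cref{theo:bcd}: (i) show that the output-layer residual $\sum_i(W_LV_{L-1,i}-y_i)^2$ contracts geometrically across outer iterations, and (ii) show that the sum of hidden-layer losses is driven far enough below the current output residual at the end of each outer iteration to remain compatible with (i). The substantive changes from the strictly-monotonic setting are localized to three places: the output layer (since $W_L$ is frozen and $V_{L-1,i}$ is followed by a non-negative projection), the hidden layers (because of the skip connection), and $W_1$ (because projecting $V_{1,i}$ removes the ReLU from the per-sample target).

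For the output layer, the only effective update is a single projected gradient step on $V_{L-1,i}$ for the convex quadratic $(W_LV_{L-1,i}-y_i)^2$ over $\mathbb{R}^r_{\ge 0}$. \cref{lemm:relu-solvability} guarantees a feasible zero of this loss, and to obtain a dimension-free linear rate I would split the residual by sign: if $W_LV_{L-1,i}-y_i>0$ push $V_{L-1,i}$ in the direction $-w_+$, otherwise in $-w_-$. \cref{lemm:wpm} ensures both directions have squared norm at least $w_{\min}^2$, so one step decreases the squared residual by the factor $1-4\eta_Vw_{\min}^2$, and non-expansivity of the coordinate-wise $(\cdot)^+$ projection prevents the subsequent clipping from erasing this gain. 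Unrolling $K$ iterations with $\eta_V\le\tfrac{1}{2w_{\min}^2}$ then yields $\sum_iR_i^2\le\epsilon/3$ for the stated $K$.

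For the hidden layers, the key structural observation is that the map $\Phi_j\colon V_{j-1}\mapsto\sigma(W_jV_{j-1})+V_{j-1}$ has Jacobian $I+\mathrm{diag}(\sigma'(W_jV_{j-1}))W_j$; since $\sigma'\in\{0,1\}$ and $(s_1,s_2)=(0,\tfrac14)$ enforces $\|W_j\|_{op}\le\tfrac14$, this Jacobian has singular values in $[\tfrac34,\tfrac54]$ regardless of the active-set pattern. The skip connection therefore recovers the bi-Lipschitz property that strict monotonicity provided in \cref{theo:bcd}: the inner gradient descent on $V_{j-1}$ becomes a strongly convex subproblem with per-step contraction of order $1-\tfrac{4}{3}\gamma\eta_V$, justifying the form of $K_V$, and the single gradient step on $W_j$ is small enough under the prescribed $\eta_W^{(1)}$ to keep $\|W_j\|_{op}$ in $[0,\tfrac14]$ throughout training. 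Stacking the inverse-Lipschitz factor $\tfrac{4}{3}$ across $L$ layers is the source of the $(2/3)^L$ scaling appearing in the step-size restriction. The $W_1$ subproblem is the cleanest: the projected target $V_{1,i}^{+}$ is non-negative, so minimizing $\sum_i\|W_1x_i-V_{1,i}^{+}\|^2$ is an ordinary linear least-squares problem, and \cref{assu:datamatrixfullrank} together with $\sigma_{\min}(X)=s$ yields the linear rate $1-4\gamma s^2\eta_W^{(2)}$ that matches the stated $K_W$.

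I expect the main obstacle to be bookkeeping the interaction between the non-negative projection and cross-layer error propagation. Projection onto $\mathbb{R}^r_{\ge 0}$ is non-expansive in $\ell_2$, so it cannot worsen the hidden loss at its own layer, but one must verify that each projected iterate stays inside the image of $\Phi_j$ (so that the subsequent inner loop has a reachable zero-loss target) and that the layer-by-layer amplification factors combine exactly into the $(2/3)^L$ scaling of the hypothesis on $\eta_W^{(1)}$. The non-smoothness of ReLU at zero, which would otherwise require a separate argument, is harmless here because the skip connection renders $\Phi_j$ bi-Lipschitz for every sign pattern of $W_jV_{j-1}$, so the local gradient descent is effectively smooth on the relevant iterate trajectory.
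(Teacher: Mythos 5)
Your overall architecture matches the paper's proof almost exactly: the same two-part decomposition (geometric contraction of $F_L$ across outer iterations, plus smallness of the hidden losses at the end of each outer iteration), the same role for $w_{\min}^2$ from \cref{lemm:wpm}, the same bi-Lipschitz observation that $\|W_j\|_{op}\le\frac13$ makes $v\mapsto\sigma(W_jv)+v$ well-conditioned for every ReLU sign pattern (yielding the $1-\frac43\gamma\eta_V$ contraction and the $(\frac32)^L$ error amplification), and the same reduction of the $W_1$ block to linear least squares. However, there is a genuine gap in your treatment of the output layer. The algorithm performs a plain gradient step $V_{L-1,i}\gets V_{L-1,i}-2\eta_V(W_LV_{L-1,i}-y_i)W_L^\top$ followed by the clipping $(\cdot)^+$; it does not selectively move along $-w_+$ or $-w_-$. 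Your claim that ``non-expansivity of the projection prevents the clipping from erasing this gain'' does not close the argument: non-expansivity controls $\|P(v)-v^*\|$ for a feasible minimizer $v^*$, but the quantity that must contract is the linear residual $|W_LV_{L-1,i}-y_i|$, and since the quadratic $(W_Lv-y_i)^2$ is rank-one (not strongly convex), a bound on the distance to the minimizing set does not transfer to a dimension-free linear rate on the residual after projection. The paper instead writes the post-projection update as $V^{(k)}=V^{(k-1)}-2\eta_V(W_LV^{(k-1)}-y_i)\tilde w$ and proves $\tilde w^\top W_L\ge w_{\min}^2$ by a coordinate-wise sign analysis: when the residual is positive, the coordinates guaranteed to survive clipping are exactly those with $(W_L)_j\le 0$ (they move upward), contributing $\|w_-\|^2$, while clipped coordinates contribute a non-negative amount because $(V^{(k-1)}_{L-1,i})_j\ge 0$ and $(W_L)_j>0$ there; the symmetric case gives $\|w_+\|^2$. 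This case analysis (not non-expansivity) is the crux of part (I), and note that your stated direction is reversed: for a positive residual the guaranteed gain comes from $w_-$, not $w_+$.

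A second, smaller issue: you assert that the projection ``cannot worsen the hidden loss at its own layer.'' That is false as stated — projecting $V_{j-1,i}$ changes $\|\sigma(W_jV_{j-1,i})+V_{j-1,i}-V_{j,i}\|^2$ and can increase it. The paper quantifies this in \cref{lemm:ReLUprojectionerrorbound}: if the pre-projection loss is $\epsilon$ and $\|W_j\|_{op}\le\frac13$, the post-projection loss is at most $\frac{49}{9}\epsilon$, and this constant is exactly why $K_V$ contains the factor $\frac{245}{3}=\frac{49}{9}\cdot 15$ rather than a cleaner constant. You correctly identified this interaction as the main obstacle, but the proposal leaves it unresolved, and resolving it is needed to recover the stated iteration counts.
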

The proof is provided in \cref{sec:BCDReLUproof}. This result establishes a global convergence guarantee for BCD applied to deep neural networks with ReLU activation, under the skip-connection architecture. 


\section{Numerical Experiment} \label{sec:numericalexperiment}
In this section, we conduct numerical experiments to empirically validate the theoretical results established in \cref{sec:BCD,sec:BCDReLU}. Specifically, we confirm that the BCD algorithms for networks with (i) strictly monotonically increasing activation functions (\cref{algo:bcd}) and (ii) ReLU activation with skip connections (\cref{algo:bcdReLU}) successfully converge to global minima on a synthetic dataset. 
All experiments were conducted using Google Colab with a T4 GPU. 
Each experiment is independently repeated five times.
We report the mean training loss along with standard deviation bands.

\begin{figure}[t]
\centering
\begin{minipage}[b]{0.47\linewidth}
    \centering
    \includegraphics[width=0.95\linewidth]{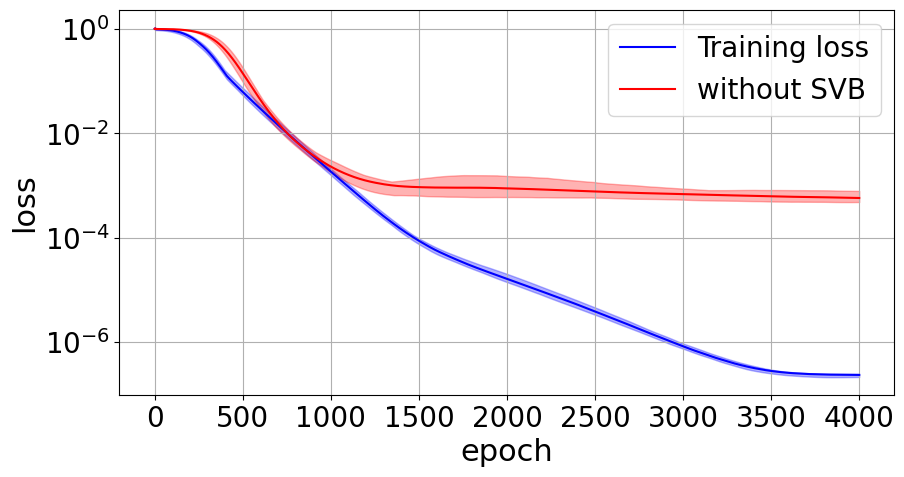}
    \caption{Training loss of \cref{algo:bcd} with LeakyReLU activation ($\alpha=0.5$).}
    \label{fig:LeakyReLU}
\end{minipage}
\hspace{0.04\columnwidth} 
\begin{minipage}[b]{0.47\linewidth}
    \centering
    \includegraphics[width=0.95\linewidth]{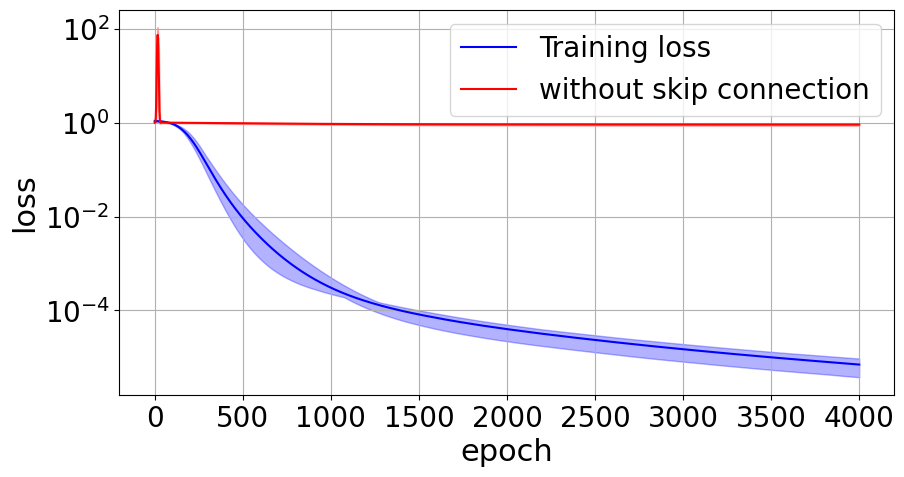}
    \caption{Training loss of \cref{algo:bcdReLU} with ReLU activation.}
    \label{fig:ReLU}
\end{minipage}
\end{figure}

\subsection{Monotonically Increasing Activation}
\label{subsec:experimentleakyrelu}
We first evaluate BCD using a strictly monotonically increasing activation function. 
We apply \cref{algo:bcd} to a neural network with four hidden layers of width \( r = 30 \), using LeakyReLU activation defined by \( \sigma(x) = \max\{x, 0.5x\} \), which satisfies Assumption~\ref{assu:activation} with \( \alpha = 0.5 \) and \( \ell = 1 \).
The training data consists of \( n = 500 \) samples generated from a teacher network with one hidden layer and the same activation. Each input \( x_i \in \mathbb{R}^{600} \) is sampled from a standard Gaussian distribution, and the output \( y_i \) is computed by the teacher network.
We set the hyperparameters to \( K_V = K_W = 100 \), and all step sizes \( \eta_V = \eta_W^{(1)} = \eta_W^{(2)} = 1 \).

The results are shown in \cref{fig:LeakyReLU}. The blue curve represents the training loss.
The training loss monotonically decreases, and the layer-wise residuals remain small, which aligns with our theoretical findings.
To assess the effect of singular value bounding (SVB, \cref{algo:SVB}), we conduct an ablation experiment where the network is trained without applying SVB. The red curve in \cref{fig:LeakyReLU} shows that, in this case, the loss stagnates around $10^{-3}$. 
This demonstrates that SVB contributes not only to theoretical convergence but also to practical training stability and effectiveness.

To further examine the scalability of the proposed BCD algorithm, 
we conducted additional experiments on deeper networks (\(L = 8, 12\)); 
the results consistently showed monotonic loss decrease across layers. 
Detailed results and plots are provided in \cref{sec:deep}.

\subsection{ReLU Activation}
We next evaluate \cref{algo:bcdReLU} on a ReLU-activated network with skip connections. We use the same architecture as in the previous subsection: four hidden layers of width \( r = 30 \), with \( n = 500 \) training samples in \( \mathbb{R}^{600} \), generated from a teacher network with ReLU activation.
As before, we set \( K_V = K_W = 100 \) and step sizes \( \eta_V = \eta_W^{(1)} = \eta_W^{(2)} = 1 \).

\cref{fig:ReLU} shows the results. The blue curve represents the training loss using skip connections. As expected, the loss decreases monotonically, and the internal residuals remain small.
To demonstrate the importance of skip connections, we also train a network without them. The red curve in \cref{fig:ReLU} shows that training fails to converge in this case due to the ReLU non-negativity constraint, which makes it challenging to match intermediate representations.
This supports our theoretical conclusion that skip connections are crucial for achieving global convergence under ReLU activation.

\section{Conclusion} \label{sec:conclusion}
In this work, we proposed a block coordinate descent (BCD) framework for training deep neural networks and established global convergence guarantees under strictly monotonically increasing activation functions. 
Our analysis demonstrated that the proposed method achieves arbitrarily small training loss, and we further derived a generalization bound based on Rademacher complexity.
To address the challenges posed by non-monotonic activations such as ReLU, we introduced a modified BCD algorithm that incorporates skip connections and non-negative projections. This variant ensures global convergence even in the presence of ReLU activations, thereby extending the applicability of BCD to widely used modern architectures.
Extensive numerical experiments corroborated our theoretical findings, showing that the proposed algorithms perform effectively in practice for both monotonic and ReLU activation functions.

\bibliography{reference}

\begin{thebibliography}{10}

\bibitem{allen2019can}
Zeyuan Allen-Zhu and Yuanzhi Li.
\newblock What can resnet learn efficiently, going beyond kernels?
\newblock {\em Advances in Neural Information Processing Systems}, 32, 2019.

\bibitem{allen2019learning}
Zeyuan Allen-Zhu, Yuanzhi Li, and Yingyu Liang.
\newblock Learning and generalization in overparameterized neural networks, going beyond two layers.
\newblock {\em Advances in neural information processing systems}, 32, 2019.

\bibitem{allen2019convergence}
Zeyuan Allen-Zhu, Yuanzhi Li, and Zhao Song.
\newblock A convergence theory for deep learning via over-parameterization.
\newblock In {\em International conference on machine learning}, pages 242--252. PMLR, 2019.

\bibitem{arora2019fine}
Sanjeev Arora, Simon Du, Wei Hu, Zhiyuan Li, and Ruosong Wang.
\newblock Fine-grained analysis of optimization and generalization for overparameterized two-layer neural networks.
\newblock In {\em International Conference on Machine Learning}, pages 322--332. PMLR, 2019.

\bibitem{arora2018stronger}
Sanjeev Arora, Rong Ge, Behnam Neyshabur, and Yi~Zhang.
\newblock Stronger generalization bounds for deep nets via a compression approach.
\newblock In {\em International conference on machine learning}, pages 254--263. PMLR, 2018.

\bibitem{Askari2018-hf}
Armin Askari, Geoffrey Negiar, Rajiv Sambharya, and Laurent El~Ghaoui.
\newblock Lifted neural networks.
\newblock {\em arXiv preprint arXiv:1805.01532}, May 2018.

\bibitem{banerjee2023restricted}
Arindam Banerjee, Pedro Cisneros-Velarde, Libin Zhu, and Misha Belkin.
\newblock Restricted strong convexity of deep learning models with smooth activations.
\newblock In {\em The Eleventh International Conference on Learning Representations}, 2023.

\bibitem{bartlett2017spectrally}
Peter~L Bartlett, Dylan~J Foster, and Matus~J Telgarsky.
\newblock Spectrally-normalized margin bounds for neural networks.
\newblock {\em Advances in neural information processing systems}, 30, 2017.

\bibitem{bartlett2019nearly}
Peter~L Bartlett, Nick Harvey, Christopher Liaw, and Abbas Mehrabian.
\newblock Nearly-tight vc-dimension and pseudodimension bounds for piecewise linear neural networks.
\newblock {\em Journal of Machine Learning Research}, 20(63):1--17, 2019.

\bibitem{boyd2011distributed}
Stephen Boyd, Neal Parikh, Eric Chu, Borja Peleato, Jonathan Eckstein, et~al.
\newblock Distributed optimization and statistical learning via the alternating direction method of multipliers.
\newblock {\em Foundations and Trends{\textregistered} in Machine learning}, 3(1):1--122, 2011.

\bibitem{carreira2014distributed}
Miguel Carreira-Perpinan and Weiran Wang.
\newblock Distributed optimization of deeply nested systems.
\newblock In {\em Artificial Intelligence and Statistics}, pages 10--19. PMLR, 2014.

\bibitem{chizat2018global}
Lenaic Chizat and Francis Bach.
\newblock On the global convergence of gradient descent for over-parameterized models using optimal transport.
\newblock {\em Advances in neural information processing systems}, 31, 2018.

\bibitem{du2019gradient}
Simon Du, Jason Lee, Haochuan Li, Liwei Wang, and Xiyu Zhai.
\newblock Gradient descent finds global minima of deep neural networks.
\newblock In {\em International conference on machine learning}, pages 1675--1685. PMLR, 2019.

\bibitem{frei2020agnostic}
Spencer Frei, Yuan Cao, and Quanquan Gu.
\newblock Agnostic learning of a single neuron with gradient descent.
\newblock {\em Advances in Neural Information Processing Systems}, 33:5417--5428, 2020.

\bibitem{golowich2018size}
Noah Golowich, Alexander Rakhlin, and Ohad Shamir.
\newblock Size-independent sample complexity of neural networks.
\newblock In {\em Conference On Learning Theory}, pages 297--299. PMLR, 2018.

\bibitem{he2016deep}
Kaiming He, Xiangyu Zhang, Shaoqing Ren, and Jian Sun.
\newblock Deep residual learning for image recognition.
\newblock In {\em Proceedings of the IEEE conference on computer vision and pattern recognition}, pages 770--778, 2016.

\bibitem{jacot2018neural}
Arthur Jacot, Franck Gabriel, and Cl{\'e}ment Hongler.
\newblock Neural tangent kernel: Convergence and generalization in neural networks.
\newblock {\em Advances in neural information processing systems}, 31, 2018.

\bibitem{jia2017improving}
Kui Jia, Dacheng Tao, Shenghua Gao, and Xiangmin Xu.
\newblock Improving training of deep neural networks via singular value bounding.
\newblock In {\em Proceedings of the IEEE Conference on Computer Vision and Pattern Recognition}, pages 4344--4352, 2017.

\bibitem{lau2018proximal}
Tim Tsz-Kit Lau, Jinshan Zeng, Baoyuan Wu, and Yuan Yao.
\newblock A proximal block coordinate descent algorithm for deep neural network training.
\newblock {\em arXiv preprint arXiv:1803.09082}, 2018.

\bibitem{li2018visualizing}
Hao Li, Zheng Xu, Gavin Taylor, Christoph Studer, and Tom Goldstein.
\newblock Visualizing the loss landscape of neural nets.
\newblock {\em Advances in neural information processing systems}, 31, 2018.

\bibitem{mahajan2017distributed}
Dhruv Mahajan, S~Sathiya Keerthi, and S~Sundararajan.
\newblock A distributed block coordinate descent method for training l1 regularized linear classifiers.
\newblock {\em Journal of Machine Learning Research}, 18(91):1--35, 2017.

\bibitem{mei2019mean}
Song Mei, Theodor Misiakiewicz, and Andrea Montanari.
\newblock Mean-field theory of two-layers neural networks: dimension-free bounds and kernel limit.
\newblock In {\em Conference on learning theory}, pages 2388--2464. PMLR, 2019.

\bibitem{mohri2018foundations}
Mehryar Mohri, Afshin Rostamizadeh, and Ameet Talwalkar.
\newblock {\em Foundations of machine learning}.
\newblock MIT press, 2018.

\bibitem{nakamura2021block}
Kensuke Nakamura, Stefano Soatto, and Byung-Woo Hong.
\newblock Block-cyclic stochastic coordinate descent for deep neural networks.
\newblock {\em Neural Networks}, 139:348--357, 2021.

\bibitem{neyshabur2017pac}
Behnam Neyshabur, Srinadh Bhojanapalli, and Nathan Srebro.
\newblock A pac-bayesian approach to spectrally-normalized margin bounds for neural networks.
\newblock {\em arXiv preprint arXiv:1707.09564}, 2017.

\bibitem{neyshabur2015norm}
Behnam Neyshabur, Ryota Tomioka, and Nathan Srebro.
\newblock Norm-based capacity control in neural networks.
\newblock In {\em Conference on learning theory}, pages 1376--1401. PMLR, 2015.

\bibitem{nguyen2023rigorous}
Phan-Minh Nguyen and Huy~Tuan Pham.
\newblock A rigorous framework for the mean field limit of multilayer neural networks.
\newblock {\em Mathematical Statistics and Learning}, 6(3):201--357, 2023.

\bibitem{nitanda2017stochastic}
Atsushi Nitanda and Taiji Suzuki.
\newblock Stochastic particle gradient descent for infinite ensembles.
\newblock {\em arXiv preprint arXiv:1712.05438}, 2017.

\bibitem{patel2020block}
Ravi~G Patel, Nathaniel~A Trask, Mamikon~A Gulian, and Eric~C Cyr.
\newblock A block coordinate descent optimizer for classification problems exploiting convexity.
\newblock {\em arXiv preprint arXiv:2006.10123}, 2020.

\bibitem{pham2021global}
Huy~Tuan Pham and Phan-Minh Nguyen.
\newblock Global convergence of three-layer neural networks in the mean field regime.
\newblock {\em arXiv preprint arXiv:2105.05228}, 2021.

\bibitem{qiao2021inertial}
Linbo Qiao, Tao Sun, Hengyue Pan, and Dongsheng Li.
\newblock Inertial proximal deep learning alternating minimization for efficient neutral network training.
\newblock In {\em ICASSP 2021-2021 IEEE International Conference on Acoustics, Speech and Signal Processing (ICASSP)}, pages 3895--3899. IEEE, 2021.

\bibitem{safran2018spurious}
Itay Safran and Ohad Shamir.
\newblock Spurious local minima are common in two-layer relu neural networks.
\newblock In {\em International conference on machine learning}, pages 4433--4441. PMLR, 2018.

\bibitem{suzuki2020compression}
Taiji Suzuki, Hiroshi Abe, and Tomoaki Nishimura.
\newblock Compression based bound for non-compressed network: unified generalization error analysis of large compressible deep neural network.
\newblock In {\em International Conference on Learning Representations}, 2020.

\bibitem{tseng2001convergence}
Paul Tseng.
\newblock Convergence of a block coordinate descent method for nondifferentiable minimization.
\newblock {\em Journal of optimization theory and applications}, 109:475--494, 2001.

\bibitem{tzen2020mean}
Belinda Tzen and Maxim Raginsky.
\newblock A mean-field theory of lazy training in two-layer neural nets: entropic regularization and controlled mckean-vlasov dynamics.
\newblock {\em arXiv preprint arXiv:2002.01987}, 2020.

\bibitem{wainwright2019high}
Martin~J Wainwright.
\newblock {\em High-dimensional statistics: A non-asymptotic viewpoint}, volume~48.
\newblock Cambridge university press, 2019.

\bibitem{wei2019data}
Colin Wei and Tengyu Ma.
\newblock Data-dependent sample complexity of deep neural networks via lipschitz augmentation.
\newblock {\em Advances in Neural Information Processing Systems}, 32, 2019.

\bibitem{xu2024convergence}
Jintao Xu, Chenglong Bao, and Wenxun Xing.
\newblock Convergence rates of training deep neural networks via alternating minimization methods.
\newblock {\em Optimization Letters}, 18(4):909--923, 2024.

\bibitem{ye2021global}
Tian Ye and Simon~S Du.
\newblock Global convergence of gradient descent for asymmetric low-rank matrix factorization.
\newblock {\em Advances in Neural Information Processing Systems}, 34:1429--1439, 2021.

\bibitem{Yehudai2020-hb}
Gilad Yehudai and Shamir Ohad.
\newblock Learning a single neuron with gradient methods.
\newblock In Jacob Abernethy and Shivani Agarwal, editors, {\em Proceedings of Thirty Third Conference on Learning Theory}, volume 125 of {\em Proceedings of Machine Learning Research}, pages 3756--3786. PMLR, 2020.

\bibitem{zeng2019global}
Jinshan Zeng, Tim Tsz-Kit Lau, Shaobo Lin, and Yuan Yao.
\newblock Global convergence of block coordinate descent in deep learning.
\newblock In {\em International conference on machine learning}, pages 7313--7323. PMLR, 2019.

\bibitem{zhang20220}
Hui Zhang, Shenglong Zhou, Geoffrey~Ye Li, and Naihua Xiu.
\newblock 0/1 deep neural networks via block coordinate descent.
\newblock {\em arXiv preprint arXiv:2206.09379}, 2022.

\bibitem{zhang2017convergent}
Ziming Zhang and Matthew Brand.
\newblock Convergent block coordinate descent for training tikhonov regularized deep neural networks.
\newblock {\em Advances in Neural Information Processing Systems}, 30, 2017.

\bibitem{zou2020gradient}
Difan Zou, Yuan Cao, Dongruo Zhou, and Quanquan Gu.
\newblock Gradient descent optimizes over-parameterized deep relu networks.
\newblock {\em Machine learning}, 109:467--492, 2020.

\end{thebibliography}
\bibliographystyle{plain} 

\newpage
\appendix
\onecolumn
\section{Omitted Pseudocode}
\label{sec:relucode}

In this section, we provide pseudocode for two procedures that were omitted in the main paper due to space limitations. Specifically, we describe the Singular Value Bounding (SVB, \cref{algo:SVB}) and the Block Coordinate Descent algorithm for ReLU activation (\cref{algo:bcdReLU}).

\begin{algorithm}[h]
    \caption{Singular Value Bounding (SVB)}
    \label{algo:SVB}
    \SetAlgoLined
    \SetAlgoNoEnd
    \SetKwInOut{Input}{Input}
    \SetKwInOut{Output}{Output}
    \Input{$W_j$: weight matrix, $(s_1, s_2)$: lower and upper bounds on singular values}
    \Output{Regularized matrix with bounded singular values}
    
    Compute the singular value decomposition: $(U, \Sigma, V) \gets \text{SVD}(W_j)$\;
    \ForEach{singular value $s$ in the diagonal of $\Sigma$}{
        $s \gets \max\{s_1, \min\{s_2, s\}\}$ \tcp*{Clip $s$ to the interval $[s_1, s_2]$}
    }
    \Return $U \Sigma V^\top$
\end{algorithm}

\begin{algorithm}[h]
    \caption{Block Coordinate Descent for ReLU Activation}
    \label{algo:bcdReLU}
    \SetAlgoLined
    \SetAlgoNoEnd
    \SetKwInOut{Input}{Input}
    \SetKwInOut{Initialization}{Initialization}
    \Input{$K$: outer iterations, $K_{V}$: inner iterations for $V_{j,i}$, $K_{W}$: inner iterations for $W_1$, \\
    $\eta_V$: step size for $V_{j,i}$, $\eta_W^{(1)}$, $\eta_W^{(2)}$: step sizes for weight updates}
    \Initialization{
    $(W_1)_{ab} \overset{\text{i.i.d.}}{\sim} \mathcal{N}(0, d_{\mathrm{in}}^{-1})$, \quad
    $(W_j)_{ab} \overset{\text{i.i.d.}}{\sim} \mathcal{N}(0, r^{-1})$ for $j = 2, \dots, L$. \\
    Apply singular value bounding to $W_j$ for $j = 2, \dots, L$ (see \cref{algo:SVB}). \\
    Set $V_{0,i} \gets x_i$, \\
    Set $V_{1,i} \gets \sigma(W_1 V_{0,i})$.\\ Set $V_{j,i} \gets \sigma(W_j V_{j-1,i})+V_{j-1,i}$ for $j = 2, \dots, L-1$.}
    \For{$k \gets 1$ \KwTo $K$}{
            \For{$i \gets 1$ \KwTo $n$}{
            $V_{L-1,i} \gets V_{L-1,i} - \eta_V \nabla_{V_{L-1,i}} \|W_L V_{L-1,i} - y_i\|^2$\;
            $V_{L-1,i} \gets (V_{L-1,i})^+$\;
        }

        \For{$j \gets L-1$ \KwTo $2$}{
            $W_j \gets W_j - \gamma \eta_W^{(1)} \nabla_{W_j} \sum_{i=1}^n \|\sigma(W_j V_{j-1,i}) + V_{j-1,i} - V_{j,i}\|^2$\;
            \For{$i \gets 1$ \KwTo $n$}{
                \For{$k_{\text{in}} \gets 1$ \KwTo $K_V$}{
                    $V_{j-1,i} \gets V_{j-1,i} - \gamma \eta_V \nabla_{V_{j-1,i}} \|\sigma(W_j V_{j-1,i}) + V_{j-1,i} - V_{j,i}\|^2$\;
                    $V_{j-1,i} \gets (V_{j-1,i})^+$\; \label{line:nonnegativeprojection}
                }
            }
        }
        \For{$k_{\text{in}} \gets 1$ \KwTo $K_W$}{
            $W_1 \gets W_1 - \gamma \eta_W^{(2)} \nabla_{W_1} \sum_{i=1}^n \|W_1 V_{0,i} - V_{1,i}\|^2$\; \label{line:reluW1update}
        }
    }
\end{algorithm}

\section{Discussion of Extension}
\label{sec:extension}

As mentioned in \cref{rema:extension}, our convergence analysis is based on a simple variant of block coordinate descent (BCD), where gradient descent is applied to minimize the standard squared loss.
In this section, we discuss possible extensions of the proposed algorithms (\cref{algo:bcd,algo:bcdReLU}) to broader settings.

\paragraph{General Loss Functions.} 
A natural and practically relevant extension is to replace the squared loss with a general loss function \( \ell(\cdot, \cdot) \). 
This modification allows the framework to go beyond regression and encompass classification and other supervised learning tasks. 
Under this extension, the objective becomes:
\begin{align}
\label{eq:general_loss_extension}
\min_{\mathbf{W}, \mathbf{V}}~F(\mathbf{W}, \mathbf{V}) 
\coloneqq \sum_{i=1}^n \left[ \ell(W_L V_{L-1,i}, y_i) + \gamma \sum_{j=1}^{L-1} \|\sigma(W_j V_{j-1,i}) - V_{j,i}\|^2 \right],
\end{align}
where only the output-layer loss differs from the original formulation in \eqref{prob:BCDoutput1dim}.

The hidden layer loss term remains unchanged, so the analysis in \cref{theo:bcd} still applies to those layers. 
Therefore, to establish convergence in this generalized setting, it suffices to prove that the output layer subproblem involving \( W_L \) and \( V_{L-1} \) converges globally under the new loss \( \ell \).

For example, if \( \ell(\cdot, \cdot) \) is strongly convex (e.g., logistic or cross-entropy loss), standard results in convex optimization can be used to ensure convergence. A typical case is the cross-entropy loss for multi-class classification:
\begin{align}
\ell(W_L V_{L-1,i}, y_i) 
= -\sum_{c=1}^{d_{\mathrm{out}}} y_{ic} \log 
\left( 
\frac{\exp((W_L V_{L-1,i})_c)}{
\sum_{c'=1}^{d_{\mathrm{out}}} \exp((W_L V_{L-1,i})_{c'})}
\right),
\end{align}
which is widely used for \( d_{\mathrm{out}} \)-class classification tasks.

Thus, although our main analysis focuses on regression using the squared loss, the BCD framework and associated convergence guarantees can be extended to classification problems and other supervised learning settings by appropriately modifying the output-layer loss.

\paragraph{Different Activation Functions Across Layers.} 
While our analysis assumes that all layers share the same activation function \( \sigma \), it is straightforward to extend the results to the case where each layer uses a distinct activation \( \sigma_j \), provided that each \( \sigma_j \) satisfies \cref{assu:activation}. 
In this case, the convergence proof in \cref{theo:bcd} remains valid by replacing \( \sigma \) with \( \sigma_j \) in the convergence argument corresponding to the \( j \)-th layer.

\paragraph{Alternative Initialization Schemes.} 
In \cref{algo:bcd}, we initialize the weights \( W_j \) using Gaussian distributions and apply singular value bounding (SVB), while the auxiliary variables \( V_{j,i} \) are initialized exactly as \( V_{j,i} = \sigma(W_{j-1} V_{j-1,i}) \). 
However, global convergence only requires that the regularity condition from \cref{lemm:Wjregularity} be maintained throughout training. 
Hence, the initialization scheme is flexible. In particular, Gaussian initialization is not necessary—alternative methods such as Xavier initialization, which uses a uniform distribution with appropriately scaled bounds, can also be applied as long as they yield well-conditioned weight matrices.

\paragraph{Activations Violating \cref{assu:activation}.} 
We now consider the use of activation functions that do not satisfy \cref{assu:activation}, including those beyond ReLU. 
Our analysis heavily relies on the monotonicity of the activation function to ensure that the optimization landscape avoids undesirable local minima caused by vanishing gradients (e.g., points where \( \sigma' = 0 \)). 
Without monotonicity, there is no guarantee that gradient-based updates will escape such critical points, which may result in failure to reach the global minimum.

That said, some commonly used monotonic activations that violate \cref{assu:activation}, such as sigmoid and tanh, are still of interest. 
The key challenge in analyzing these functions lies in their bounded output ranges:
\begin{itemize}
    \item Sigmoid: \( \sigma(x) = \frac{1}{1 + \exp(-x)} \in (0, 1) \),
    \item Tanh: \( \sigma(x) = \frac{\exp(x) - \exp(-x)}{\exp(x) + \exp(-x)} \in (-1, 1) \).
\end{itemize}
In these cases, it becomes essential to ensure that the auxiliary variables \( V_{j,i} \) remain within the output range of the corresponding activation function. 
This is analogous to the ReLU case, where we enforce non-negativity through projection. As discussed in \cref{sec:BCDReLU}, we addressed this challenge for ReLU using skip connections. 
Similar techniques—such as range-aware projections or bounded initialization—may be required to extend BCD to these bounded activations.

\paragraph{Training Loss with Regularization Terms.} 
A line of work on BCD methods considers regularized training objectives, where the loss function includes additional penalty terms. In this setting, the objective takes the form:
\begin{align}
\label{prob:regularizationBCDoutput1dim}
\min_{\mathbf{W}, \mathbf{V}}~F(\mathbf{W}, \mathbf{V}) 
\coloneqq \sum_{i=1}^n \Big[ &(W_L V_{L-1,i} - y_i)^2 + r_W(W_L)\\
&+ \gamma \sum_{j=1}^{L-1} \left( \|\sigma(W_j V_{j-1,i}) - V_{j,i}\|^2 + r_W(W_j) + r_V(V_j) \right) \Big],
\end{align}
where \( r_W \) and \( r_V \) denote regularization terms for weights and hidden representations, respectively.

In this setting, additional gradient terms appear in the update steps for \( W_j \) and \( V_{j,i} \), corresponding to the gradients of \( r_W \) and \( r_V \). 
While the inclusion of regularization complicates the convergence analysis, global convergence can still be established under certain conditions—specifically, when the regularizers are strongly convex. 
A common example is Tikhonov (or \(\ell_2\)) regularization.

However, incorporating regularization introduces a trade-off between optimization and generalization. 
In particular, to derive a generalization error bound analogous to \cref{theo:generalizationerrorbound}, it is necessary to carefully analyze the gap in training loss introduced by the regularization terms. 
This involves quantifying how regularization affects the empirical risk and bounding its impact on the generalization gap.

\section{Extension to Multi-Dimensional Output}
\label{sec:multidim}

We now consider the case of multi-dimensional outputs, where the loss function is given by:
\begin{align}
    \min_{\mathbf{W}, \mathbf{V}}~F(\mathbf{W}, \mathbf{V}) 
    \coloneqq \sum_{i=1}^n \left[
        \|W_L V_{L-1,i} - y_i\|^2 + \gamma \sum_{j=1}^{L-1} \|\sigma(W_j V_{j-1,i}) - V_{j,i}\|^2
    \right],
\end{align}
with \( y_i \in \mathbb{R}^{d_{\mathrm{out}}} \) and \( d_{\mathrm{out}} > 1 \).

Compared to the scalar-output setting, the main challenge lies in analyzing the convergence of the output-layer parameters \( W_L \) and \( V_{L-1,i} \).
When \( \mathrm{rank}(W_L) \ge d_{\mathrm{out}} \), the linear system
\begin{align}
    \label{eq:multioutputloss}
    W_L V_{L-1,i} = y_i
\end{align}
has solutions, and the convergence analysis follows similarly to \cref{theo:bcd} using standard gradient descent arguments.

However, when \( d_{\mathrm{out}} > \mathrm{rank}(W_L) \), the system \eqref{eq:multioutputloss} may not admit a solution, making global convergence unattainable without additional assumptions. 
To address this, we introduce the following low-rank structure assumption on the labels:

\begin{assu}[Low-Rank Label Representation]
\label{assu:truthsparsity}
There exists an integer \( r < d_{\mathrm{out}} \) and a matrix \( U_1 \in \mathbb{R}^{d_{\mathrm{out}} \times r} \) such that for all \( i \in \{1, \dots, n\} \), the label satisfies \( y_i = U_1 z_i \) for some \( z_i \in \mathbb{R}^r \).
\end{assu}

Under \cref{assu:truthsparsity}, the system \eqref{eq:multioutputloss} has a solution—for example, choosing \( W_L = U_1 \) and \( V_{L-1,i} = z_i \). 
However, the question remains whether gradient descent can find such a solution in practice.
To explore this, consider the gradient descent update for \( W_L \), as in line~\ref{line:WLupdate} of \cref{algo:bcd}. 
For general \( d_{\mathrm{out}} \), the update can be written as:
\begin{align}
    W_L^{(k)} = W_L^{(k-1)} \left( I - \eta_W \sum_{i=1}^n V_{L-1,i} V_{L-1,i}^\top \right)
    + \eta_W U_1 \sum_{i=1}^n z_i V_{L-1,i}^\top.
\end{align}

Here, the first term implies that with a sufficiently small step size \( \eta_W \), the norm of \( W_L \) decays exponentially in directions orthogonal to the span of \( \{V_{L-1,i}\} \). Meanwhile, the second term injects components aligned with \( U_1 \). In particular, when the matrix \( \sum_{i=1}^n z_i V_{L-1,i}^\top \in \mathbb{R}^{r \times r} \) is full rank, the expression
\[
W_L = \eta_W U_1 \sum_{i=1}^n z_i V_{L-1,i}^\top
\]
may serve as a solution to \eqref{eq:multioutputloss} if an appropriate inverse exists.

While this offers intuitive insight, rigorous convergence guarantees in the multi-output case are non-trivial. 
Fortunately, a recent result by \cite{ye2021global} addresses this issue:

\begin{theo}[Theorem 1.1 in \cite{ye2021global}]
\label{theo:yemfconvergence}
Suppose \( Y = [y_1, \dots, y_n] \in \mathbb{R}^{d_{\mathrm{out}} \times n} \) satisfies \cref{assu:truthsparsity}. 
Let \( s_1 \) and \( s_r \) denote the smallest and largest singular values of \( Y \), respectively. 
Assume all entries of \( W_L \) and \( V_{L-1,i} \) are initialized independently from \( \mathcal{N}(0, \delta^2) \), where
\[
\delta = \tilde{O}\left( \frac{s_r}{\sqrt{r^3 s_1} (n + d_{\mathrm{out}})} \right).
\]
Then, using step size \( \eta = O\left( \frac{s_r \delta^2}{r s_1} \right) \), gradient descent satisfies
$
\sum_{i=1}^n \|W_L V_{L-1,i} - y_i\|^2 \le \epsilon
$
after
\[
O\left( \frac{1}{\eta s_1} \log\left( \frac{r s_r}{\epsilon} \right) + \frac{1}{\eta s_r} \log\left( \frac{s_r}{\epsilon} \right) \right)
\]
iterations.
\end{theo}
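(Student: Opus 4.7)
Since the statement is cited verbatim as Theorem 1.1 of \cite{ye2021global}, the proposal is to follow the matrix-factorization strategy developed there and outline how it applies here. First I would recast the objective in matrix form: setting $W=W_L\in\mathbb{R}^{d_{\mathrm{out}}\times r}$ and $V=[V_{L-1,1},\dots,V_{L-1,n}]\in\mathbb{R}^{r\times n}$, the loss becomes $\tfrac12\|WV-Y\|_F^2$. By \cref{assu:truthsparsity}, $\mathrm{rank}(Y)\le r$, so a global minimizer with zero loss exists; the only question is whether gradient descent from a small Gaussian start actually finds one. The gradient descent iterates satisfy
\[
W^{(k+1)} = W^{(k)} - \eta (W^{(k)}V^{(k)} - Y)(V^{(k)})^\top, \qquad V^{(k+1)} = V^{(k)} - \eta (W^{(k)})^\top(W^{(k)}V^{(k)} - Y).
\]
The plan is to split the trajectory into a \emph{warm-up phase} in which the iterates grow from scale $\delta$ to constant scale, and a \emph{local phase} in which they converge linearly to a minimizer.

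During warm-up, $\|W^{(k)}V^{(k)}\|_F = O(r\delta^2)$ is negligible compared to $\|Y\|\asymp s_1$, so to leading order the updates linearize to the coupled power iteration
\[
W^{(k+1)} \approx W^{(k)} + \eta Y(V^{(k)})^\top, \qquad V^{(k+1)} \approx V^{(k)} + \eta (W^{(k)})^\top Y.
\]
Projecting onto the top-$r$ left/right singular subspaces of $Y$, signal directions are amplified at rate $1+\eta s_i$ where $s_i$ is the $i$-th singular value of $Y$, while components orthogonal to the column and row spaces of $Y$ remain pinned near their $O(\delta)$ initial magnitude. The tight choice $\delta=\tilde O\bigl(s_r/(\sqrt{r^3s_1}(n+d_{\mathrm{out}}))\bigr)$ is calibrated so that (i) the nonlinear correction $W^{(k)}V^{(k)}(V^{(k)})^\top$ stays subleading throughout, and (ii) by the time the smallest signal direction reaches magnitude $\Theta(\sqrt{s_r})$, the largest signal direction has not overshot $O(\sqrt{s_1})$. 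This phase ends in $\tilde O(1/(\eta s_1))$ iterations and produces the first term in the stated iteration count.

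Once warm-up terminates, $W^\top W$ and $VV^\top$ are approximately the symmetric square roots of $YY^\top$ and $Y^\top Y$ on the signal subspace, the iterates are approximately balanced ($W^{(k)\top}W^{(k)} \approx V^{(k)}(V^{(k)})^\top$), and the residual $WV-Y$ is already small. In this region a standard Polyak--{\L}ojasiewicz-type estimate with PL constant $\Omega(s_r)$ and smoothness $O(s_1)$ yields linear convergence at rate $1-\Omega(\eta s_r)$, giving the second term $\tfrac{1}{\eta s_r}\log(s_r/\epsilon)$. The main obstacle — and the principal technical content of \cite{ye2021global} — is the warm-up analysis: one must simultaneously track the singular-value spectra of $W^{(k)}$ and $V^{(k)}$, prevent off-signal directions from being amplified through their coupling with signal directions, and preserve approximate balance along the way. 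The low-rank structure from \cref{assu:truthsparsity} together with the very small scale $\delta$ are exactly what make this two-stage decomposition go through, and any relaxation of either would require a substantially different argument.
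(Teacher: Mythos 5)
The paper offers no proof of this statement: it is imported verbatim as Theorem~1.1 of \cite{ye2021global} and used as a black box in the multi-output discussion of \cref{sec:multidim}, so there is no internal argument to compare yours against. That said, your two-phase outline (a warm-up phase in which the small-$\delta$ iterates behave like a coupled power iteration on $Y$ and align with its top singular subspaces, followed by a local phase with linear convergence at rate $1-\Omega(\eta s_r)$) does faithfully reflect the structure of the cited proof, and you correctly identify the warm-up analysis---controlling off-signal directions and approximate balance while the signal grows from scale $\delta$---as the place where all the technical work lives. Two caveats: first, your sketch is a roadmap, not a proof; everything you flag as "the principal technical content" is precisely what would need to be written out, so as a self-contained verification this remains incomplete (though no more incomplete than the paper itself, which simply cites the result). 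Second, a small imprecision: with the paper's convention that $s_1$ is the \emph{smallest} and $s_r$ the \emph{largest} singular value, at a balanced minimizer the $i$-th signal direction of $W$ and $V$ should reach magnitude $\Theta(\sqrt{s_i})$, not all directions reaching $\Theta(\sqrt{s_r})$; the exit criterion for the warm-up phase is governed by the slowest-growing direction, which is why the first term in the iteration count carries $1/(\eta s_1)$. If you intend this to stand in the paper, the honest options are either to keep it as a citation (as the authors do) or to actually carry out the warm-up estimates rather than gesture at them.
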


To apply \cref{theo:yemfconvergence} to our BCD setting, two modifications are required relative to the proof of \cref{theo:bcd}:
\begin{enumerate}
    \item Replace the convergence argument for \( W_L \) and \( V_{L-1,i} \) with \cref{theo:yemfconvergence}, yielding the iteration bound from the theorem.
    \item Adjust the initialization scheme: \cref{theo:yemfconvergence} assumes Gaussian initialization for both \( W_L \) and \( V_{L-1,i} \), which differs from the exact-layer initialization \( V_{j,i} = \sigma(W_{j-1} V_{j-1,i}) \) used in \cref{theo:bcd}.
\end{enumerate}

As discussed in \cref{sec:extension}, the exact-layer initialization mainly serves to ensure a small initial objective value, and our analysis can be extended to other initialization schemes. 
Therefore, by adopting Gaussian initialization and integrating \cref{theo:yemfconvergence}, the convergence guarantees of BCD can be extended to multi-output settings.

\section{Proof of \texorpdfstring{\cref{theo:bcd}}{}} \label{sec:BCDproof}
In this section, we provide the proof to \cref{theo:bcd}.
The key notion is the block-wise analysis.
First, we provide the preliminary lemmas for the proof.
After that, we prepare the block-wise analysis and combine them.

Throughout this section, we suppose that the conditions in \cref{theo:bcd} are satisfied.
\subsection{Preliminary Results}
The following lemma immediately follows from the smoothness of the activation.
\begin{lemm}
    \label{lemm:lipcontinuity}
    Let $d\ge 1$ an integer. For any $x_1$, $x_2\in\setR[d]$, it holds that $\norm*{\sigma(x_1)-\sigma(x_2)}^2\le \ell^2\|x_1-x_2\|^2$. 
\end{lemm}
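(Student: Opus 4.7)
The plan is to reduce the vector-norm inequality to the scalar Lipschitz property of $\sigma$ guaranteed by \cref{assu:activation}, by working coordinate-wise. Since $\sigma$ is applied element-wise in our convention, the $k$-th coordinate of $\sigma(x_1)-\sigma(x_2)$ equals $\sigma((x_1)_k)-\sigma((x_2)_k)$, so the squared Euclidean norm on the left-hand side decomposes as a sum over coordinates.

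First, I would expand
\begin{align*}
\norm{\sigma(x_1)-\sigma(x_2)}^2 = \sum_{k=1}^d \bigl(\sigma((x_1)_k)-\sigma((x_2)_k)\bigr)^2.
\end{align*}
Next, I would apply the scalar $\ell$-Lipschitz bound $|\sigma(u_1)-\sigma(u_2)|\le \ell|u_1-u_2|$ from \cref{assu:activation} to each coordinate, obtaining $(\sigma((x_1)_k)-\sigma((x_2)_k))^2 \le \ell^2 ((x_1)_k-(x_2)_k)^2$. Summing these scalar inequalities over $k \in \{1,\dots,d\}$ and recognizing the right-hand side as $\ell^2\|x_1-x_2\|^2$ yields the claim.

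There is essentially no obstacle here: the lemma is a direct coordinate-wise lift of the scalar Lipschitz assumption, which is precisely why the authors describe it as immediate. The only minor care needed is to note that the element-wise convention for $\sigma$ lets us permute the norm with the coordinate decomposition; no smoothness beyond Lipschitz continuity (and no monotonicity from \cref{assu:activation}) is invoked.
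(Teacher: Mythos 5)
Your coordinate-wise argument is correct and is exactly the "immediate" argument the paper has in mind (the paper gives no explicit proof, simply noting the lemma follows from the Lipschitz property in \cref{assu:activation}). Expanding the squared norm over coordinates, applying the scalar bound $|\sigma(u_1)-\sigma(u_2)|\le \ell|u_1-u_2|$ entrywise, and summing is all that is needed.
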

Next, by utilizing \cref{assu:activation}, we derive the following lemma.
\begin{lemm}
    \label{lemm:activationinterpolate}
    For activation function satisfying \cref{assu:activation}, for any $x$, $y\in\setR$, there exists $\xi$ such that $\alpha\le\xi\le\ell$ and $\sigma(x+y) = \sigma(x)+\xi y$ hold.
\end{lemm}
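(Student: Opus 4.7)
The plan is to simply define $\xi$ as the appropriate finite difference quotient and verify its range using the two one-sided inequalities contained in \cref{assu:activation}. For $y \neq 0$, set
\[
\xi \coloneqq \frac{\sigma(x+y) - \sigma(x)}{y},
\]
so that the identity $\sigma(x+y) = \sigma(x) + \xi y$ holds by construction. The only thing to check is $\alpha \le \xi \le \ell$. For $y = 0$ the identity holds for any $\xi$, so we may pick $\xi = \alpha$ and the case is trivial.

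First I would handle $y > 0$. The strict monotonicity condition in \cref{assu:activation} (in the form given by the footnote, $\sigma(u_1) - \sigma(u_2) \ge \alpha(u_1 - u_2)$ whenever $u_1 \ge u_2$) applied with $u_1 = x + y$ and $u_2 = x$ yields $\sigma(x+y) - \sigma(x) \ge \alpha y$. The Lipschitz bound gives $\sigma(x+y) - \sigma(x) \le |\sigma(x+y) - \sigma(x)| \le \ell y$. Dividing through by $y > 0$ gives $\alpha \le \xi \le \ell$.

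Next I would handle $y < 0$ symmetrically. Applying the monotonicity inequality with $u_1 = x$ and $u_2 = x+y$ gives $\sigma(x) - \sigma(x+y) \ge -\alpha y$, i.e., $\sigma(x+y) - \sigma(x) \le \alpha y$. The Lipschitz bound gives $\sigma(x+y) - \sigma(x) \ge -\ell|y| = \ell y$. Dividing by the negative number $y$ flips both inequalities, again producing $\alpha \le \xi \le \ell$.

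There is no real obstacle here beyond bookkeeping the sign of $y$; the lemma is essentially a restatement of \cref{assu:activation} phrased as an exact interpolation identity rather than a pair of inequalities. If one prefers, in the differentiable case a one-line argument via the mean value theorem gives $\xi = \sigma'(\zeta)$ for some $\zeta$ between $x$ and $x+y$, and the bound $\alpha \le \sigma'(\zeta) \le \ell$ is immediate from \cref{assu:activation}; the case-split above is the analogue that covers the non-differentiable case treated by the footnote.
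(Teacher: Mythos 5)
Your proof is correct and follows essentially the same route as the paper: define $\xi$ as the difference quotient, bound it below via the monotonicity condition of \cref{assu:activation} and above via the Lipschitz condition, with a case split on the sign of $y$. The only cosmetic difference is that the paper handles $y<0$ by reducing to the $y>0$ case via the substitution $(x,y)\mapsto(x+y,-y)$, whereas you carry out the sign bookkeeping directly; both are equivalent.
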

\begin{proof}
    We first consider the case $y> 0$. Then, we have
    \begin{align}
        \sigma(x+y)-\sigma(x) = \int_{0}^y\sigma'(x+t)\mathrm{d}t\ge \alpha y.
    \end{align}
    The Lipschitz continuity of $\sigma$ gives $\sigma(x+y)\le \sigma(x)+\ell y$.
    Thus we get
    \begin{align}
        \alpha \le \xi \coloneqq \frac{\sigma(x+y)-\sigma(x)}{y}\le \ell,
    \end{align}
    which gives the conclusion.

    The case $y<0$ can be proven by substituting $x$ and $y$ in above discussion by $x+y$ and $-y$.

    In the case $y=0$ we can take arbitrary $\xi$ with $\alpha\le\xi\le\ell$ to satisfy the assertion.
\end{proof}

This lemma gives the following proposition, which we utilize throughout the convergence analysis.

\begin{prop}
    \label{prop:activationinterpolatevector}
    For activation functions satisfying \cref{assu:activation} and an integer $d>1$, for any $x$, $y\in\setR[d]$, there exists a diagonal matrix $\Xi$ such that each diagonal entry $\Xi_{jj}$ of $A$ satisfies $\alpha<\Xi_{jj}<\ell$ and $\sigma(x+y)=\sigma(x)+\Xi y$.
\end{prop}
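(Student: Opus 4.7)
The proposition is an essentially immediate extension of the scalar result \cref{lemm:activationinterpolate} to vectors, exploiting the fact that $\sigma$ acts element-wise. So my plan is to apply the scalar lemma coordinate by coordinate and assemble the resulting scalars into a diagonal matrix.

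\textbf{Step 1: Coordinate-wise application of \cref{lemm:activationinterpolate}.} Fix $x, y \in \setR[d]$ and write their entries as $x = (x_1, \dots, x_d)^\top$ and $y = (y_1, \dots, y_d)^\top$. For each $j \in \{1, \dots, d\}$, apply \cref{lemm:activationinterpolate} to the scalars $x_j$ and $y_j$. This yields a scalar $\xi_j$ with $\alpha \le \xi_j \le \ell$ such that
\begin{align}
\sigma(x_j + y_j) = \sigma(x_j) + \xi_j\, y_j.
\end{align}

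\textbf{Step 2: Assemble the diagonal matrix.} Define $\Xi \coloneqq \mathrm{diag}(\xi_1, \dots, \xi_d) \in \setR[d\times d]$. By construction, each diagonal entry satisfies $\alpha \le \Xi_{jj} \le \ell$, matching the range claimed in the statement (the inequalities agree with those of \cref{lemm:activationinterpolate}, modulo the harmless strict/non-strict convention). Since $\sigma$ is applied element-wise on vectors, the $j$-th coordinate of $\sigma(x+y)$ equals $\sigma(x_j + y_j)$, and the $j$-th coordinate of $\sigma(x) + \Xi y$ equals $\sigma(x_j) + \xi_j y_j$. The identity from Step 1 then gives the coordinate-wise equality, which directly implies the vector identity $\sigma(x+y) = \sigma(x) + \Xi y$.

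\textbf{Main obstacle.} There is really no substantive obstacle: the proposition is a bookkeeping upgrade of the scalar lemma, and the only thing to check is that the diagonal structure of $\Xi$ is compatible with the element-wise action of $\sigma$, which it is by construction. The one cosmetic point worth flagging in the write-up is the apparent mismatch between the strict inequalities $\alpha < \Xi_{jj} < \ell$ stated in the proposition and the non-strict bounds $\alpha \le \xi \le \ell$ provided by \cref{lemm:activationinterpolate}; this is a minor inconsistency in notation and does not affect the argument, since only the non-strict bounds are used downstream.
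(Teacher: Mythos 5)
Your proposal is correct and matches the paper's own proof exactly: both apply \cref{lemm:activationinterpolate} coordinate-wise and collect the resulting scalars into $\Xi = \mathrm{diag}(\xi_1,\dots,\xi_d)$. Your remark about the strict versus non-strict inequality mismatch is a fair observation of a minor notational slip in the statement that the paper's proof also glosses over.
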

\begin{proof}
    Note that by \cref{lemm:activationinterpolate}, for each $j=1,\dots, d$, there exists a $\Xi_{jj}$ satisfying $\sigma(x+y)_j = \sigma(x)_j+\Xi_{jj}y_j$.
    Then, $\Xi =\mathrm{diag}(\Xi_{11},\dots,\Xi_{dd})$ satisfies the desired condition.
\end{proof}

Next, we prove that the singular values of $W_j$ $(j=2,\dots,L)$ are upper and lower bounded during the training.

\begin{lemm}[Regularity of weight matrix $W_j$ during training]
\label{lemm:Wjregularity}
For $j=2,\dots, L$, $\frac12\le\lambda_{\min}^{1/2}\paren*{W_jW_j^{T}}\le \lambda_{\max}^{1/2}\paren*{W_jW_j^\top}\le 2$ always holds during the training.
\end{lemm}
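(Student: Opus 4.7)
The plan is to prove the bound by induction on the iteration counter, combining the initialization guarantee from SVB with a per-iteration perturbation bound derived from the choice of step size $\eta_W^{(1)}$.

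For the base case, the initialization applies SVB with $(s_1,s_2)=(3/4,5/4)$, so every singular value of $W_j$ (for $j=2,\dots,L$) lies in $[3/4,5/4]$ immediately after line~3 of \cref{algo:bcd}. Equivalently, $\lambda_{\min}^{1/2}(W_jW_j^\top)\ge 3/4>1/2$ and $\lambda_{\max}^{1/2}(W_jW_j^\top)\le 5/4<2$.

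For the inductive step, I would track the cumulative perturbation $\Delta_j \coloneqq W_j - W_j^{(0)}$ across all outer iterations and use Weyl's inequality: if $\|\Delta_j\|_{op}\le 1/4$, then every singular value of $W_j$ stays within $1/4$ of the initial range, hence inside $[1/2,3/2]\subset[1/2,2]$. So it suffices to show that the total perturbation over $K$ outer iterations satisfies $\|\Delta_j\|_{op}\le 1/4$. Each update applied to $W_j$ (for $j\ge 2$) has the form
\begin{align*}
W_j \gets W_j-\gamma\eta_W^{(1)}\sum_{i=1}^n 2\bigl[\sigma'(W_jV_{j-1,i})\odot(\sigma(W_jV_{j-1,i})-V_{j,i})\bigr]V_{j-1,i}^\top.
\end{align*}
Using $|\sigma'|\le\ell$, the Lipschitz bound from \cref{lemm:lipcontinuity}, and an a priori bound on $\|V_{j-1,i}\|$ and on the hidden residual $\|\sigma(W_jV_{j-1,i})-V_{j,i}\|$ (both of which are controlled by the constant $C_V$ appearing in the statement, and which is itself established from the convergence analysis of the inner $V$-loop and the initialization $V_{j,i}=\sigma(W_jV_{j-1,i})$), the Frobenius norm of one gradient step is bounded by a constant multiple of $\gamma\eta_W^{(1)}\sqrt{r}\,C_V$. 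Summing over $K$ outer iterations and using the prescribed bound $\eta_W^{(1)}\le \frac{\eta_V^{-1}}{8\sqrt{r}C_VK}(\alpha/2)^L$ yields $\|\Delta_j\|_{op}\le \|\Delta_j\|_F\le 1/4$ for all $k\le K$, which closes the induction.

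The main obstacle is circularity: bounding the per-step gradient requires a priori control on $\|V_{j-1,i}\|$ and on the hidden residual, yet these are themselves maintained by the convergence argument that uses \cref{lemm:Wjregularity}. I would resolve this by a joint induction: at each outer iteration, I first use the inductive hypothesis on $W_j$ to invoke the $V$-updates' contraction (which relies on $\lambda_{\min}(W_jW_j^\top)\ge 1/4$) and derive the end-of-iteration bounds on $\|V_{j,i}\|$ and on the hidden residuals, and then propagate these bounds into the next $W_j$ update to conclude $\|\Delta_j\|_{op}$ remains under $1/4$. The factor $(\alpha/2)^L$ in the step-size bound is exactly what absorbs the worst-case geometric blow-up of the per-layer residual bounds through the network depth, so it is crucial that it appear in the step-size budget at every layer.
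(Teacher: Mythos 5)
Your proposal is correct and follows essentially the same route as the paper's proof: Weyl's inequality applied to the cumulative perturbation, a row-wise Frobenius bound on each gradient step using the $C_V$ bound on $\lambda_{\max}(V_{j-1}V_{j-1}^\top)$ and the geometric residual bound absorbed by the $(\alpha/2)^L$ factor in $\eta_W^{(1)}$, summed over the $K$ outer iterations. Your explicit joint induction to break the circular dependency between the $W_j$ regularity and the bounds on $\|V_{j-1,i}\|$ and the hidden residuals is a point the paper leaves implicit, and handling it the way you describe is the right resolution.
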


To obtain this lemma, we utilize the following fact:
\begin{lemm}[Weyl's inequality for singular values]
    \label{lemm:weylinequality}
    Let $A\in\setR[d_1\times d_2]$ be a real-valued matrix, then, for every matrix $\Delta \in\setR[d_1\times d_2]$, it holds that
    \begin{align}
        \underset{k}{\max}~\left|\sigma_k(A+\Delta)-\sigma_k(A)\right|\le \sigma_{\max}(\Delta),
    \end{align}
    where $\sigma_k(A)$ denotes the $k$-th largest singular value of $A$ and $\sigma_{\max}(A)$ denotes its maximum singular value.
\end{lemm}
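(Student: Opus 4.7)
The plan is to derive the statement as a consequence of the Courant--Fischer min-max characterization of singular values, which reduces the claim to an elementary triangle-inequality argument. First I would record the variational formula
\begin{align}
\sigma_k(M) \;=\; \max_{\substack{S \subseteq \setR[d_2] \\ \dim S = k}} \; \min_{\substack{x \in S \\ \|x\|=1}} \|Mx\|,
\end{align}
which follows by applying the standard Courant--Fischer theorem to the symmetric positive semidefinite matrix $M^\top M$ (whose eigenvalues are $\sigma_k(M)^2$) and taking square roots.

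Next, I would observe that for every unit vector $x$,
\begin{align}
\bigl|\|(A+\Delta)x\| - \|Ax\|\bigr| \;\le\; \|\Delta x\| \;\le\; \sigma_{\max}(\Delta),
\end{align}
using the reverse triangle inequality together with the definition of the operator norm.

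Then I would push this pointwise bound through the variational formula. Fix $k$ and let $S^\star$ be a $k$-dimensional subspace attaining the outer maximum for $\sigma_k(A)$; on $S^\star$, every unit vector satisfies $\|Ax\| \ge \sigma_k(A)$, so the pointwise bound gives $\|(A+\Delta)x\| \ge \sigma_k(A) - \sigma_{\max}(\Delta)$ for all unit $x \in S^\star$. Since $S^\star$ is itself an admissible $k$-dimensional subspace in the variational formula for $\sigma_k(A+\Delta)$, we obtain $\sigma_k(A+\Delta) \ge \sigma_k(A) - \sigma_{\max}(\Delta)$. Applying exactly the same argument with the roles of $A$ and $A+\Delta$ swapped, i.e.\ viewing $A = (A+\Delta) + (-\Delta)$ and noting $\sigma_{\max}(-\Delta) = \sigma_{\max}(\Delta)$, yields the reverse inequality $\sigma_k(A) \ge \sigma_k(A+\Delta) - \sigma_{\max}(\Delta)$. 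Combining these two bounds and then taking the maximum over $k$ gives the desired conclusion.

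No step should pose a genuine obstacle here, as this is a classical matrix-perturbation result. The only subtlety worth being careful about is orienting the variational formula correctly (max over $k$-dimensional subspaces, with an inner min) so that on the extremal subspace $S^\star$ one has a uniform lower bound $\|Ax\| \ge \sigma_k(A)$; this is precisely what lets the pointwise triangle-inequality bound propagate cleanly to the desired perturbation estimate.
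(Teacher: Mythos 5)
Your proof is correct. The paper itself does not prove this lemma --- it is invoked as a classical fact (Weyl's perturbation inequality for singular values) with no argument supplied --- so there is no in-paper proof to compare against. Your route via the Courant--Fischer characterization $\sigma_k(M) = \max_{\dim S = k} \min_{x \in S,\, \|x\|=1} \|Mx\|$, the reverse triangle inequality $\bigl|\|(A+\Delta)x\| - \|Ax\|\bigr| \le \sigma_{\max}(\Delta)$ on unit vectors, and the symmetric swap $A = (A+\Delta) + (-\Delta)$ is the standard textbook derivation and is sound; the extremal subspace $S^\star$ is attained (e.g.\ by the span of the top $k$ right singular vectors of $A$), so the one point you flag as a subtlety causes no difficulty.
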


\begin{proof}[Proof of \cref{lemm:Wjregularity}]
    By \cref{lemm:weylinequality}, it suffices to show that every row $w$ of $W_j$ satisfies $\norm*{\Delta w}\le \frac{1}{4\sqrt{r}}$, where $\Delta w$ denotes the difference between $w$ at the start and end of the training.
    Indeed, this implies 
    \begin{align}
        \sigma_{\max}\paren*{\Delta W}=\lambda^{\frac{1}{2}}_{\max}\paren*{\Delta W\Delta W^\top}\le \sqrt{\sum_{p=1}^r\lambda_{p}(\Delta W\Delta W^\top)}&\le \sqrt{\mathrm{Tr}(\Delta W\Delta W^\top)} \\
        &= \norm*{\Delta W}_F\le \frac{1}{4}.
    \end{align}
    Combining this bound with $\frac34\le\sigma_{\min}(W_j)\le \sigma_{\max}(W_j)\le\frac54$, which holds at the initialization, gives the conclusion.

    To this end, we prove $\|\Delta w\|\le \frac{1}{4\sqrt{r}}$.
    This follows from
    \begin{align}
        \eta^{(1)}_W\gamma\nabla_w \norm*{\sigma(wV)-V'}^2 &= 2\eta_W^{(1)}\gamma\cdot\norm*{\mathrm{diag}(\sigma'(wV))V^\top(\sigma(wV)-V')}\\
        &\le 2\eta_W^{(1)}\gamma\ell \lambda_{\max}^{1/2}(VV^\top)\cdot\norm*{\sigma(wV)-V'}\\
        &\le 2\eta_W^{(1)}\gamma\ell C_V\cdot \eta_V\left(\frac{2}{\alpha}\right)^L\le \frac{1}{4K\sqrt{r}},
    \end{align}
    where the second inequality follows from \cref{lemm:CVbound} and the last inequality follows from the definition of $\eta_W^{(1)}$. 
\end{proof}

\begin{lemm}
    \label{lemm:CVbound}
    Let $c_V\coloneqq 2~\underset{j}{\max}~\sum_{i=1}^n\norm*{V_{j,i}}^2$, where $V_j$s are the parameters at the initialization.
    Under the same settings as \cref{theo:bcd}, let $C_V= c_V + \mathcal{O}(\paren*{\gamma\eta_V\ell nKK_V}^2).$
    Then, $\lambda_{\max}(V_jV_j^\top)\le C_V$ holds for $j=1,\dots,L-1$ during the training.

\end{lemm}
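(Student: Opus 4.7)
}
The plan is to uniformly control $\sum_{i=1}^n\|V_{j,i}\|^2$ throughout training, and then exploit the inequality
\[
\lambda_{\max}(V_jV_j^\top)\;\le\;\mathrm{Tr}(V_jV_j^\top)\;=\;\sum_{i=1}^n\|V_{j,i}\|^2,
\]
viewing $V_j\in\setR[r\times n]$ as the matrix with columns $V_{j,i}$. Writing $V_{j,i}^{(0)}$ for the initialization and $\Delta_{j,i}$ for the cumulative perturbation at any time during training, I will use $\|V_{j,i}\|^2\le 2\|V_{j,i}^{(0)}\|^2+2\|\Delta_{j,i}\|^2$, so that summing over $i$ gives
\[
\sum_{i=1}^n\|V_{j,i}\|^2\;\le\;c_V+2\sum_{i=1}^n\|\Delta_{j,i}\|^2
\]
by the definition of $c_V$. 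It then suffices to bound the cumulative perturbation.

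First I would bound the size of a single gradient step on $V_{j,i}$. For $j<L-1$, the gradient $\nabla_{V_{j,i}}\|\sigma(W_{j+1}V_{j,i})-V_{j+1,i}\|^2$ equals $2W_{j+1}^\top\mathrm{diag}(\sigma'(W_{j+1}V_{j,i}))(\sigma(W_{j+1}V_{j,i})-V_{j+1,i})$, whose norm is at most $2\ell\,\|W_{j+1}\|_{op}\,\|\sigma(W_{j+1}V_{j,i})-V_{j+1,i}\|$. Using \cref{lemm:Wjregularity} to pin $\|W_{j+1}\|_{op}\le 2$, and using the fact (established in the convergence analysis of \cref{theo:bcd}) that the hidden residual stays $\mathcal{O}(1)$ since the total objective $F$ is non-increasing and starts at $F_0$, each step changes $V_{j,i}$ by at most $\mathcal{O}(\gamma\eta_V\ell)$ in norm. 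The case $j=L-1$ is analogous with step size $\eta_V$ and residual $R_i$ (decaying geometrically). Since each $V_{j,i}$ is updated at most $KK_V$ times across all outer/inner iterations, $\|\Delta_{j,i}\|=\mathcal{O}(\gamma\eta_V\ell K K_V)$; squaring and summing over $i=1,\dots,n$ yields the stated $\mathcal{O}((\gamma\eta_V\ell n K K_V)^2)$ correction to $c_V$.

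The main obstacle, and the only subtle point, is a circular dependence: my per-step bound invokes $\|W_{j+1}\|_{op}\le 2$ from \cref{lemm:Wjregularity}, whose proof in turn invokes $\lambda_{\max}(V_{j}V_j^\top)\le C_V$ from the present lemma. I would resolve this with a joint induction on the algorithm's iteration counter $t$. The induction hypothesis at step $t$ packages together (i) $\tfrac12\le\sigma_{\min}(W_j)\le\sigma_{\max}(W_j)\le 2$ for $j=2,\dots,L$ and (ii) $\sum_i\|V_{j,i}\|^2\le C_V$ for $j=1,\dots,L-1$. The cumulative-perturbation analysis above, together with the step-size restrictions in \cref{theo:bcd} (chosen precisely so that per-step changes to $W_j$ contribute at most $\mathcal{O}(1/(K\sqrt{r}))$ to $\|\Delta W_j\|_F$ and per-step changes to $V_{j,i}$ stay bounded), shows both invariants propagate from step $t$ to step $t+1$. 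Since both hold at initialization by construction (SVB clips the singular values of $W_j$ into $[3/4,5/4]$, and $c_V$ is exactly twice the initial squared norms of the $V_{j,i}$), the induction closes and delivers the claim for all times during training.
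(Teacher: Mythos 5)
Your proposal matches the paper's proof essentially step for step: bound $\lambda_{\max}(V_jV_j^\top)$ by the trace $\sum_{i=1}^n\|V_{j,i}\|^2$, bound each gradient step on $V_{j,i}$ by $\mathcal{O}(\gamma\eta_V\ell)$ times the hidden residual (using $\|W_{j+1}\|_{op}\le 2$ from \cref{lemm:Wjregularity} and the residual bound $C_K$ from \cref{eq:Vdeltabound}), accumulate over the $KK_V$ updates via the triangle inequality, and finish with $(a+b)^2\le 2a^2+2b^2$. The one place you go beyond the paper is in flagging the mutual dependence between this lemma and \cref{lemm:Wjregularity} and closing it by a joint induction on the iteration counter; the paper cites each lemma inside the other's proof without making that induction explicit, so your remark is a genuine tightening of the same argument rather than a different route.
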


\begin{proof}
    First, we have
    \begin{align}
        \lambda_{\max}(V_jV_j^\top)\le \sum_{j=1}^r\lambda_{j}(V_jV_j^\top)= \mathrm{tr}\paren*{V_jV_j^\top}=\mathrm{tr}\paren*{\sum_{i=1}^nV_{j,i}V_{j,i}^\top}
        &=\sum_{i=1}^n\mathrm{tr}\paren*{V_{j,i}V_{j,i}^\top}\\
        &=\sum_{i=1}^n\norm*{V_{j,i}}^2. \label{eq:lambdamaxbound}
    \end{align}
    This implies that we only need to evaluate the norm of $V_{j,i}$s during the training.
    Remind that the update of $V_j$ is given by 
    \begin{align}
        V_{j,i}\leftarrow V_{j,i} - 2\gamma\eta_VW_j^\top D\paren*{\sigma(W_jV_{j,i})-V_{j+1,i}},
    \end{align}
    where 
    $
    D=\mathrm{diag}(\sigma'(W_j V_{j,i})).
    $
    Let $\Delta V_{j,i}\coloneqq 2\gamma\eta_VW_j^\top D\paren*{\sigma(W_jV_{j,i})-V_{j+1,i}}$.
    Then, we have
    \begin{align}
        \norm*{\Delta V_{j,i}} &= 2\gamma\eta_V\norm*{W_j^\top D\paren*{\sigma(W_jV_{j,i})-V_{j+1,i}}}\\
        &\le 2\gamma\eta_V\cdot \norm*{W_j}_{op}\norm*{D}_{op}\norm*{\sigma(WV_{j,i})-V_{j+1,i}}\le 4\gamma\ell\eta_VC_K,
    \end{align}
    where in the second inequality, we use $\norm*{W_j}_{op}=\lambda_{\max}^{1/2}(W_jW_j^\top)\le 2$ from \cref{lemm:Wjregularity}, $\norm*{D}_{op}\le \ell$, and $\norm*{\sigma(WV_{j,i})-V_{j+1,i}}\le C_K$ from \cref{eq:Vdeltabound} (Note that the objective function $\norm*{\sigma(WV_{j,i})-V_{j+1,i}}^2$ is monotonically decreasing from \cref{lemm:Vconvergence}, \cref{eq:Vdeltabound} always holds). 
    Since the total number of updating $V_{j,i}$ is $K\cdot K_V$, by using the triangle inequality we have
    \begin{align}
        \norm*{V_{j,i}}\le \norm*{V_{j,i}^{init}}+K\cdot K_V \cdot 4\gamma\ell\eta_VC_K,
    \end{align}
    where $\norm*{V_{j,i}^{init}}$ is the initial value of $V_{j,i}$.
    
    Substituting this bound to \cref{eq:lambdamaxbound}, we obtain 
    \begin{align}
        \lambda_{\max}(V_jV_j^\top) &\le \sum_{i=1}^n\paren*{\norm*{V_{j,i}^{init}}+K\cdot K_V \cdot 4\gamma\ell\eta_VC_K}^2\\
        &\le \sum_{i=1}^n2\paren*{\norm*{V_{j,i}^{init}}^2+\paren*{K\cdot K_V \cdot 4\gamma\ell\eta_VC_K}^2}\\
        &\le c_V+\mathcal{O}(\paren*{\gamma\eta_V\ell nKK_V}^2),
    \end{align}
    where we use the inequality $(a+b)^2\le 2a^2+2b^2$ in the second inequality.
    Thus, we obtain the conclusion.
\end{proof}

\subsection{Analysis of Gradient Descent in a General Form}
\label{subsec:lossupdategeneral}
First, we introduce the key idea of analysis with general notations\footnote{Our analysis is similar to that in \cite{Yehudai2020-hb,frei2020agnostic}.}.
Let us consider the regression problem with an objective 
\begin{align}
    \label{eq:lossgeneral}
    F_{gen}(w)\coloneqq \sum_{a=1}^b\paren*{\sigma\paren*{w^\top x_a}-y_a}^2,
\end{align}
where $w\in\setR[d]$ is a trainable parameter.
Let $w'\coloneqq w-\eta\nabla_w F_{gen}(w)$, where $w'$ denotes the parameter obtained by a single update of gradient descent with a step-size $\eta>0$.
Denote $X\coloneqq (x_1,\dots x_b)^\top\in\setR[b\times d]$ and $Y\coloneqq (y_1,\dots,y_b)^\top \in\setR[b]$.
Then, $\sum_{a=1}^b\paren*{\sigma(w^\top x_a)-y_a}^2 = \norm*{\sigma(Xw)-Y}^2$ holds and a straightforward calculation shows $w'=w-2\eta X^\top D\paren*{\sigma(Xw)-Y}$, where $D=\mathrm{diag}\paren*{(\sigma'(w^\top x_1),\dots,\sigma'(w^\top x_b))}$.

Now, we assume that there exists a unique optimal solution $w^*$ satisfying $F_{gen}(w^*)=0$, i.e., $Y = \sigma(Xw^*)$.
Then, we have 
\begin{align}
    \norm*{w'-w^*}^2 & = \norm*{w-\eta\nabla_{w}F_{gen}(w) -w^*}^2\\
    &= \norm*{w-w^*}^2-2\eta\nabla_wF_{gen}(w)^\top(w-w^*)+\eta^2\norm*{\nabla_{w}F_{gen}(w)}^2
\end{align}
and 
\begin{align}
    \nabla_wF_{gen}(w)^\top(w-w^*) &= 2\paren*{\sigma(Xw)-\sigma(Xw^*)}^\top DX(w-w^*)\\
    & = 2\paren*{\Xi X(w-w^*)}^\top DX(w-w^*)\\
    & =  2(w-w^*)^\top X^\top\Xi DX(w-w^*)\\
    & \ge 2\lambda_{\min}\paren*{X^\top\Xi DX}\norm*{w-w^*}^2,
\end{align}
where $\Xi$ is a diagonal matrix determined by \cref{prop:activationinterpolatevector} and $\lambda_{\min}\paren*{X^\top\Xi DX}$ is the smallest eigenvalue of $X^\top \Xi DX$.

Moreover, we have the upper bound of the gradient as 
\begin{align}
    \norm*{\nabla_w F_{gen}(w)}^2 &=\norm*{2X^\top D\paren*{\sigma(Xw)-Y}}^2\le 4\lambda_{\max}(DX^\top XD)\norm*{\sigma(Xw)-Y}^2,
\end{align}
where $\lambda_{\max}(DX^\top XD)\ge 0$ is the largest eigenvalue of the matrix $DX^\top XD$.

\subsection{Block-wise Convergence Analysis}
According to the observation in \cref{subsec:lossupdategeneral}, we provide the block-wise convergence  analysis, that is, the convergence analysis of the $W_j$ and $V_j$ of the each layer.
\paragraph{Update of $V_{j,i}$ ($j=1,\dots,L-2$)}
    According to \cref{algo:bcd}, the update of $V_{j,i}$ ($j=1,\dots,L-1$) is written by
    \begin{align}
        \label{eq:updateV}
        V_{j,i}\leftarrow V_{j,i} - \gamma\eta_{V}\sum_{p=1}^r\nabla_{V_{j,i}}\paren*{\sigma\paren*{w_{j,p} V_{j,i}}-\paren*{V_{j+1,i}}_p}^2,
    \end{align}
    where $w_{j,p}$ denotes the $p$-th row of the weight matrix of the $j$-th layer $W_j$ and $(V_{j+1,i})_p$ denotes the $p$-th component of $V_{j+1,i}$.
    
    Despite the abuse of notation, we omit the layer index $j$ and the sample index $i$ for notational simplicity.
    We note that the analysis here can be independently applied to each layer and sample, as shown in the proof of the main theorem; hence, this abbreviation does not matter in the proof of \cref{theo:bcd}.
    Then, \cref{eq:updateV} can be rewritten by 
    \begin{align}
        \label{eq:updateVsimple}
        V \leftarrow V - \gamma\eta_{V}\sum_{p=1}^r\nabla_{V}\paren*{\sigma\paren*{w_pV}-V'_p}^2,
    \end{align}
    where we denote $V'_p\coloneqq (V_{j+1,i})_p$.

    Let $F_V(v)\coloneqq\sum_{p=1}^r\paren*{\sigma\paren*{w_pv}-V'_p}^2(=\left\|\sigma(Wv)-V'\right\|^2)$ and $V^{(0)}$ be the initial point of $V_j$ of the inner loop for each outer iteration (we also use abuse of notation here), and $V^{(k)}$ be the parameter obtained by $k$ iterations of the inner loop.

    Under these settings, we first show the existence of global minima of $F_V$ as follows:
    \begin{lemm}[Existence of $v^*$]
        \label{lemm:existenceglobalminima}
        Suppose that $\frac12 \le\sigma_{\min}(W)$ and $\sigma_{max}(W)\le 2$ hold. Let $\Delta v\coloneqq \sigma(WV^{(0)})-V'$. Then, there exists a unique $v$ satisfying $F_V(v^*)=0$ and
        \begin{align}
            \norm*{V^{(0)}-v^*}\le \frac{2}{\alpha}\norm*{\Delta v}.
        \end{align}
    \end{lemm}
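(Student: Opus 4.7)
The plan is to exploit the fact that under \cref{assu:activation} the activation $\sigma$ is a bijection with $\alpha^{-1}$-Lipschitz inverse, and that the singular value lower bound $\sigma_{\min}(W)\ge \tfrac12$ makes $W\in\mathbb{R}^{r\times r}$ invertible with $\|W^{-1}\|_{op}\le 2$. Together these reduce the nonlinear equation $\sigma(Wv)=V'$ to a single linear system, after which the required distance estimate drops out of one chain of norm inequalities.

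First I would establish the scalar fact that $\sigma:\mathbb{R}\to\mathbb{R}$ is bijective with $\alpha^{-1}$-Lipschitz inverse. The inequality $\sigma(x_1)-\sigma(x_2)\ge \alpha(x_1-x_2)$ for $x_1\ge x_2$ (from \cref{assu:activation} and its footnote), combined with $\sigma(0)=0$, forces $\sigma(x)\to\pm\infty$ as $x\to\pm\infty$ and strict monotonicity; the same inequality, rewritten with $y_i=\sigma(x_i)$, rearranges to $|\sigma^{-1}(y_1)-\sigma^{-1}(y_2)|\le \alpha^{-1}|y_1-y_2|$. Extending componentwise to $\mathbb{R}^r$ yields a globally defined $\sigma^{-1}$ that is $\alpha^{-1}$-Lipschitz in the Euclidean norm.

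Next, since $\sigma_{\min}(W)\ge \tfrac12$ implies $W$ is invertible with $\|W^{-1}\|_{op}\le 2$, the equation $\sigma(Wv)=V'$ is equivalent to the linear system $Wv=\sigma^{-1}(V')$, which admits the unique solution $v^* = W^{-1}\sigma^{-1}(V')$. This settles the existence and uniqueness of a zero of $F_V$. For the distance bound, I would use the identity
\begin{equation*}
V^{(0)}-v^* \;=\; W^{-1}\bigl(WV^{(0)}-\sigma^{-1}(V')\bigr) \;=\; W^{-1}\bigl(\sigma^{-1}(\sigma(WV^{(0)}))-\sigma^{-1}(V')\bigr),
\end{equation*}
and then chain together the operator-norm bound $\|W^{-1}\|_{op}\le 2$ with the $\alpha^{-1}$-Lipschitz property of $\sigma^{-1}$ applied to the two arguments $\sigma(WV^{(0)})$ and $V'$, whose difference is exactly $\Delta v$. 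This gives $\norm*{V^{(0)}-v^*}\le 2\alpha^{-1}\norm*{\Delta v}$, matching the claim.

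No real obstacle arises; the argument is essentially a single chain of inequalities once $\sigma^{-1}$ and $W^{-1}$ are in hand. The only subtle point is to justify the $\alpha^{-1}$-Lipschitz bound on $\sigma^{-1}$ without assuming differentiability of $\sigma$, but this is handled directly by the footnote of \cref{assu:activation} as described above. Note that the upper singular-value bound $\sigma_{\max}(W)\le 2$, although stated in the hypothesis, is not actually required for this lemma and will only enter subsequent analyses.
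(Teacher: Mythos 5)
Your proof is correct and is essentially the paper's own argument: the paper likewise sets $v^* = V^{(0)} + W^{-1}\bigl(\sigma^{-1}(V') - WV^{(0)}\bigr)$ (i.e., $v^* = W^{-1}\sigma^{-1}(V')$) and chains $\norm*{W^{-1}}_{op}\le 2$ with the $\alpha^{-1}$-Lipschitzness of $\sigma^{-1}$ to get the bound. If anything, your write-up is cleaner — you explicitly justify surjectivity of $\sigma$ and correctly observe that $\sigma_{\max}(W)\le 2$ is not needed here — whereas the paper's version reuses the symbol $\Delta v$ for two different quantities.
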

    \begin{proof}
         Let $\Delta v \coloneqq \sigma^{-1}(\sigma(WV^{(0})+\Delta v)-WV^{(0)}$. 
         Then, it follows that $v^*=V^{(0)} + W^{-1}\Delta v$ since
         \begin{align}
            \sigma\paren*{V^{(0)}+W^{-1}\Delta v} = \sigma\left(\sigma^{-1}\paren*{\sigma(WV^{(0)})+\Delta v}\right) = \sigma\paren*{WV^{(0)}}+\Delta v.
         \end{align}
         Now, $\sigma^{(-1)}(\cdot)$ is $\frac{1}{\alpha}$-Lipschitz and satisfies $\sigma(0)=0$. Then, we have $\|\Delta v\|\le \frac1\alpha\|\Delta v\|$ and consequently
         \begin{align}
             \norm*{V^{(0)}-v^*} = \norm*{W^{-1}\Delta v\|\le \|W^{-1}}_{op}\cdot \norm*{\Delta v}\le \frac{2}{\alpha}\norm*{\Delta v}.
         \end{align}
         This gives the assertion.
    \end{proof}
    
Next, by using the observation in \cref{subsec:lossupdategeneral}, we provide the convergence analysis to the update \cref{eq:updateVsimple}. 

\begin{lemm}[Convergence analysis of $V$]
    \label{lemm:Vconvergence}
    Under the same condition as \cref{theo:bcd}, it holds that 
    \begin{align}
        \norm*{\sigma(WV^{(k)})-V'}^2\le \frac{16\ell^2}{\alpha^2}\exp\paren*{-\frac{\alpha^2}{4}\gamma\eta_Vk}\norm*{\sigma\paren*{WV^{(0)}}-V'}^2.
    \end{align}
\end{lemm}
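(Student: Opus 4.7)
The plan is to analyze the inner loop in the distance-to-minimizer coordinate $\|V^{(k)}-v^*\|^2$, where $v^*$ is the unique solution of $\sigma(Wv^*)=V'$ guaranteed by \cref{lemm:existenceglobalminima}, prove a geometric contraction of this distance, and then sandwich the loss $\|\sigma(WV^{(k)})-V'\|^2$ between constant multiples of $\|V^{(k)}-v^*\|^2$ to recover the stated bound with prefactor $16\ell^2/\alpha^2$. The weight matrix $W=W_{j+1}$ is well-conditioned by \cref{lemm:Wjregularity}, and all diagonal matrices that appear ($\Xi$ from \cref{prop:activationinterpolatevector} and $D=\mathrm{diag}(\sigma'(WV^{(k)}))$) have entries pinched in $[\alpha,\ell]$, which is precisely what makes the framework of \cref{subsec:lossupdategeneral} directly applicable.

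The key linearization comes from \cref{prop:activationinterpolatevector}: at the iterate $V^{(k)}$ there exists a diagonal $\Xi_k$ with entries in $[\alpha,\ell]$ such that
\begin{align}
\sigma(WV^{(k)}) - V' = \sigma(WV^{(k)}) - \sigma(Wv^*) = \Xi_k W(V^{(k)}-v^*).
\end{align}
Plugging $X=W$, $Y=V'$ into the calculation in \cref{subsec:lossupdategeneral}, the cross term becomes $2(V^{(k)}-v^*)^\top W^\top \Xi_k D_k W(V^{(k)}-v^*)$. Since $\Xi_k$ and $D_k$ are both diagonal they commute, so $\Xi_k D_k\succeq \alpha^2 I$, and combined with $\sigma_{\min}(W)\ge 1/2$ from \cref{lemm:Wjregularity} this term is at least $(\alpha^2/2)\|V^{(k)}-v^*\|^2$. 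For the squared-gradient term, $\|\nabla F_V(V^{(k)})\|^2 \le 4\ell^2\lambda_{\max}(W^\top W)\|\sigma(WV^{(k)})-V'\|^2$, and using the linearization together with $\sigma_{\max}(W)\le 2$ yields $\|\nabla F_V(V^{(k)})\|^2 \le 64\ell^4\|V^{(k)}-v^*\|^2$. Assembled into the one-step identity this gives
\begin{align}
\|V^{(k+1)}-v^*\|^2 \le \big(1 - \gamma\eta_V\alpha^2 + 64\gamma^2\eta_V^2\ell^4\big)\|V^{(k)}-v^*\|^2.
\end{align}

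Under the step-size hypothesis $\eta_V \le \alpha/(16\gamma\ell^4)$ the quadratic remainder is absorbed into a fraction of the linear term, so the one-step factor is at most $1-\gamma\eta_V\alpha^2/4 \le \exp(-\gamma\eta_V\alpha^2/4)$ via $1-x\le e^{-x}$. Iterating $k$ times and invoking the initial-distance bound $\|V^{(0)}-v^*\|^2 \le (4/\alpha^2)\|\sigma(WV^{(0)})-V'\|^2$ from \cref{lemm:existenceglobalminima}, followed by the upper sandwich $\|\sigma(WV^{(k)})-V'\|^2 = \|\Xi_k W(V^{(k)}-v^*)\|^2 \le 4\ell^2\|V^{(k)}-v^*\|^2$, produces exactly
\begin{align}
\|\sigma(WV^{(k)})-V'\|^2 \le \frac{16\ell^2}{\alpha^2}\exp\!\big(-\gamma\eta_V\alpha^2 k/4\big)\,\|\sigma(WV^{(0)})-V'\|^2.
\end{align}

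The main obstacle is the step-size bookkeeping in the absorption of the quadratic remainder: one must verify carefully that under $\alpha<2$ the term $64\gamma^2\eta_V^2\ell^4$ is truly dominated by the linear term at the claimed rate $\alpha^2/4$, not merely at some smaller rate. If the naive constants above turn out slightly too loose, one can sharpen the argument by working directly in the residual coordinate $u_k = \sigma(WV^{(k)})-V'$ (using that both $\Xi_k$ and $D_k$ are diagonal allows tighter symmetric-part bounds on $\Xi_k WW^\top D_k$), or equivalently invoke the Polyak–Łojasiewicz inequality $\|\nabla F_V(v)\|^2 \ge \alpha^2 F_V(v)$ together with the descent lemma; either route reaches the same exponential rate stated in the lemma.
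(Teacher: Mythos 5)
Your proposal is correct and follows essentially the same route as the paper: the same one-step expansion of $\|V^{(k)}-v^*\|^2$ from the general framework of \cref{subsec:lossupdategeneral}, the same lower bound $\lambda_{\min}(W^\top\Xi DW)\ge\alpha^2\lambda_{\min}(W^\top W)\ge\alpha^2/4$ via \cref{prop:activationinterpolatevector} and \cref{lemm:Wjregularity}, the same absorption of the squared-gradient term under the step-size condition, and the same sandwich $\|V^{(0)}-v^*\|^2\le(4/\alpha^2)\|\sigma(WV^{(0)})-V'\|^2$ (from \cref{lemm:existenceglobalminima}) and $\|\sigma(WV^{(k)})-V'\|^2\le 4\ell^2\|V^{(k)}-v^*\|^2$ producing the prefactor $16\ell^2/\alpha^2$. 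The bookkeeping issue you flag is genuine but is present in the paper's own proof as well --- the paper bounds the squared gradient by $4\ell^4\|V^{(k)}-v^*\|^2$ where your more careful count gives $64\ell^4$, so the claimed rate $\alpha^2/4$ under the stated $\eta_V$ holds only up to such constant factors; this affects neither the structure of the argument nor the exponential rate.
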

\begin{proof}
By letting $a\rightarrow p$, $b\rightarrow r$, $x_a\rightarrow w_p$, $y_a\rightarrow V_p'$ in \cref{eq:lossgeneral}, 
we have
\begin{align}
    \label{eq:Vconvergenceupdate}
    &\norm*{V^{(k+1)}-v^*}^2\\
    &= \norm*{V^{(k)}-v^*}^2-2\gamma\eta_V\nabla_VF_V\paren*{V^{(k)}}^\top\paren*{V^{(k)}-V^*}+\gamma^2\eta_V^2\norm*{\nabla_{V}F_{V}\paren*{V^{(k)}}}^2.
\end{align}
The second term can be bounded by 
\begin{align}
    \nabla_vF_V\paren*{V^{(k)}}^\top \paren*{V^{(k)}-v^*}\ge 2\lambda_{\min}\paren*{W^\top \Xi D W}\norm*{V^{(k)}-v^*}^2,
\end{align}
where 
$D = \mathrm{diag}\paren*{\paren*{\sigma'(w_1 V^{(k)}),\dots,\sigma'(w_r V^{(k)}}}\in\setR[r\times r].$
Then, we have 
\begin{align}
    \lambda_{\min}\paren*{W^\top \Xi D W}\ge \alpha^2\cdot \lambda_{\min}(W^\top W) \ge \frac{\alpha^2}{4},
\end{align}
where we use \cref{lemm:Wjregularity} for the last inequality.

Moreover, the third term can be bounded by
\begin{align}
    \norm*{\nabla_V F_{V}\paren*{V^{(k)}}}^2 &\le 4\lambda_{\max}(DW^\top WD)\norm*{\sigma(Xw)-\sigma(Xw^*)}^2\\
    &\le 4\ell^2\lambda_{\max}(W^\top W)\norm*{\sigma(WV^{(k)})-\sigma(Wv^*)}^2\\
    &\le \ell^2\cdot\ell^2\norm*{W}_{op}^2\norm*{V^{(k)}-v^*}^2=4\ell^4\norm*{V^{(k)}-v^*}^2,
\end{align}
where we use \cref{lemm:Wjregularity} in the third and last inequalities and \cref{lemm:lipcontinuity} in the third inequality.

By substituting these bounds to \cref{eq:Vconvergenceupdate}, we obtain
\begin{align}
    &\norm*{V^{(k+1)}-v^*}^2\\
    &\le \norm*{V^{(k)}-v^*}^2 - \frac{\alpha^2}{2}\gamma\eta_V\norm*{V^{(k)}-v^*}^2+4\gamma^2\eta_V^2\ell^4\norm*{V^{(k)}-v^*}^2\\
    &\le \norm*{V^{(k)}-v^*}^2 -\gamma\eta_V\paren*{\frac{\alpha^2}{2}-4\gamma\eta_V\ell^4}\norm*{V^{(k)}-v^*}^2\\
    &\le \norm*{V^{(k)}-v^*}^2 -\frac{\alpha^2}{4}\gamma\eta_V\norm*{V^{(k)}-v^*}^2 = \paren*{1-\frac{\alpha^2}{4}\gamma\eta_V}\norm*{V^{(k)}-v^*}^2,
\end{align}
where we use $\eta_V\le \frac{\alpha^2}{16\gamma\ell^4}$ in the last inequality.
This implies 
\begin{align}
    \norm*{V^{(k)}-v^*}^2\le\paren*{1-\frac{\alpha^2}{4}\gamma\eta_V}^k\norm*{V^{(0)}-v^*}^2\le\exp\paren*{-\frac{\alpha^2}{4}\gamma\eta_Vk}\norm*{V^{(0)}-v^*}^2,
\end{align}
where we use $1-x\le \exp\paren*{-x}$ in the first inequality, and hence,
\begin{align}
    \norm*{\sigma(WV^{(k)})-V'}^2&\le \ell^2\norm*{W\paren*{V^{(k)}-v^*}}^2\\
    &\le 4\ell^2\norm*{V^{(k)}-v^*}^2\\
    &\le 4\ell^2\exp\paren*{-\frac{\alpha^2}{4}\gamma\eta_Vk}\norm*{V^{(0)}-v^*}^2\\
    &\le \frac{16\ell^2}{\alpha^2}\exp\paren*{-\frac{\alpha^2}{4}\gamma\eta_Vk}\norm*{\sigma\paren*{WV^{(0)}}-V'}^2,
\end{align}
which gives the conclusion.
\end{proof}
Finally, we provide a lemma evaluating distance to global minima based on the objective value:
\begin{lemm}
    \label{lemm:distancetoglobalminima}
    Suppose that $F_V(v)\le \epsilon$ holds. 
    Then, 
    $
        \norm*{v-v^*}\le \frac{2}{\alpha}\sqrt{\epsilon}.
    $
\end{lemm}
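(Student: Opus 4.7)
The plan is to exploit the fact that the loss $F_V(v) = \|\sigma(Wv) - V'\|^2$ can be translated, via the strict monotonicity assumption on $\sigma$ together with the well-conditioning of $W$, into a bound on $\|v - v^*\|$ directly. This essentially repackages the estimate already obtained in the proof of \cref{lemm:existenceglobalminima}.

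First, I would use the definition of $v^*$, namely $\sigma(Wv^*) = V'$, to rewrite the hypothesis as $\|\sigma(Wv) - \sigma(Wv^*)\|^2 \le \epsilon$. To transfer this to the pre-activations, I would appeal to \cref{prop:activationinterpolatevector} applied to the pair $Wv^*$ and $W(v - v^*)$: there exists a diagonal matrix $\Xi$ with entries $\Xi_{jj} \in [\alpha, \ell]$ such that
\begin{align}
\sigma(Wv) - \sigma(Wv^*) = \Xi W(v - v^*).
\end{align}
Taking norms and using $\Xi_{jj} \ge \alpha$ gives $\|\Xi W(v - v^*)\|^2 \ge \alpha^2 \|W(v - v^*)\|^2$, so
\begin{align}
\|W(v - v^*)\|^2 \le \frac{\epsilon}{\alpha^2}.
\end{align}

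Next, I would invoke \cref{lemm:Wjregularity}, which guarantees $\sigma_{\min}(W) \ge \tfrac{1}{2}$ throughout training. Hence $\|W(v - v^*)\| \ge \tfrac{1}{2}\|v - v^*\|$, and combining with the previous display yields
\begin{align}
\|v - v^*\| \le 2\|W(v - v^*)\| \le \frac{2\sqrt{\epsilon}}{\alpha},
\end{align}
which is the desired bound. There is no real obstacle here: the lemma is a one-step inversion argument, and all the heavy lifting (monotonicity, existence of a unique $v^*$, and control on the smallest singular value of $W$) has already been established in the preceding lemmas. The only minor care needed is to verify that \cref{prop:activationinterpolatevector} is indeed applicable to the pre-activations $Wv^*$ and $Wv$ rather than to arbitrary vectors, which is immediate since the proposition is stated for arbitrary pairs in $\mathbb{R}^r$.
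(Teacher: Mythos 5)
Your proof is correct and follows essentially the same route as the paper's: both arguments lower-bound $\|\sigma(Wv)-\sigma(Wv^*)\|$ by $\alpha\|W(v-v^*)\|$ (the paper does this directly from \cref{assu:activation}, you do it via \cref{prop:activationinterpolatevector}, which is the same fact) and then invoke $\sigma_{\min}(W)\ge\tfrac12$ from \cref{lemm:Wjregularity}. No substantive difference.
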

\begin{proof}
    Since
    \begin{align}
        \epsilon\ge F_V(v) = \norm*{\sigma(Wv)-\sigma(Wv^*)}^2\ge \alpha^2\norm*{Wv-Wv^*}^2\ge \frac{1}{4}\alpha^2\norm*{v-v^*}^2,
    \end{align}
    we obtain $\norm*{v-v^*}\le \frac{2}{\alpha}\sqrt{\epsilon}$.
\end{proof}
\paragraph{Update of $W_j$ ($j=2,\dots,L-1$)}
Let $w_{j,p}\in\setR[r]$ be the $p$-th row of the weight matrix $W_j$. Then, update of each $w_{j,p}$ is given by 
\begin{align}
    \label{update:W}
    w_{j,p}\leftarrow w_{j,p} - \gamma\eta_W^{(1)}\sum_{i=1}^n\nabla_{w_{j,p}}\paren*{\sigma\paren*{w_{j,p}V_{j,i}}-\paren*{V_{j+1,i}}_p}^2,
\end{align}
For notational simplicity, we omit the layer index $j$ and the node index $p$.
Namely, the update \cref{update:W} is simply rewritten by 
\begin{align}
    \label{update:Wsimple}
    w\leftarrow w - \gamma\eta_{W}^{(1)}\sum_{i=1}^n\nabla_{w}\paren*{\sigma\paren*{wV_{i}}-V'_i}^2,
\end{align}
where we denote $V_i\coloneqq V_{j,i}$ and $V'_i\coloneqq (V_{j+1,i})_p$.
Let $F_W(w)\coloneqq\sum_{i=1}^n(\sigma(wV_i)-V'_i)^2(=\norm*{\sigma(wV)-V'}^2)$ and $w^{(0)}$ be the initial point of $w_{j,p}$ of the inner loop for each outer iteration (we also use abuse of notation here), and $w^{(k)}$ be the parameter obtained by $k$ iterations of the inner loop.
Against to the argument of $F_V$ in the above paragraph, $F_W$ have not a solution $w^*$ satisfying $F_W(w^*)=0$ especially when $n>r$.
However, we can still ensure that the objective value remains small during the update of $W_j$.



\paragraph{Update of $W_1$} Let $w_{p}\in\setR[d_{in}]$ the $p$-th row of the weight matrix $W_1$. Then, the update of each $w_{p}$ is given by 
\begin{align}
    \label{update:W1}
    w_{p}\leftarrow w_{p} - \gamma\eta_W^{(2)}\sum_{i=1}^n\nabla_{w_{l}}\paren*{\sigma\paren*{w_{p}x_i}-\paren*{V_{1,i}}_p}^2.
\end{align}

Despite the abuse of notation, we omit the node index $p$ for notational simplicity.
Namely, the update \cref{update:W1} is simply rewritten by 
\begin{align}
    \label{update:wjsimple}
    w^{(k)}\leftarrow w^{(k-1)} - \gamma\eta_{W}^{(2)}\sum_{i=1}^n\nabla_{w}\paren*{\sigma\paren*{w^{(k-1)}x_i}-V_i}^2,
\end{align}
where we denote $V_i\coloneqq (V_{1,i})_l$.

Let $F_W(w)\coloneqq\sum_{i=1}^n(\sigma(wx_i)-V_i)^2(=\norm*{\sigma(wX)-V}^2)$ and $w^{(0)}$ be the initial point of $w_j$ of the inner loop for each outer iteration (we also use abuse of notation here), and $w^{(k)}$ be the parameter obtained by $k$ iterations of the inner loop.
\begin{lemm}[Existence of $W^*$]
    \label{lemm:Wexistenceglobalminima}
    Let $\Delta v\coloneqq \sigma(w^{(0)}x_i)-V_i$. Then, there exists a $w^*$ such that $F_w(w^*)=0$ and
    \begin{align}
        \norm*{w^{(0)}-w^*}\le \frac{1}{\alpha s}\norm*{\Delta v}.
    \end{align}
\end{lemm}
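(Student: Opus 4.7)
The plan is to mirror the argument of \cref{lemm:existenceglobalminima}, but replace the invertibility of the square matrix $W$ (which there uses $\sigma_{\min}(W)\ge \tfrac12$) by a pseudoinverse argument for the fat matrix $X$, whose smallest singular value is $s$ by \cref{assu:datamatrixfullrank}. Since \cref{assu:activation} guarantees that $\sigma$ is strictly increasing with slope at least $\alpha$ and passes through the origin, $\sigma$ is a bijection on $\setR$, so $\sigma^{-1}$ exists globally, is $\tfrac{1}{\alpha}$-Lipschitz, and satisfies $\sigma^{-1}(0)=0$.

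The first step is to reformulate the zero-loss condition. Define $u^*\in\setR[n]$ componentwise by $u^*_i\coloneqq \sigma^{-1}(V_i)$, so that $F_W(w)=\norm*{\sigma(Xw)-V}^2=0$ is equivalent to $Xw = u^*$. By \cref{assu:datamatrixfullrank}, $X\in\setR[n\times d_{\mathrm{in}}]$ has full row rank with $n\le d_{\mathrm{in}}$, hence $XX^\top$ is invertible and $\lambda_{\min}(XX^\top)=s^2>0$. I would then take the minimum-norm solution
\begin{align*}
w^* \;\coloneqq\; w^{(0)} + X^\top(XX^\top)^{-1}\bigl(u^* - X w^{(0)}\bigr),
\end{align*}
which clearly satisfies $Xw^* = u^*$ and hence $F_W(w^*)=0$, establishing existence.

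For the norm bound, set $u_0\coloneqq X w^{(0)}$, so that $\Delta v = \sigma(u_0)-V$ componentwise by the definition in the lemma. Applying the $\tfrac{1}{\alpha}$-Lipschitz map $\sigma^{-1}$ entrywise yields
\begin{align*}
\norm*{u^*-u_0}^2 \;=\; \sum_{i=1}^n \bigl|\sigma^{-1}(V_i)-\sigma^{-1}(\sigma(u_{0,i}))\bigr|^2 \;\le\; \tfrac{1}{\alpha^2}\norm*{\Delta v}^2.
\end{align*}
Combining this with the minimum-norm identity
\begin{align*}
\norm*{w^*-w^{(0)}}^2 \;=\; (u^*-u_0)^\top(XX^\top)^{-1}(u^*-u_0) \;\le\; \tfrac{1}{s^2}\norm*{u^*-u_0}^2
\end{align*}
gives $\norm*{w^*-w^{(0)}}\le \tfrac{1}{\alpha s}\norm*{\Delta v}$, as claimed.

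I do not expect any substantive obstacle: the argument is a direct analogue of \cref{lemm:existenceglobalminima}, with $W^{-1}$ replaced by the pseudoinverse $X^\top(XX^\top)^{-1}$ and the bound $\norm*{W^{-1}}_{op}\le 2$ replaced by $\norm*{X^\top(XX^\top)^{-1}}_{op}=1/s$. The only point that needs care is verifying that $\sigma^{-1}$ is globally $\tfrac{1}{\alpha}$-Lipschitz, which follows at once from \cref{assu:activation} via $\sigma(x_1)-\sigma(x_2)\ge \alpha(x_1-x_2)$ applied with the roles of $x_1,x_2$ exchanged; all other ingredients are standard linear algebra.
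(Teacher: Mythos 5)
Your proof is correct and follows essentially the same route the paper intends: the paper's own proof simply says it is "essentially the same as" \cref{lemm:existenceglobalminima}, i.e., invert the activation via its $\tfrac1\alpha$-Lipschitz inverse and solve the resulting linear system, with $W^{-1}$ replaced by the right pseudoinverse of $X$ whose operator norm is $1/s$. Your write-up just makes explicit the minimum-norm solution and the bound $\lambda_{\min}(XX^\top)=s^2$ that the paper leaves implicit.
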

\begin{proof}
        The proof is essentially same as that of \cref{lemm:existenceglobalminima}.
\end{proof}
We then provide the convergence analysis to the update \cref{update:wjsimple} by using the observation in \cref{subsec:lossupdategeneral}. 
\begin{lemm}[Convergence analysis of $W_1$]
    \label{lemm:W1convergence}
    Under the same condition as \cref{theo:bcd}, 
    \begin{align}
        \norm*{\sigma\paren*{w^{(k)}X}-V_1}^2\le \frac{\ell^2\cdot\underset{i}{\max}\norm*{x_i}^2}{\alpha^2s^2}\exp\paren*{-\alpha^2s^2\gamma\eta_W^{(2)}k}\norm*{\sigma\paren*{w^{(0)}X}-V_1}^2.
    \end{align}
\end{lemm}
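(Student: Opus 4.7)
The proof follows the same template as the proof of \cref{lemm:Vconvergence}, specializing the general one-step descent analysis of \cref{subsec:lossupdategeneral} to the case where the ``design matrix'' is the data matrix $X\in\setR[n\times d_{\mathrm{in}}]$ rather than a square weight matrix. Setting $F_W(w)\coloneqq \norm*{\sigma(Xw)-V}^2$, \cref{lemm:Wexistenceglobalminima} supplies a $w^*$ with $\sigma(Xw^*)=V$ and $\norm*{w^{(0)}-w^*}\le \frac{1}{\alpha s}\norm*{\sigma(Xw^{(0)})-V}$, which reduces the problem to tracking $\norm*{w^{(k)}-w^*}^2$ and translating back to loss.

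The core computation is the standard expansion
\begin{align}
    \norm*{w^{(k+1)}-w^*}^2 = \norm*{w^{(k)}-w^*}^2 - 2\gamma\eta_W^{(2)}\nabla F_W(w^{(k)})^\top(w^{(k)}-w^*) + (\gamma\eta_W^{(2)})^2\norm*{\nabla F_W(w^{(k)})}^2.
\end{align}
\cref{prop:activationinterpolatevector} yields a diagonal matrix $\Xi$ with $\alpha\le \Xi_{jj}\le \ell$ such that $\sigma(Xw^{(k)})-\sigma(Xw^*)=\Xi X(w^{(k)}-w^*)$, so the cross term is lower bounded by $4\gamma\eta_W^{(2)}\alpha^2\norm*{X(w^{(k)}-w^*)}^2$, while the squared-gradient term is upper bounded by $4(\gamma\eta_W^{(2)})^2\ell^4\norm*{X}_{op}^2\norm*{X(w^{(k)}-w^*)}^2$. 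The step-size condition $\eta_W^{(2)}\le \frac{1}{\gamma\ell^4\max_i\norm*{x_i}^2}$ is exactly what is needed so that the squared-gradient term absorbs at most half of the cross term, yielding a per-step contraction of $\norm*{X(w^{(k)}-w^*)}^2$ at rate $\alpha^2\gamma\eta_W^{(2)}$.

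The main obstacle, and the key deviation from \cref{lemm:Vconvergence}, is that $X$ is rectangular with $n\le d_{\mathrm{in}}$, so $\lambda_{\min}(X^\top X)=0$ and the bound $\norm*{Xu}^2\ge s^2\norm*{u}^2$ does not hold on all of $\setR[d_{\mathrm{in}}]$. The fix is a row-space invariance argument: every gradient $\nabla F_W(w)=2X^\top D(\sigma(Xw)-V)$ lies in $\mathrm{range}(X^\top)$, and \cref{lemm:Wexistenceglobalminima} can be arranged to deliver a $w^*$ with $w^*-w^{(0)}\in\mathrm{range}(X^\top)$ (since under \cref{assu:datamatrixfullrank} the system $X w = \sigma^{-1}(V)$ admits such a solution). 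Consequently $w^{(k)}-w^*$ remains in the row span of $X$ throughout the iteration, and on this subspace $\norm*{Xu}^2\ge s^2\norm*{u}^2$ holds with $s=\sigma_{\min}(X)$. This converts the contraction in $\norm*{X(w^{(k)}-w^*)}^2$ into one for $\norm*{w^{(k)}-w^*}^2$ at rate $\alpha^2 s^2\gamma\eta_W^{(2)}$.

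Iterating and applying $1-x\le e^{-x}$ gives $\norm*{w^{(k)}-w^*}^2 \le \exp(-\alpha^2 s^2\gamma\eta_W^{(2)}k)\norm*{w^{(0)}-w^*}^2$. The stated loss bound then follows by sandwiching: \cref{lemm:lipcontinuity} combined with the entrywise bound $|x_i^\top(w^{(k)}-w^*)|\le \norm*{x_i}\norm*{w^{(k)}-w^*}$ gives $\norm*{\sigma(Xw^{(k)})-V}^2\le \ell^2\max_i\norm*{x_i}^2\cdot\norm*{w^{(k)}-w^*}^2$ up to absolute constants, and \cref{lemm:Wexistenceglobalminima} bounds $\norm*{w^{(0)}-w^*}^2\le \frac{1}{\alpha^2s^2}\norm*{\sigma(Xw^{(0)})-V}^2$. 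Multiplying these two estimates produces the prefactor $\frac{\ell^2\max_i\norm*{x_i}^2}{\alpha^2 s^2}$ and the exponential decay rate $\alpha^2 s^2\gamma\eta_W^{(2)}$ claimed in the lemma.
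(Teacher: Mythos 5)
Your proposal follows essentially the same route as the paper's proof: the same expansion of $\norm*{w^{(k+1)}-w^*}^2$, the same use of \cref{prop:activationinterpolatevector} to lower-bound the cross term and Lipschitzness to upper-bound the squared gradient, the same per-step contraction at rate $\alpha^2 s^2\gamma\eta_W^{(2)}$, and the same conversion back to the loss via \cref{lemm:Wexistenceglobalminima} and \cref{lemm:lipcontinuity}. The one substantive difference is to your credit: the paper simply writes $\lambda_{\min}(X^\top \Xi D X)\ge \alpha^2\lambda_{\min}(X^\top X)=\alpha^2 s^2$, which is literally false when $n<d_{\mathrm{in}}$ since $X^\top X\in\setR[d_{\mathrm{in}}\times d_{\mathrm{in}}]$ then has rank $n$ and $\lambda_{\min}(X^\top X)=0$; your row-space invariance argument (gradients lie in $\mathrm{range}(X^\top)$, and $w^*$ can be chosen with $w^*-w^{(0)}\in\mathrm{range}(X^\top)$, so $\norm*{Xu}^2\ge s^2\norm*{u}^2$ holds for all iterates' error vectors) is exactly the missing justification, and is consistent with how \cref{lemm:Wexistenceglobalminima} constructs $w^*$ via the pseudo-inverse under \cref{assu:datamatrixfullrank}. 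Aside from that, both arguments share the same minor looseness in passing from $\norm*{X(w^{(k)}-w^*)}^2$ to $\max_i\norm*{x_i}^2\norm*{w^{(k)}-w^*}^2$ (a factor involving $\norm*{X}_{op}^2$ rather than $\max_i\norm*{x_i}^2$ would be the tight constant), so your proof is at least as rigorous as the paper's.
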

\begin{proof}
    By letting $a\to i$, $b\to i$, $x_a\to x_i$ and $y_a\to V_i$ in \cref{subsec:lossupdategeneral}, we obtain
\begin{align}
    \label{eq:wconvergenceupdate}
    &\norm*{w^{(k+1)}-w^*}^2\\
    &= \norm*{w^{(k)}-w^*}^2-2\gamma\eta_W^{(2)
    }\nabla_wF_W\paren*{w^{(k)}}^\top\paren*{w^{(k)}-w^*}+\gamma^2{\eta_W^{(2)}}^2\norm*{\nabla_{w}F_{W}\paren*{w^{(k)}}}^2.
\end{align}
The second term can be bounded by 
\begin{align}
    \nabla_wF_w\paren*{w^{(k)}}^\top \paren*{w^{(k)}-w^*}\ge 2\lambda_{\min}\paren*{X^\top \Xi D X}\norm*{w^{(k)}-w^*}^2.
\end{align}
Then, we have 
\begin{align}
    \lambda_{\min}\paren*{X^\top \Xi D X}\ge \alpha^2\cdot \lambda_{\min}(X^\top X) = \alpha^2s^2.
\end{align}

Moreover, the third term can be bounded by
\begin{align}
    \norm*{\nabla_w F_{W}\paren*{w^{(k)}}}^2 &\le 4\lambda_{\max}(DX^\top XD)\norm*{\sigma(Xw)-\sigma(Xw^*)}^2\\
    &\le 4\ell^2\lambda_{\max}(X^\top X)\norm*{\sigma(Xw^{(k)})-\sigma(Xw^*)}^2\\
    &\le \ell^2\cdot\ell^2\norm*{X}_{op}^2\norm*{w^{(k)}-w^*}^2\le \underset{i}{\max}\norm*{x_i}^2\cdot\ell^4\norm*{V^{(k)}-v^*}^2,
\end{align}
where we use \cref{lemm:Wjregularity} in the third inequality and \cref{lemm:lipcontinuity} in the third inequality.

By substituting these bounds to \cref{eq:wconvergenceupdate}, we obtain
\begin{align}
    &\norm*{w^{(k+1)}-w^*}^2\\
    &\le \norm*{w^{(k)}-w^*}^2 - 2\alpha^2s^2\gamma\eta_W^{(2)}\norm*{w^{(k)}-w^*}^2+\gamma^2{\eta_W^{(2)}}^2\ell^4\underset{i}{\max}\norm*{x_i}^2\norm*{w^{(k)}-w^*}^2\\
    &\le \norm*{w^{(k)}-w^*}^2 - \gamma\eta_W^{(2)}\paren*{2\alpha^2s^2-\gamma\eta_W^{(2)}\ell^4\cdot \underset{i}{\max}\norm*{x_i}^2}\norm*{w^{(k)}-w^*}^2\\
    &\le \norm*{w^{(k)}-w^*}^2 -\alpha^2s^2\gamma\eta_W^{(2)}\norm*{w^{(k)}-w^*}^2 = \paren*{1-\alpha^2s^2\gamma\eta_W^{(2)}}\norm*{w^{(k)}-w^*}^2,
\end{align}
where we use $\eta_W^{(2)}\le \frac{1}{\gamma\ell^4\cdot\underset{i}{\max}\norm*{x_i}^2}$ in the last inequality.
This implies 
\begin{align}
    \norm*{w^{(k)}-w^*}^2\le\paren*{1-\alpha^2s^2\eta_W^{(2)}}^k\norm*{w^{(0)}-w^*}^2\le\exp\paren*{-\alpha^2s^2\eta_W^{(2)}k}\norm*{w^{(0)}-w^*}^2,
\end{align}
where we use $1-x\le \exp\paren*{-x}$ in the first inequality, and hence,
\begin{align}
    \norm*{\sigma(w^{(k)}X)-V_1}^2&\le \ell^2\norm*{\paren*{w^{(k)}-w^*}X}^2\\
    &\le \ell^2\cdot\underset{i}{\max}\norm*{x_i}^2\norm*{w^{(k)}-w^*}^2\\
    &\le \ell^2\cdot\underset{i}{\max}\norm*{x_i}^2\exp\paren*{-\alpha^2s^2\eta_W^{(2)}k}\norm*{w^{(0)}-w^*}^2\\
    &\le \frac{\ell^2\cdot\underset{i}{\max}\norm*{x_i}^2}{\alpha^2s^2}\exp\paren*{-\alpha^2s^2\eta_W^{(2)}k}\norm*{\sigma\paren*{w^{(0)}X}-V_1}^2,
\end{align}
which gives the conclusion.
\end{proof}

\subsubsection{Proof of \texorpdfstring{\cref{theo:bcd}}{}}
Before providing the proof of \cref{theo:bcd}, we introduce the following lemma:
\begin{lemm}[Bound on $\Delta v$ at the output layer]
    \label{lemm:deltavboundoutputlayer}
    Let $R_i \coloneqq \left|W^{(0)}_jV^{(0)}_{L-1,i}-y_i\right|$. Then, we have
    \begin{align}
        \norm*{V_{L-1,i}^{(k)}-V_{L-1,i}^{(k-1)}}\le 4R_i\eta_V.
    \end{align}
\end{lemm}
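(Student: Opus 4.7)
The approach is a straightforward gradient computation followed by two uniform bounds. Writing the update explicitly, at the beginning of outer iteration $k$ we first update $W_L$, and then update
\[
V_{L-1,i}^{(k)} \;=\; V_{L-1,i}^{(k-1)} \;-\; 2\eta_V\, W_L^{\!\top}\!\bigl(W_L V_{L-1,i}^{(k-1)} - y_i\bigr),
\]
where $W_L$ denotes the current (post-update) output-layer weight. Since $d_{\mathrm{out}}=1$, the residual $W_L V_{L-1,i}^{(k-1)} - y_i$ is a scalar, so
\[
\bigl\|V_{L-1,i}^{(k)} - V_{L-1,i}^{(k-1)}\bigr\|
\;=\; 2\eta_V\,\bigl|W_L V_{L-1,i}^{(k-1)} - y_i\bigr|\cdot \|W_L\|.
\]
This is the only step of real content; everything else is packaging.

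The first uniform bound comes from \cref{lemm:Wjregularity}, which guarantees $\|W_L\|\le 2$ throughout training. Plugging this in yields
\[
\bigl\|V_{L-1,i}^{(k)} - V_{L-1,i}^{(k-1)}\bigr\|
\;\le\; 4\eta_V\,\bigl|W_L V_{L-1,i}^{(k-1)} - y_i\bigr|.
\]
The second (and delicate) bound is the claim that the current per-sample output residual is controlled by its initialization value, i.e.\ $|W_L V_{L-1,i}^{(k-1)} - y_i|\le R_i$. I would invoke the monotone decrease of the output-layer loss—item (i) in the proof sketch of \cref{theo:bcd}—which is what drives the exponential convergence at the output layer in the first place. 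Concretely, one argues inductively that the scalar residual never grows, because (a) the $W_L$ gradient step with sufficiently small $\eta_W^{(1)}$ only decreases the squared output loss (standard descent lemma applied to a quadratic in $W_L$ whose Hessian spectral norm is bounded through $\|V_{L-1}\|^2\le C_V$ from \cref{lemm:CVbound}), and (b) the subsequent $V_{L-1,i}$ step is a gradient step on the same strongly convex per-sample quadratic $v\mapsto (W_L v - y_i)^2$, which, with the stated $\eta_V$, is contracting and therefore also nonincreasing in the residual magnitude.

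Assembling the three pieces gives the claimed bound $\|V_{L-1,i}^{(k)} - V_{L-1,i}^{(k-1)}\|\le 4R_i\eta_V$. The only nontrivial step in this plan is establishing the per-iteration residual monotonicity used for the second bound; the factor-of-two slack in the spectral norm bound of \cref{lemm:Wjregularity} and the smallness of the step sizes imposed in the hypotheses of \cref{theo:bcd} are precisely calibrated so that this contraction holds without additional assumptions, so I expect the argument to go through cleanly once those two ingredients are in place.
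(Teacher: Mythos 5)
Your proposal follows essentially the same route as the paper's proof: write the gradient update for $V_{L-1,i}$ explicitly, pull out the scalar residual, bound $\norm*{W_L}_{op}\le 2$ via \cref{lemm:Wjregularity}, and control the current per-sample residual by its initial value $R_i$. The only difference is that the paper asserts the inequality $\left|W_L^{(k)}V_{L-1,i}^{(k-1)}-y_i\right|\le R_i$ in a single line, whereas you sketch the underlying monotone-decrease argument for the output-layer residual; this is the same proof with an implicit step spelled out.
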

\begin{proof}
    By the construction of the algorithm, we have
    \begin{align}
        \norm*{V_{L-1,i}^{(k)}-V_{L-1,i}^{(k-1)}} &= \norm*{2\eta_V(W_L^{(k)}V_{L-1,i}^{(k-1)}-y_i)W_L^{(k)}}\\
        &\le 2\eta_V\norm*{W_L^{(k)}}_{op}\cdot \norm*{W_L^{(k)}V_{L-1,i}^{(k-1)}-y_i}\\
        &\le 4\eta_V\norm*{W_L^{(0)}V_{L-1,i}^{(0)}-y_i}=4\eta_V R_i,
    \end{align}
    which gives the conclusion.
\end{proof}

Then, we move to the proof to \cref{theo:bcd}.
\begin{proof}[Proof of \cref{theo:bcd}]
    Let us consider the decomposition of $F$ as 
    \begin{align}
        F=F_L+\sum_{j=1}^{L-1}F_j = \sum_{i=1}^n\sbra*{F_{L,i}+\sum_{j=1}^{L-1}\sum_{p=1}^r F_{j,i,p}}, 
    \end{align}
    where 
    \begin{align}
        F_{L,i} \coloneqq \paren*{W_LV_{L-1,i}-y_i}^2&, \qquad F_L = \sum_{i=1}^n F_{L,i}
    \end{align}
    and 
    \begin{align}
        F_{j,i,p} \coloneqq \gamma\paren*{\sigma\paren*{W_jV_{j-1,i}}_p-\paren*{V_{j,i}}_p}^2&,\qquad F_j = \sum_{i=1}^n\sum_{p=1}^r F_{j,i,p}
    \end{align}
    for $j=1,\dots,L-1$.
    The proof consists of two parts: (I) $F_L$ is monotonically decreasing in the outer loop and (II) $F_{j,i,p}$ ($j=1,\dots, L-1$, $i=1,\dots, n, p=1,\dots,r$) is sufficiently small at the end of each inner iteration.
    \paragraph{(I) Bound on $F_L$}
    The update of $V_{L-1,i}$ is described by 
    \begin{align}
        V_{L-1,i}^{(k)} = V_{L-1,i}^{(k-1)}-2\eta_V\paren*{W_L^{(k)}V^{(k-1)}_{L-1,i}-y_i}W_L^{(k)}.
    \end{align}
    Then, we have
    \begin{align}
        W_L^{(k)}V_{L-1,i}^{(k-1)}-y_i = \paren*{1-2\eta_V\norm*{W_L^{(k)}}^2}\left(W_L^{(k)}V_{L-1,i}^{(k-1)}-y_i\right).
    \end{align}
    This results in 
    \begin{align}
        F_{L,i}^{(k)} \le \paren*{1-2\eta_V\norm*{W_L^{(k)}}^2}^2F^{(k-1)}_{L,i}\le \exp\paren*{-4\eta_V\norm*{W_L^{(k)}}^2}F^{(k-1)}_{L,i}\le \exp\left(-\eta_V\right)F_{L,i}^{(k-1)},
    \end{align}
    where the second inequality follows from $1-x\le e^{-x}$ and the last inequality from $\norm*{W_L^{(k)}}\ge \frac{1}{2}$.
    This concludes
    \begin{align}
        F_L^{(k)}\le \exp\left(-\eta_V k\right)F_{L}^{(0)}.
    \end{align}
    Since $F_L^{(0)}=R$ by the definition of $R$, after $k = \frac{1}{\eta_V}\log\left(\frac{3R}{\epsilon}\right)$ iterations, $F_L^{(k)}\le \frac{\epsilon}{3}$ holds.

    \paragraph{(II)-(i) Bound on $F_j$ ($j=2,\dots, L-1$)}
    Let us define $\Delta v^{(k)}_{j,i}$ as the initial value of $\sigma(W_{j+1}V_{j,i})-V_{j+1,i}$ for $j=1,\dots,L-1$ when we update $V_{j},i$ at the $k$th outer iteration, where we denote $V_{L,i}\coloneqq y_i$.
    Then, by \cref{lemm:existenceglobalminima} and \cref{lemm:distancetoglobalminima}, we have 
    \begin{align}
        \norm*{\Delta v^{(k)}_{j,i}}\le \frac{2}{\alpha}\paren*{\norm*{\Delta v^{(k)}_{j+1,i}}+\sqrt{\epsilon}}
    \end{align} 
    for any $j=1,\dots,L-2 $ and $i=1,\dots,n$. 
    We have $\norm*{\Delta v^{(k)}_{L-1,i}}\le 4R_{\max}\eta_V$ by \cref{lemm:deltavboundoutputlayer}.
    By using this bound, we can derive 
    \begin{align}
        \label{eq:deltaVinduction}
        \norm*{\Delta v^{(k)}_{j,i}}&\le \left(4R_{\max}\eta_V+\frac{2}{2-\alpha}\sqrt{\epsilon}\right)\left(\frac{2}{\alpha}\right)^{L-1-j}-\frac{2}{2-\alpha}\sqrt{\epsilon}\\
        &\le \left(4R_{\max}\eta_V+\frac{2}{2-\alpha}\sqrt{\epsilon}\right)\left(\frac{2}{\alpha}\right)^{L}
        \label{eq:Vdeltabound}
    \end{align}
    by induction.
    Indeed, \cref{eq:deltaVinduction} holds for $j=L-1$ with equality.
    Moreover, under the induction hypothesis, it holds that
    \begin{align}
        \norm*{\Delta v^{(k)}_{j-1,i}}&\le \frac{2}{\alpha}\left(\norm*{\Delta v^{(k)}_{j,i}}+\sqrt{\epsilon}\right)\\
        &\le \frac{2}{\alpha}\left(\left(4R_{\max}\eta_V+\frac{2}{2-\alpha}\sqrt{\epsilon}\right)\left(\frac{2}{\alpha}\right)^{L-j}-\frac{2}{2-\alpha}\sqrt{\epsilon}+\sqrt{\epsilon}\right)\\
        &=\left(4R_{\max}\eta_V+\frac{2}{2-\alpha}\sqrt{\epsilon}\right)\left(\frac{2}{\alpha}\right)^{L-(j-1)}-\frac{2}{2-\alpha}\sqrt{\epsilon}.
    \end{align}
    This concludes \cref{eq:deltaVinduction} for $j=1,\dots,L-1$.
    Then, by using \cref{lemm:Vconvergence}, we have
    \begin{align}
        F_{j,i,p}\le \gamma\cdot \frac{16\ell^2}{\alpha^2}\exp\paren*{-\frac{\alpha^2}{4}\gamma\eta_Vk}\norm*{\sigma\paren*{WV^{(0)}}-V'}^2\cdot \left(\frac{2}{\alpha}\right)^L\left(4R_{\max}\eta_V+\frac{2}{2-\alpha}\sqrt{\epsilon}\right)^2.
    \end{align}
    Thus, 
    $$k_{in} = \frac{4}{\gamma\alpha^2\ell\eta_V}\log\left(\left(\frac2\alpha\right)^L\left(4R_{\max}\eta_V+\frac{2}{2-\alpha}\sqrt{\epsilon}\right)^2\frac{48\ell^2(L-2)rn\gamma}{\alpha^2\epsilon}\right)$$
    gives $F_{j,i,p}\le \frac{\epsilon}{3(L-2)rn}$ and hence, $F_j\le \frac{\epsilon}{3(L-2)}$ by summing up $F_{j,i,p}$.
    \paragraph{(II)-(ii) Bound on $F_1$}
    By using \cref{lemm:W1convergence}, we have
    \begin{align}
        \sum_{i=1}^n F_{1,i,p} &\le\frac{\ell^2\cdot\underset{i}{\max}\norm*{x_i}^2}{\alpha^2s^2}\exp\paren*{-\alpha^2s^2\gamma\eta_W^{(2)}k_{in}}\norm*{\sigma(W^{(0)}U)-V_1}^2
    \end{align}
    Since $\Delta v_{1,i}^{(k)}\le \left(4R_{\max}\eta_V+\frac{2}{2-\alpha}\sqrt{\epsilon}\right)\left(\frac{2}{\alpha}\right)^{L-1}$, we have
    \begin{align}
        \norm*{\sigma(W^{(0)}U)-V_1}^2&\le \sum_{i=1}^n\left(\frac{2}{\alpha}\paren*{\norm*{\Delta v_{1,i}}+\epsilon}\right)^2\\
        &= n\paren*{4R_{\max}\eta_V+\frac{2}{2-\alpha}\sqrt{\epsilon}}^2\cdot\left(\frac{2}{\alpha}\right)^{2L}
    \end{align}
    Thus, 
    $$k_{in} = \frac{1}{\alpha^2s^2\gamma\eta_W^{(2)}}\log\left(n\left(R_{\max}\eta_V+\frac{2}{2-\alpha}\sqrt{\epsilon}\right)^2\cdot\left(\frac{2}{\alpha}\right)^{2L}\cdot \frac{3\ell^2\cdot\underset{i}{\max}\norm*{x_i}^2r}{\alpha^2s^2\epsilon}\right)$$ 
    gives $\sum_{i=1}^nF_{1,i,p}\le \frac{\epsilon}{r}$ for $p=1,\dots,r$.
    This results in $F_1=\sum_{i=1}^n\sum_{p=1}^rF_{1,i,p}\le \frac{\epsilon}{3}$.
    \paragraph{(III) Summing up all}
    By combining all, after $K$ outer iterations and $K_V$ and $K_W$ inner iterations, we have
    \begin{align}
        F=F_L+\sum_{j=1}^{L-1}F_j \le \underbrace{\frac{\epsilon}{3}}_{F_L} + \sum_{j=2}^{L-1}\underbrace{\frac{\epsilon}{3(L-2)}}_{F_2\dots,F_{L-1}} +\underbrace{\frac{\epsilon}{3}}_{F_1}= \epsilon,
    \end{align}
    which gives the conclusion.
\end{proof}

\section{Proof of \texorpdfstring{\cref{theo:generalizationerrorbound}}{}}
\label{sec:generalizationproof}
Here, we provide the proof of \cref{theo:generalizationerrorbound}, the generalization error bound of neural networks trained by \cref{algo:bcd}.
\begin{proof}[Proof of \cref{theo:generalizationerrorbound}]
    By using the bound on $u$ and $y$ supposed in \cref{assu:sample}, we have 
    \begin{align}
        |f(u)-y|&\le B_Y+|W_L\sigma(W_{L-1}\dots\sigma(W_1u)\dots)|\\
        &\le B_Y+\ell\norm*{W_L}_{op}\norm*{W_{L-1}\sigma(W_{L-2}\dots\sigma(W_1u)\dots)}\\
        &\le \dots\\
        &\le B_Y+\ell^{L-1}\paren*{\prod_{j=2}^{L}\norm*{W_j}_{op}}\norm*{W_1u}\\
        &\le B_Y+2^L\ell^{L-1}\norm*{u}\le B_Y+2^L\ell^{L-1}B_X.
    \end{align}
    Hence, by taking $M = B_Y+2^L\ell^{L-1}B_X$ and $\mathcal{R}(\mathcal{F})$ as what derived by \cref{lemm:Rademachercomplexitybound} in \cref{lemm:mohrigeneralizationbound}, we obtain the conclusion.
\end{proof}

\begin{lemm}[Theorem~11.3 in \cite{mohri2018foundations}]
    \label{lemm:mohrigeneralizationbound}
    For a hypothesis class $\mathcal{F}$ and a training data $\{(x_i,y_i)\}_{i=1}^n$, let us define its (empirical) Rademacher complexity by 
    \begin{align}
        \mathcal{R}(\mathcal{F})\coloneqq \underset{\mathbf{\sigma}}{\mathbb{E}}\left[\underset{f\in\mathcal{F}}{\sup}\frac{\mathbf{\sigma}^\top f(\mathbf{u})}{n}\right],
    \end{align} 
    where $f(\mathbf{x})=(f(x_1),\dots,f(x_n))^\top$ and $\mathbf{\sigma}$ is a random vector whose each component independently takes value $\pm 1$ with probability $\frac12$.
    Suppose that $|h(x)-y|\le M$ a.s. for any $h\in\mathcal{F}$. Then, for any $0<\delta<1$, with probability at least $1-\delta$ over a sample, we have
    \begin{align}
        \underset{(x,y)\sim P}{\mathbb{E}}\left[(h(x)-y)^2\right]\le \frac{1}{n}\sum_{i=1}^n\left(h(x_i)-y_i\right)^2+2M\mathcal{R}(\mathcal{F})+3M^2\sqrt{\frac{\log (2/\delta)}{2n}}.
    \end{align}
\end{lemm}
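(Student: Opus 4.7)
The argument is the standard three-ingredient Rademacher complexity program: (i) McDiarmid's bounded-difference inequality to pass from the worst-case deviation to its expectation plus a concentration term, (ii) symmetrization with Rademacher signs to relate the expected uniform deviation to a Rademacher average of the loss class, and (iii) Talagrand's contraction lemma to reduce the Rademacher complexity of the squared-loss class back to that of the hypothesis class $\mathcal{F}$.

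First, I would introduce the loss class $\mathcal{G} := \{(x,y) \mapsto (h(x)-y)^2 : h \in \mathcal{F}\}$ and the uniform deviation
\[
\Phi(S) := \sup_{g \in \mathcal{G}} \Bigl( \mathbb{E}_{(x,y)\sim P}[g(x,y)] - \tfrac{1}{n}\sum_{i=1}^n g(x_i, y_i) \Bigr),
\]
so that the goal is to bound $\Phi(S)$ with high probability. The assumption $|h(x)-y|\le M$ forces $g\in[0,M^{2}]$, so replacing one sample in $S$ perturbs $\Phi(S)$ by at most $M^{2}/n$. McDiarmid's inequality then gives, with probability at least $1-\delta/2$,
\[
\Phi(S) \le \mathbb{E}[\Phi(S)] + M^{2}\sqrt{\tfrac{\log(2/\delta)}{2n}}.
\]

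Second, I would bound $\mathbb{E}[\Phi(S)]$ by the textbook symmetrization argument: introduce an independent ghost sample $S'$, write $\mathbb{E}_P[g]$ as $\mathbb{E}_{S'}[\tfrac{1}{n}\sum g(x_i',y_i')]$, push the sup inside the outer expectation using Jensen, and then swap each pair $((x_i,y_i),(x_i',y_i'))$ with an independent Rademacher sign $\sigma_i$. This yields $\mathbb{E}[\Phi(S)] \le 2\mathcal{R}(\mathcal{G})$, where $\mathcal{R}(\mathcal{G})$ denotes the population Rademacher complexity of the loss class. Now for each fixed $y_i$ the map $z\mapsto (z-y_i)^{2}$ is $2M$-Lipschitz on the interval of admissible predictions (its derivative is bounded by $2M$), so the Ledoux--Talagrand contraction principle gives $\mathcal{R}(\mathcal{G}) \le 2M\,\mathcal{R}(\mathcal{F})$. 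Combining these two inequalities and substituting into the McDiarmid bound concludes the proof; the extra $2M^{2}\sqrt{\log(2/\delta)/(2n)}$ contribution (turning $M^{2}$ into $3M^{2}$) comes from a second McDiarmid step used internally to concentrate the Rademacher average around its expectation, followed by a union bound.

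The argument is conceptually routine, and the main obstacle is bookkeeping of constants: matching the exact factor $2M$ in front of $\mathcal{R}(\mathcal{F})$ requires applying the one-sided (unsymmetrized) form of the contraction lemma rather than the absolute-value version, which would cost an additional factor of $2$. Similarly, obtaining the precise $3M^{2}$ factor instead of $M^{2}$ in the concentration term requires combining the two McDiarmid applications via a union bound with the $\delta/2$-split indicated above. None of these steps require any structural property of $\mathcal{F}$ beyond the a.s. boundedness $|h(x)-y|\le M$, which is why the lemma can be stated and proved entirely at the level of abstract function classes.
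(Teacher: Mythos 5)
The paper does not prove this lemma at all: it is imported verbatim (as Theorem~11.3 of Mohri et al.) and used as a black box in the proof of Theorem~\ref{theo:generalizationerrorbound}. Your sketch is exactly the standard proof of that textbook result --- McDiarmid on the uniform deviation of the loss class, symmetrization via a ghost sample, contraction to pass from the squared-loss class $\mathcal{G}$ back to $\mathcal{F}$, and a second McDiarmid step plus union bound to replace the population Rademacher complexity by the empirical one (which is where the factor $3M^2$ and the $\log(2/\delta)$ come from). All of those steps are sound and require nothing beyond $|h(x)-y|\le M$, as you say.

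The one genuine problem is the constant in front of $\mathcal{R}(\mathcal{F})$. Your own chain of inequalities, $\mathbb{E}[\Phi(S)]\le 2\mathcal{R}(\mathcal{G})$ and $\mathcal{R}(\mathcal{G})\le 2M\,\mathcal{R}(\mathcal{F})$, yields $4M\,\mathcal{R}(\mathcal{F})$, not the stated $2M\,\mathcal{R}(\mathcal{F})$, and the fix you propose does not close this gap: the one-sided (no absolute value) form of the contraction lemma is precisely what already gives you $\mathcal{R}(\mathcal{G})\le 2M\,\mathcal{R}(\mathcal{F})$ with Lipschitz constant $2M$ (the derivative of $z\mapsto(z-y)^2$ on $|z-y|\le M$ is bounded by $2M$, not $M$), while the factor $2$ from ghost-sample symmetrization is separate and unavoidable in this argument. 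So the route you describe proves the bound with $4M\,\mathcal{R}(\mathcal{F})$ --- which is in fact what Mohri et al.'s Theorem~11.3 states --- and the coefficient $2M$ in the lemma as transcribed in the paper appears to be a misquotation of the source rather than something your (or any standard) argument can deliver. Your proof is therefore correct as a proof of the textbook theorem, but you should flag that it establishes the $4M$ version and that the $2M$ in the statement needs to be corrected (this only loosens the constant in Theorem~\ref{theo:generalizationerrorbound} by a factor of $2$, so nothing downstream breaks).
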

\begin{lemm}[Rademacher complexity bound]
    \label{lemm:Rademachercomplexitybound}
    Let $\mathcal{F}$ be the class of neural network predictors obtained by \cref{algo:bcd}. 
    Then, the Rademacher complexity of $\mathcal{F}$ can be bounded by 
    \begin{align}
        \mathcal{R}(\mathcal{F})\le \frac{4}{n\sqrt{n}}+\log\left(\frac{1}{\sqrt{n}}\right)\frac{12\sqrt{R_\mathcal{F}}}{n}
    \end{align}
    with $R_{\mathcal{F}}=d_{in}(2r)^LL^3\|U\|^2\log (2r^2)(\log n)$.
\end{lemm}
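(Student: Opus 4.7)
The approach is to instantiate the spectral-norm covering-number bound for deep feedforward networks developed in \cite{bartlett2017spectrally}, using the uniform control on the weight matrices that we already established en route to the convergence proof. In that framework, the covering number of $\mathcal{F}$ is governed by a quantity built from $\prod_j \|W_j\|_{op}$ and $\sum_j(\|W_j^\top\|_{2,1}/\|W_j\|_{op})^{2/3}$, and Dudley's entropy integral converts this into a Rademacher complexity bound of precisely the displayed form. So the proof reduces to translating the training-time bounds on $W_j$ into the complexity quantity $R_{\mathcal{F}}$ and then invoking a black-box inequality.

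First I would collect the norm bounds I need. For $j = 2,\dots,L$, \cref{lemm:Wjregularity} already gives $\|W_j\|_{op} \le 2$. For $W_1 \in \mathbb{R}^{r\times d_{in}}$, which is not covered by that lemma, I would repeat its argument essentially verbatim: at initialization $\|W_1\|_{op}$ is $O(1)$ by standard Gaussian-matrix concentration under $(W_1)_{ab}\sim \mathcal{N}(0,d_{in}^{-1})$, and each of the $K\cdot K_W$ gradient updates contributes a Frobenius-norm perturbation that, together with the step-size condition on $\eta_W^{(2)}$, telescopes via Weyl's inequality (\cref{lemm:weylinequality}) to at most $\tfrac14$, giving $\|W_1\|_{op}\le 2$ throughout training. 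From these I would read off the $(2,1)$-norm bounds using $\|W^\top\|_{2,1}\le (\text{number of columns})\cdot\|W\|_{op}$, which yields $\|W_1^\top\|_{2,1}\le 2d_{in}$, $\|W_j^\top\|_{2,1}\le 2r$ for $j=2,\dots,L-1$, and $\|W_L^\top\|_{2,1}\le 2\sqrt r$.

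With these bounds in hand, I would substitute into the covering-number estimate of \cite{bartlett2017spectrally}, roughly of the form
\begin{align*}
\log\mathcal{N}(\mathcal{F},\epsilon,\|\cdot\|_2)\ \lesssim\ \frac{\|X\|_F^2 \log(2r^2)}{\epsilon^2}\,\Bigl(\prod_j \ell\,\|W_j\|_{op}\Bigr)^2 \Bigl(\textstyle\sum_j(\|W_j^\top\|_{2,1}/\|W_j\|_{op})^{2/3}\Bigr)^3,
\end{align*}
and, after elementary simplification with our norm bounds (using $\ell$ as the activation Lipschitz constant and absorbing $\ell^{2L}$ into the $(2r)^L$ factor by worst-casing the layer widths), I would obtain the bracketed quantity $R_{\mathcal{F}}=d_{in}(2r)^L L^3\|U\|^2\log(2r^2)\log n$. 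Finally, plugging this into Dudley's entropy integral with the standard truncation at $\alpha=4/\sqrt n$ produces the $\tfrac{4}{n\sqrt n}+\tfrac{12\sqrt{R_{\mathcal{F}}}}{n}\cdot\log(\cdot)$ form stated in the lemma.

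The main obstacle I anticipate is the $W_1$ spectral-norm bound itself, since \cref{lemm:Wjregularity} is stated only for $j\ge 2$ and relies on a hidden-layer activation-norm bound $C_V$ from \cref{lemm:CVbound} that does not directly apply to the input data. Adapting the Weyl-inequality argument to $W_1$ requires re-deriving the calibration for $\eta_W^{(2)}$ and checking that $\max_i \|x_i\|^2$ plays the role of $\lambda_{\max}(VV^\top)$ without inflating the constants past the allowed $\tfrac14$ budget. A secondary bookkeeping hurdle is matching the precise combinatorial factor $(2r)^L L^3$ rather than a looser expression; this forces a careful trade-off between how aggressively the Bartlett sum $\sum_j(\cdot)^{2/3}$ is upper-bounded and how tightly the spectral product $\prod_j\|W_j\|_{op}$ is retained, so that the $d_{in}$ dependence only appears once (from $W_1$) and not compounded through the hidden layers.
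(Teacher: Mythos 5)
Your proposal follows essentially the same route as the paper: both instantiate the spectral Rademacher-complexity bound of Bartlett et al.\ (the paper cites it directly as \cref{lemm:bartlettRademachercomplexityboound}) with $B_j=2$ from the training-time operator-norm control and the corresponding $(2,1)$-norm bounds $b_1=2d_{\mathrm{in}}$, $b_j=2r$, and then read off $R_{\mathcal{F}}$. Your extra care about $W_1$ (which \cref{lemm:Wjregularity} does not cover) and your explicit unpacking of the covering-number/Dudley machinery are steps the paper glosses over, but they do not change the argument.
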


To obtain this result, we apply the obtained bound on the spectral of $W$ to the Rademacher complexity bound shown in \cite{bartlett2017spectrally} as follows:
\begin{lemm}[Lemma A.8 in \cite{bartlett2017spectrally}]
    \label{lemm:bartlettRademachercomplexityboound}
    Assume activation functions $\{\sigma_j(\cdot)\}_{j=1}^L$ such that each $\sigma_j$ is $\rho_j$-Lipschitz continuous and $\sigma_j(0)=0$. Let us define 
    \begin{align}
        \mathcal{F}\coloneqq\Big\{\sigma_L\paren*{W_L\sigma_{L-1}(\dots\sigma_1\paren*{W_1\cdot}\dots}\mid \norm*{W_j}_{op}\le B_j,~\norm*{W_j}_{2,1}\le b_j~(1\le j\le L)\Big\}.
    \end{align}
    Then, it holds that 
    \begin{align}
        \mathcal{R}(\mathcal{F})\le \frac{4}{n\sqrt{n}}+\log\left(\frac{1}{\sqrt{n}}\right)\frac{12\sqrt{R_\mathcal{F}}}{n},
    \end{align}
    where $R_\mathcal{F}>0$ is a constant defined by
    \begin{align}
        R_\mathcal{F}\coloneqq \|X\|^2
        \log(2r^2)(\log n)\left(\prod_{j=1}^LB_j\rho_j\right)\left(\sum_{j=1}^L\left(\frac{b_j}{B_j}\right)^\frac{2}{3}\right)^3.
    \end{align}
\end{lemm}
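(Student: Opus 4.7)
The plan is to apply Bartlett's Rademacher complexity bound (\cref{lemm:bartlettRademachercomplexityboound}) to the function class $\mathcal{F}$ of networks produced by \cref{algo:bcd}. To invoke that lemma I need, for each weight matrix along the entire training trajectory, explicit constants $B_j\ge\|W_j\|_{op}$ and $b_j\ge\|W_j\|_{2,1}$, together with the per-layer activation Lipschitz constants $\rho_j$ (which equal $\ell$ for $j=1,\dots,L-1$ and $\rho_L=1$ for the linear output layer).

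\emph{Step 1 (operator-norm bounds).} For the hidden layers $j=2,\dots,L-1$, \cref{lemm:Wjregularity} already gives $\|W_j\|_{op}\le 2$ throughout training, so I take $B_j=2$. For $W_L$ I rerun the same Weyl-inequality argument used in \cref{lemm:Wjregularity}: the per-step Frobenius perturbation of $W_L$ is $2\eta_W^{(1)}\sum_i|W_LV_{L-1,i}-y_i|\cdot\|V_{L-1,i}\|$, which is controlled using the residual-decay estimate from part~(I) of the proof of \cref{theo:bcd} and the $\|V_{L-1,i}\|$ bound from \cref{lemm:CVbound}; summing over the $K$ outer iterations the total drift is below $1/4$, so combined with the SVB-adjusted Gaussian initialization I obtain $\|W_L\|_{op}\le 2$. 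For $W_1$, a standard concentration bound on a Gaussian matrix with variance $d_{in}^{-1}$ gives $\|W_1\|_{op}\le 2$ at initialization (since $r\le d_{in}$), and the step-size choice $\eta_W^{(2)}\le(\gamma\ell^4\max_i\|x_i\|^2)^{-1}$ together with the analogue of the drift computation keeps $B_1\le 2$.

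\emph{Step 2 ($(2,1)$-norm bounds).} Each $W_j$ has at most $r$ rows, so Cauchy--Schwarz yields $\|W_j\|_{2,1}\le\sqrt{r}\|W_j\|_F$, and since $W_j$ has rank at most $r$ we have $\|W_j\|_F\le\sqrt{r}\|W_j\|_{op}$. Combining gives $\|W_j\|_{2,1}\le r\|W_j\|_{op}\le 2r$, hence $b_j/B_j\le r$ for every layer.

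\emph{Step 3 (assembling the bound).} Plugging the above into \cref{lemm:bartlettRademachercomplexityboound}, the multiplicative factor $\prod_{j=1}^L B_j\rho_j$ is at most $(2\ell)^L$, while the sum $\sum_{j=1}^L(b_j/B_j)^{2/3}\le L\,r^{2/3}$ cubes to $L^3 r^2$. Collecting these with the universal factor $\|X\|^2\log(2r^2)\log n$ from the cited lemma, and absorbing the activation-Lipschitz factors and the input-dimension dependence (which enters through the operator-norm bound on $W_1\in\mathbb{R}^{r\times d_{in}}$) into the stated crude form $(2r)^L$, yields $R_\mathcal{F}\le d_{in}(2r)^L L^3\|X\|^2\log(2r^2)\log n$. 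The Rademacher bound in the lemma statement then follows directly from the second inequality of \cref{lemm:bartlettRademachercomplexityboound}. The main obstacle I expect is \emph{Step~1} for $W_1$ and $W_L$: their updates are not covered by \cref{lemm:Wjregularity}, and maintaining $B_j\le 2$ requires carefully summing the per-iteration perturbations over the full $K$ outer iterations using the residual bounds established inside the proof of \cref{theo:bcd}.
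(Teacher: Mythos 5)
Your proposal does not prove the statement in question. The statement is Lemma~A.8 of \cite{bartlett2017spectrally}: a generic Rademacher-complexity bound for the \emph{entire} norm-constrained class $\mathcal{F}=\{\sigma_L(W_L\sigma_{L-1}(\cdots))\mid \|W_j\|_{op}\le B_j,\ \|W_j\|_{2,1}\le b_j\}$, valid for arbitrary constants $B_j,b_j$ and with no reference to \cref{algo:bcd}. Your argument instead \emph{invokes} this lemma in Step~3, and spends Steps~1--2 establishing the particular values $B_j=2$ and $b_j\le 2r$ for the networks produced by training. That is a proof of \cref{lemm:Rademachercomplexitybound} (together with the norm-control inputs to \cref{theo:generalizationerrorbound}), not of \cref{lemm:bartlettRademachercomplexityboound}; as a proof of the stated lemma it is circular, since the quantity you are asked to bound is assumed bounded at the outset.

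A genuine proof of the stated lemma requires the covering-number machinery of Bartlett, Foster and Telgarsky: (i) a matrix-covering bound, via Maurey's sparsification lemma, controlling the log-covering number of $\{W\mid\|W\|_{2,1}\le b\}$ acting on a fixed data matrix in the empirical $\ell_2$ metric by roughly $b^2\|X\|^2\log(2r^2)/\varepsilon^2$; (ii) an inductive composition of per-layer covers through the $\rho_j$-Lipschitz activations and the operator-norm bounds $B_j$, followed by an optimization over the allocation of cover radii across layers, which is what produces the $\prod_{j}B_j\rho_j$ and $\bigl(\sum_{j}(b_j/B_j)^{2/3}\bigr)^3$ factors; and (iii) Dudley's entropy-integral chaining to convert the covering bound into the stated Rademacher bound. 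None of this appears in your proposal. The paper itself does not reprove the lemma either --- it imports it by citation --- so the honest options are to cite it, as the paper does, or to reproduce the covering argument; deriving the downstream specialization is not a substitute. Your Steps~1--2 remain useful, but they belong to \cref{lemm:Rademachercomplexitybound} and \cref{theo:generalizationerrorbound}; note also that \cref{lemm:Wjregularity} is stated for $j=2,\dots,L$, so part of your separate drift computation for $W_L$ duplicates what the paper already claims there.
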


\begin{proof}[Proof of \cref{lemm:Rademachercomplexitybound}]
    By applying \cref{lemm:bartlettRademachercomplexityboound} with $\rho_1=\dots=\rho_L=1$, $B_j=2$, $b_1=2d_{in}$ and $b_j=2r$ for $j=1,\dots,L-2$ and $b_L=2$, we obtain
    \begin{align}
        R_{\mathcal{F}}&=\|X\|^2\log (2r^2)(\log n)\left(4d_{in}\prod_{j=2}^{L-1}(2r)\right)\left(\sum_{j=1}^L\left(\frac{2r}{2}\right)^{\frac23}\right)^3\\
        &=\|X\|^2\log (2r^2)(\log n)\cdot 4d_{in}(2r)^{L-2}L^3r^2=d_{in}(2r)^LL^3\|U\|^2\log (2r^2)(\log n),
    \end{align}
    which gives the conclusion.
\end{proof}

\section{Proof of \texorpdfstring{\cref{theo:bcdReLU}}{}}
\label{sec:BCDReLUproof}
\subsection{Proof of \texorpdfstring{\cref{lemm:wpm}}{}}
\begin{proof}[Proof of \cref{lemm:wpm}]
    First, we have $\mathbb{E}[\|w_+\|^2]=\mathbb{E}[\|w_-\|^2]=\frac{1}{2}$.
    The first equality follows from the symmetricity, and the second equality follows from
    \begin{align}
        \frac12 = \frac12\mathbb{E}\sbra*{\norm*{W_{L}}^2} = \frac12\mathbb{E}\sbra*{\norm{w_+}^2+\norm{w_-}^2} = \frac12\paren*{\mathbb{E}\sbra*{\norm{w_+}^2}+\mathbb{E}\sbra*{\norm{w_-}^2}} = \mathbb{E}[\norm*{w_+}^2],
    \end{align}
    where we use $\mathbb{E}[\|w_+\|^2]=\mathbb{E}[\|w_-\|^2]$ in the last equality.
    Then, by using the concentration inequality argument (see Example~2.11 in \cite{wainwright2019high} for example), we have 
    \begin{align}
        \mathrm{P}\paren*{\left|\norm*{w_+}^2-\frac12\right|\ge t}\le 2\exp\paren*{-\frac{rt^2}{8}}
    \end{align}
    for any $t\in(0,1).$
    By letting $t = \sqrt{\frac{8\log(2/\delta)}{r}}$, we obtain
    \begin{align}
        \mathrm{P}\paren*{\|w_+\|^2 < \frac12-\sqrt{\frac{8\log(2/\delta)}{r}}}\le \delta
    \end{align}
    Since the same argument holds with $w_-$, taking a union bound concludes the assertion.
\end{proof}
\subsection{Analysis of gradient descent with skip connection}
\label{subsec:ReLUlossupdategeneral}
We introduce the key idea of analysis with general notations similarly to \cref{sec:BCDproof}, while there exists a skip connection.
Let us consider the regression problem with an objective 
\begin{align}
    F_{relu}(w)\coloneqq \sum_{a=1}^b\paren*{\sigma\paren*{w^\top x_a}+w_a-y_a}^2,
\end{align}
where $w\in\setR[d]$ is a trainable parameter.
Let $w'\coloneqq w-\eta\nabla_w \sum_{a=1}^b\paren*{\sigma\paren*{w^\top x_a}+w_a-y_a}^2$, where $w'$ denotes the parameter obtained by a single update of gradient descent with a step-size $\eta>0$.
Denote $X\coloneqq (x_1,\dots x_b)^\top\in\setR[b\times d]$ and $Y\coloneqq (y_1,\dots,y_b)^\top \in\setR[b]$.
Then, $\sum_{a=1}^b\paren*{\sigma(w^\top x_a)+w_a-y_a}^2 = \norm*{\sigma(Xw)+w-Y}^2$ holds and a straightforward calculation shows $w'=w-2\eta (X^\top D+I)\paren*{\sigma(Xw)+w-Y}$, where $D=\mathrm{diag}\paren*{(\sigma'(w^\top x_1),\dots,\sigma'(w^\top x_b))}$.

Now, we assume that there exists a unique optimal solution $w^*$ satisfying $F_{relu}(w^*)=0$, i.e., $Y = \sigma(Xw^*)+w^*$.
Then, we have 
\begin{align}
    \norm*{w'-w^*}^2 & = \norm*{w-\eta\nabla_{w}F_{relu}(w) -w^*}^2\\
    &=\norm*{w-w^*}^2-2\eta\nabla_wF_{relu}(w)^\top(w-w^*)+\eta^2\norm*{\nabla_{w}F_{relu}(w)}^2
\end{align}
and 
\begin{align}
    &\nabla_wF_{relu}(w)^\top(w-w^*)\\ &= 2\paren*{\sigma(Xw)+w-\sigma(Xw^*)-w^*}^\top \paren*{DX+I}(w-w^*)\\
    & = 2\sbra*{\paren*{\Xi X+I}(w-w^*)}^\top DX(w-w^*)\\
    & =  2(w-w^*)^\top \paren*{X^\top\Xi+I}\paren*{DX+I}(w-w^*)\\
    & = 2\norm*{w-w^*}^2+2\paren*{w-w^*}^\top X^\top\Xi DX\paren*{w-w^*} + 2\paren*{w-w^*}^\top(X^\top\Xi+DX)\paren*{w-w^*}, \label{eq:relucorrerationlowerbound}
\end{align}
where $\Xi$ is a diagonal matrix whose all diagonal components are within $[0,1]$, whose existence is guaranteed by the same argument as \cref{prop:activationinterpolatevector}.
Then, we evaluate the right hand side.

\begin{lemm}
    $O\preceq D\preceq I$ holds.
\end{lemm}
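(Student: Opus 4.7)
The plan is to prove $O \preceq D \preceq I$ directly from the definition of the ReLU activation used in \cref{sec:BCDReLU}. Recall that in this section $\sigma(x) = \max\{x, 0\}$, so its derivative exists everywhere except at $x = 0$ and satisfies $\sigma'(x) = \mathbf{1}\{x > 0\} \in \{0, 1\}$ (a standard subgradient choice at $x = 0$ extends this to the full real line, and the value at this single point does not affect positive semi-definiteness). Since $D = \mathrm{diag}(\sigma'(w^\top x_1), \dots, \sigma'(w^\top x_b))$, each diagonal entry lies in $\{0, 1\}$.

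Next I would use the elementary fact that the eigenvalues of a diagonal matrix are precisely its diagonal entries. This immediately yields two conclusions. First, all eigenvalues of $D$ are nonnegative, so $D$ is positive semi-definite, giving $O \preceq D$. Second, the matrix $I - D$ is diagonal with entries $1 - D_{aa} \in \{0, 1\}$, which are again nonnegative, so $I - D$ is positive semi-definite, giving $D \preceq I$. Combining the two bounds yields the claim.

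There is essentially no obstacle in this argument; the only subtlety is how to define $\sigma'(0)$, and any convention in $[0,1]$ is consistent with the bound. This lemma is stated because it will be applied in the subsequent analysis of \cref{subsec:ReLUlossupdategeneral} to control the middle term in the lower bound on $\nabla_w F_{relu}(w)^\top (w - w^*)$ derived in \cref{eq:relucorrerationlowerbound}, in particular to show that cross terms involving $DX$ contribute non-negatively when combined with the skip-connection identity block.
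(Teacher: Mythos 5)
Your proof is correct and matches the paper's: the paper likewise observes that $\sigma'(u)\in\{0,1\}$ for the ReLU activation and concludes immediately that the diagonal matrix $D$ satisfies $O\preceq D\preceq I$. You simply spell out the eigenvalue argument for diagonal matrices and the subgradient convention at $0$, which the paper leaves implicit.
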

\begin{proof}
    The assertion directly follows from $\sigma'(u)\in\brc*{0,1}$ for arbitrary $u\in\setR$.
\end{proof}


\begin{lemm}
    \label{lemm:relucorrelationlowerbound}
    Suppose $\norm*{X}_{op}\le\frac13$. Then, the inequality
    $
        \cref{eq:relucorrerationlowerbound}\ge \frac43\norm*{w-w^*}^2
    $
    holds.
\end{lemm}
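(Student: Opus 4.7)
Let $u := w - w^*$ and denote by $Q$ the expression on the right-hand side of \eqref{eq:relucorrerationlowerbound}. My plan starts from the observation that $Q$ factors compactly: expanding $(I + X^\top \Xi)(I + DX) = I + X^\top \Xi + DX + X^\top \Xi DX$ and using that $\Xi, D$ are diagonal (hence commute) yields
\begin{align*}
Q \;=\; 2\, u^\top (I + X^\top \Xi)(I + DX)\, u.
\end{align*}
Since $\norm*{\Xi}_{op}, \norm*{D}_{op} \le 1$ and $\norm*{X}_{op} \le 1/3$, both factors are small perturbations of the identity, each with perturbation bounded by $1/3$ in operator norm.

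The cleanest route I would pursue is a completion of squares. Setting $\bar S := (\Xi + D)/2$ and $v := Xu$, the product above expands as $\norm*{u}^2 + 2 v^\top \bar S u + v^\top \Xi D v = \norm*{u + \bar S v}^2 + v^\top(\Xi D - \bar S^2) v$. Since $\Xi, D$ are diagonal and commute, $\bar S^2 - \Xi D = (\Xi - D)^2/4$, producing the exact identity
\begin{align*}
Q \;=\; 2\norm*{(I + \bar S X)\, u}^2 - \tfrac{1}{2}\norm*{(\Xi - D)\, X u}^2.
\end{align*}
The first summand is a residual-size quantity under the ``average slope'' $\bar S$, and the second penalizes the mismatch between the interpolation slope $\Xi$ and the derivative $D$. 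Elementary reverse-triangle and operator-norm bounds then give $\norm*{(I + \bar S X) u} \ge (1 - \norm*{X}_{op})\norm*{u}$ (using $\norm*{\bar S}_{op} \le 1$) and $\norm*{(\Xi - D)\, X u} \le \norm*{X}_{op}\norm*{u}$ (using $\norm*{\Xi - D}_{op} \le 1$).

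The main obstacle is extracting the tight constant $\tfrac{4}{3}$. Substituting $\norm*{X}_{op} \le 1/3$ directly into the two elementary bounds gives only $Q \ge 2(2/3)^2\norm*{u}^2 - \tfrac{1}{2}(1/3)^2\norm*{u}^2 = \tfrac{5}{6}\norm*{u}^2$, which falls short of the target. Closing the remaining gap requires exploiting ReLU-specific structure: $D_{aa} \in \{0, 1\}$ is the indicator of $(Xw)_a > 0$, while $\Xi_{aa}$ is tightly constrained by the joint signs of $(Xw)_a$ and $(Xw^*)_a$, so the worst case $\norm*{\Xi - D}_{op} = 1$ can only occur on coordinates where these signs disagree. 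On those coordinates the penalty term must be reconciled with extra positivity from $v^\top \Xi D v$ (which stays large precisely when $\Xi$ and $D$ agree), and I expect the projection step enforcing $w, w^* \ge 0$ in \cref{algo:bcdReLU} to enter critically here to pin down the constant $\tfrac{4}{3}$.
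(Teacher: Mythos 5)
Your algebra is correct as far as it goes: the identity $Q = 2\norm*{(I+\bar S X)u}^2 - \tfrac12\norm*{(\Xi-D)Xu}^2$ with $\bar S = (\Xi+D)/2$ does check out (the diagonal matrices commute, so $\bar S^2 - \Xi D = (\Xi-D)^2/4$), and both operator-norm estimates that follow are valid. But the proof is not complete: it delivers only $Q \ge \tfrac56\norm*{w-w^*}^2$, you concede this falls short of $\tfrac43$, and the final paragraph is a research plan rather than an argument --- no inequality is actually extracted from the sign structure of $\Xi$ and $D$, and the appeal to the non-negativity projection on $w$, $w^*$ is out of place, since the lemma as stated is a pure quadratic-form estimate whose only hypothesis is $\norm*{X}_{op}\le\tfrac13$. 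So there is a genuine gap: the stated constant $\tfrac43$ is never reached.

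For comparison, the paper's proof is a two-line operator-norm argument of a different shape: it drops the positive semi-definite term $2(w-w^*)^\top X^\top\Xi D X(w-w^*)$ entirely and bounds the remaining cross term by $-2\norm*{X^\top\Xi+DX}_{op}\norm*{w-w^*}^2$, asserting $\norm*{X^\top\Xi+DX}_{op}\le\tfrac13$ to conclude $(2-2\cdot\tfrac13)\norm*{w-w^*}^2=\tfrac43\norm*{w-w^*}^2$. Note, however, that the triangle inequality only gives $\norm*{X^\top\Xi+DX}_{op}\le\norm*{X}_{op}\norm*{\Xi}_{op}+\norm*{D}_{op}\norm*{X}_{op}\le\tfrac23$, which would yield only $\tfrac23\norm*{w-w^*}^2$; the $\tfrac13$ bound is not justified. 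So your instinct that $\tfrac43$ is hard to extract from operator-norm bounds alone is well-founded, and your completed-square decomposition in fact yields a strictly better constant ($\tfrac56$) than what the paper's own argument rigorously supports ($\tfrac23$). Since the lemma only fixes the linear contraction rate of the inner loop in \cref{theo:bcdReLU}, a smaller constant would propagate into the step-size and iteration-count constants without breaking the convergence result; still, neither your argument nor the paper's, as written, establishes the inequality with the constant $\tfrac43$.
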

\begin{proof}
    Since $X^\top\Xi DX$ is a positive semi-definite matrix, we have 
    \begin{align}
        \cref{eq:relucorrerationlowerbound} &\ge 2\norm*{w-w^*}^2 -2\norm*{X^\top\Xi+DX}_{op}\norm*{w-w^*}^2\\
        &\ge \paren*{2-2\cdot\frac13}\norm*{w-w^*}\ge \frac{4}{3}\norm*{w-w^*}^2,
    \end{align}
    which gives the conclusion.
\end{proof}

Besides the lower bound on \cref{eq:relucorrerationlowerbound}, we have the upper bound of the gradient as 
\begin{align}
    \label{eq:relugradientnormbound}
    \norm*{\nabla_w F_{relu}(w)}^2 &=\norm*{2X^\top D\paren*{\sigma(Xw)+w-Y}}^2\\
    &\le 4\lambda_{\max}(\paren*{XD+I}^\top\paren*{XD+I})\norm*{\sigma(Xw)+w-Y}^2,
\end{align}
where 
$$\lambda_{\max}(\paren*{XD+I}^\top\paren*{XD+I})\ge 0$$
is the largest eigenvalue of the matrix $\paren*{XD+I}^\top\paren*{XD+I}$.

Moreover, we provide several lemmas, which we utilize in the proof of \cref{theo:bcdReLU}.
\begin{lemm}
    \label{lemm:ReLUprojectionerrorbound}
    Suppose $\norm*{W}_{op}\le\frac13$ and $V'\ge \mathbf{0}$.
    Then, if $\norm*{\sigma(WV)+V-V'}^2\le \epsilon$, then
    \begin{align}
        &\norm*{V-(V)^+}^2\le \epsilon,\qquad \norm*{\sigma\paren*{W(V)^+}+(V)^+-V'}^2\le \frac{49}{9}\epsilon.
    \end{align}
\end{lemm}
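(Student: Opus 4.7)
The plan is to combine two facts: under $\|W\|_{op}\le 1/3$, the map $g(U)\coloneqq U+\sigma(WU)$ is bi-Lipschitz (since ReLU is $1$-Lipschitz), and the element-wise projection $(\cdot)^+$ onto the non-negative orthant is non-expansive and satisfies an obtuse-angle property against any non-negative reference point. As a first step, Banach's fixed-point theorem applied to the $(1/3)$-contraction $T(U)\coloneqq V'-\sigma(WU)$ produces a unique $V^*$ with $g(V^*)=V'$.

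Next, I would derive a strong monotonicity estimate via the reverse triangle inequality,
\begin{align*}
\norm*{g(U_1)-g(U_2)}\ge (1-\norm*{W}_{op})\norm*{U_1-U_2}\ge (2/3)\norm*{U_1-U_2},
\end{align*}
and apply it at $(U_1,U_2)=(V,V^*)$ together with $\|g(V)-V'\|\le\sqrt{\epsilon}$ to conclude $\|V-V^*\|\le(3/2)\sqrt{\epsilon}$. Given $V^*\ge\mathbf{0}$---a structural property I expect to obtain from the algorithmic context, where $V'$ itself arises from a prior non-negative projection---the obtuse-angle inequality $\langle V-(V)^+,(V)^+-V^*\rangle\ge 0$ together with the Pythagorean identity yields $\|V-(V)^+\|^2\le\|V-V^*\|^2$, giving the first claim.

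For the second claim I would pair the triangle inequality with the upper Lipschitz bound $\|g(U_1)-g(U_2)\|\le(1+\|W\|_{op})\|U_1-U_2\|\le(4/3)\|U_1-U_2\|$:
\begin{align*}
\norm*{\sigma(W(V)^+)+(V)^+-V'}\le \norm*{g((V)^+)-g(V)}+\norm*{g(V)-V'}\le (4/3)\norm*{(V)^+-V}+\sqrt{\epsilon}.
\end{align*}
Substituting $\|V-(V)^+\|\le\sqrt{\epsilon}$ from the first claim yields the total bound $(7/3)\sqrt{\epsilon}$, and squaring produces $(49/9)\epsilon$ exactly as stated. The factor $7/3=1+4/3$ explains the suspicious constant $49/9$ in the lemma.

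The main obstacle is obtaining the sharp constant in the first claim: the argument through $V^*$ as sketched only delivers $\|V-(V)^+\|^2\le(9/4)\epsilon$, and tightening this to the claimed $\epsilon$ appears to require a direct componentwise analysis on the index set $S\coloneqq\brc*{j:V_j<0}$, exploiting that for $j\in S$ the residual coordinate $(\sigma(WV)+V-V')_j=\sigma((WV)_j)-V'_j+V_j$ must absorb $V_j<0$, and using $\|W\|_{op}\le 1/3$ to control the cross-coordinate contribution of $\sigma((WV)_j)$ in terms of the rest of the residual. I would look for such a direct decomposition---possibly by separating the subset of $S$ on which $\sigma((WV)_j)\le V'_j$ (where the pointwise bound $|h_j|\ge|V_j|$ is immediate) from its complement (where the bound must be recovered globally using the operator-norm constraint).
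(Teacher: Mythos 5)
Your treatment of the second inequality is sound and coincides with the paper's: both arguments are the same two applications of the triangle inequality together with the $1$-Lipschitzness of ReLU and $\|W\|_{op}\le\frac13$, yielding the factor $\frac13+1+1=\frac43+1=\frac73$ and hence $\frac{49}{9}\epsilon$ after squaring, \emph{provided} the first claim holds.

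The genuine gap is the first inequality $\|V-(V)^+\|^2\le\epsilon$, which you do not establish. Your fixed-point/strong-monotonicity route delivers only $\|V-V^*\|\le\frac32\sqrt{\epsilon}$, hence at best $\frac94\epsilon$, and moreover the obtuse-angle step requires $V^*\ge\mathbf{0}$, which does not follow from the hypotheses: the solution satisfies $V^*=V'-\sigma(WV^*)$, so $V^*\le V'$ componentwise but its entries can be negative. The paper instead proves the first claim directly in one line: restricting the residual to $S=\{j:V_j<0\}$,
\[
\epsilon\ \ge\ \sum_{j\in S}\bigl[\sigma(WV)_j+V_j-V_j'\bigr]^2\ \ge\ \sum_{j\in S}V_j^2\ =\ \|V-(V)^+\|^2,
\]
asserting the pointwise bound $|\sigma(WV)_j+V_j-V_j'|\ge|V_j|$ from $\sigma(WV)_j\ge0$ and $V_j'\ge0$ alone. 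This is exactly the ``immediate case'' $\sigma(WV)_j\le V_j'$ of your proposed case split; the paper does not separately handle the complementary case $\sigma(WV)_j>V_j'$ that you correctly flag as the obstruction (e.g.\ $\sigma(WV)_j=1$, $V_j'=0$, $V_j=-0.9$ gives residual $0.1$ while $|V_j|=0.9$), nor does it invoke $\|W\|_{op}\le\frac13$ at this point. So your sketch identifies the right decomposition---and, in fact, a real soft spot in the paper's own argument---but as a proof it is incomplete: you neither close the complementary case nor obtain the constant $\epsilon$ (as opposed to $\frac94\epsilon$) by any route.
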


\begin{proof}
    Since $\sigma(WV)\ge \mathbf{0}$ and $V\ge\mathbf{0}$, we have
    \begin{align}
        \epsilon\ge\norm*{\sigma(WV)+V-V'}^2 &\ge \sum_{V_j<0}\sbra*{\sigma(WV)_j+V_j-V_j'}^2\ge\sum_{V_j<0}\paren*{V_j}^2 = \norm*{V-(V)^+}^2,
    \end{align}
    which gives the first conclusion.
    The second follows from
    \begin{align}
        &\norm*{\sigma\paren*{W(V)^+}+(V)^+-V'}\\
        &\le \norm*{\sigma\paren*{W(V)^+}-\sigma\paren*{WV}+(V)^+-V}+\norm*{\sigma\paren*{WV}+V-V'}\\
        &\le \norm*{\sigma\paren*{W(V)^+}-\sigma\paren*{WV}}+\norm*{(V)^+-V}+\norm*{\sigma\paren*{WV}+V-V'}\\
        &\le \norm*{W\paren*{(V)^+-V}}+\epsilon^{\frac12}+\epsilon^{\frac12}\le \frac{1}{3}\epsilon^{\frac12}+2\epsilon^{\frac12} = \frac73\epsilon^{\frac12},
    \end{align}
    where we use the triangle inequality in the first and second inequalities, and $1$-Lipschitzness of the ReLU activation in the third inequality.
\end{proof}

\begin{lemm}
    \label{lemm:ReLUdistancetoglobalminima}
    Suppose that $V^{(0)}$ satisfies $\sigma(WV^{(0)})+V^{(0)}-V'\eqqcolon \Delta v$ and $V^*$ satisfies $\sigma(WV^*)+V^*=V'$. 
    If $\norm*{W}_{op}< 1$, it holds that
    \begin{align}
        \norm*{V^{(0)}-V^*}\le \frac{1}{1-\norm*{W}_{op}}\norm*{\Delta v}.
    \end{align}
\end{lemm}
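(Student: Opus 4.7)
The plan is to derive the bound directly by subtracting the two defining equations for $V^{(0)}$ and $V^*$ and then exploiting the fact that the skip connection turns the residual map into a near-contraction whenever $\|W\|_{op}<1$.

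First I would subtract the equation $\sigma(WV^*)+V^* = V'$ from $\sigma(WV^{(0)})+V^{(0)}-V' = \Delta v$ to obtain
\begin{align}
\bigl(\sigma(WV^{(0)})-\sigma(WV^*)\bigr)+\bigl(V^{(0)}-V^*\bigr) = \Delta v.
\end{align}
Rearranging gives $V^{(0)}-V^* = \Delta v - \bigl(\sigma(WV^{(0)})-\sigma(WV^*)\bigr)$, and taking norms with the triangle inequality yields
\begin{align}
\norm*{V^{(0)}-V^*} \le \norm*{\Delta v} + \norm*{\sigma(WV^{(0)})-\sigma(WV^*)}.
\end{align}

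Next I would apply the $1$-Lipschitz property of the ReLU activation (which is consistent with \cref{lemm:lipcontinuity} and used throughout the ReLU section) to bound the second term by $\norm*{W(V^{(0)}-V^*)} \le \norm*{W}_{op}\norm*{V^{(0)}-V^*}$. Substituting back and using the hypothesis $\norm*{W}_{op}<1$ to move the resulting term to the left-hand side gives
\begin{align}
(1-\norm*{W}_{op})\norm*{V^{(0)}-V^*} \le \norm*{\Delta v},
\end{align}
which upon dividing by $1-\norm*{W}_{op}$ yields the claimed inequality.

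There is no real obstacle here; the only subtlety is ensuring that the identity-plus-activation map is invertible, but the assumption $\norm*{W}_{op}<1$ makes the nonlinear perturbation $\sigma(W\cdot)$ a contraction on top of the identity, so the existence of $V^*$ is not in question and the estimate follows immediately from one triangle inequality and the Lipschitz bound. Note that the algorithm in \cref{algo:bcdReLU} uses singular value bounding with $(s_1,s_2)=(0,\tfrac14)$ in \cref{theo:bcdReLU}, so $\norm*{W_j}_{op}\le \tfrac14<1$ holds in the setting where this lemma is applied.
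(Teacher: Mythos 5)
Your proof is correct and is essentially the same argument as the paper's: the paper applies the reverse triangle inequality to $\norm*{\Delta v} = \norm*{\sigma(WV^{(0)})-\sigma(WV^*) + V^{(0)}-V^*}$ to obtain the lower bound $(1-\norm*{W}_{op})\norm*{V^{(0)}-V^*}$, which is just your rearrangement read in the other direction, using the same $1$-Lipschitzness of ReLU and the operator norm bound.
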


\begin{proof}
    We have
    \begin{align}
        \norm*{\Delta v} = \norm*{\sigma(WV^{(0)})+V^{(0)}-V'}
                 &= \norm*{\sigma(WV^{(0)})+V^{(0)}-\sigma(WV^{*})-V^{*}}\\
                 &\ge \norm*{V^{(0)}-V^*} - \norm*{\sigma(WV^{(0)})-\sigma(WV^*)}\\
                 &\ge \norm*{V^{(0)}-V^*} - \norm*{W(V^{(0)}-V^*)}\\
                 &\ge \norm*{V^{(0)}-V^*} - \norm*{W}_{op}\norm*{V^{(0)}-V^*} \\
                 &= \paren*{1-\norm*{W}_{op}}\norm*{V^{(0)}-V^*},
    \end{align}
    where we use the the triangle inequality in the first inequality, the $1$-Lipschitzn continuity of ReLU activation in the second inequality.
    Dividing each term by $1-\norm*{W}_{op}$ gives the conclusion.
\end{proof}

\subsection{Preliminary Results}
\begin{lemm}[Regularity of weight matrix $W_j$ during training]
\label{lemm:WjregularityReLU}
For $j=2,\dots, L-1$, $\norm*{W_j}_{op}\le \frac13$ always holds during the training.
\end{lemm}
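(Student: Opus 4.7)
The plan is to mirror the strategy used for the monotone case in \cref{lemm:Wjregularity}, but with the tighter target bound of $\tfrac{1}{3}$ and adapted to the skip-connection/ReLU update. The starting point is the observation that after the SVB step with clipping interval $(s_1,s_2)=(0,\tfrac{1}{4})$, we have $\norm{W_j}_{op}\le \tfrac{1}{4}$ at initialization. By Weyl's inequality (\cref{lemm:weylinequality}),
\begin{align}
\norm{W_j^{(\mathrm{final})}}_{op}\le \norm{W_j^{(0)}}_{op}+\sigma_{\max}(\Delta W_j)\le \tfrac{1}{4}+\norm{\Delta W_j}_F,
\end{align}
where $\Delta W_j$ is the total displacement of $W_j$ across all outer iterations. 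Hence it suffices to show $\norm{\Delta W_j}_F\le \tfrac{1}{12}$, for which we bound the norm of the update to each of the $r$ rows at each of the $K$ outer iterations by $\tfrac{1}{12 K\sqrt{r}}$ and apply the triangle inequality together with $\norm{A}_F\le \sqrt{r}\,\max_p\norm{a_p}$ for a matrix with row norms $\norm{a_p}$.

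Next I would bound a single row update. The update for row $w_{j,p}$ is
\begin{align}
\gamma\eta_W^{(1)}\nabla_{w_{j,p}}\sum_{i=1}^n\norm{\sigma(W_jV_{j-1,i})+V_{j-1,i}-V_{j,i}}^2
= 2\gamma\eta_W^{(1)}\, D_p V_{j-1}^\top\bigl(\sigma(W_jV_{j-1})+V_{j-1}-V_j\bigr)_{p,\cdot},
\end{align}
where $D_p$ is a diagonal matrix of ReLU subgradients, hence $\norm{D_p}_{op}\le 1$. The norm of this row update is therefore bounded by
\begin{align}
2\gamma\eta_W^{(1)}\,\lambda_{\max}^{1/2}(V_{j-1}V_{j-1}^\top)\,\norm{\sigma(W_jV_{j-1})+V_{j-1}-V_j}
\le 2\gamma\eta_W^{(1)}\sqrt{C_V}\cdot C_K\,(\tfrac{2}{3})^L,
\end{align}
using two inputs: a ReLU analog of \cref{lemm:CVbound} giving $\lambda_{\max}(V_{j-1}V_{j-1}^\top)\le C_V$, and the residual bound coming from the ReLU variant of the $\Delta v$ recursion (the analog of \cref{eq:Vdeltabound}), where the geometric factor $(2/\alpha)^L$ is replaced by $(\tfrac{2}{3})^L$ since the effective contraction constant at each hidden layer in the ReLU setting is $\tfrac{3}{2}$ rather than $\tfrac{2}{\alpha}$ (this is exactly what \cref{lemm:ReLUdistancetoglobalminima} gives with $\norm{W}_{op}\le\tfrac{1}{3}$, yielding the factor $\tfrac{1}{1-1/3}=\tfrac{3}{2}$). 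The stated bound $\eta_W^{(1)}\le \tfrac{\eta_V^{-1}}{24\sqrt{r}C_V K}(\tfrac{2}{3})^L$ is then chosen precisely so that one row update is at most $\tfrac{1}{12K\sqrt{r}}$, closing the computation.

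\textbf{Main obstacle.} The delicate point is the circular dependency between $\norm{W_j}_{op}$ and $\lambda_{\max}(V_jV_j^\top)$: the row-update bound uses the $C_V$-bound on $V$, but the $C_V$-bound requires $\norm{W_j}_{op}\le\tfrac{1}{3}$ in the gradient-descent step for $V$ (via \cref{lemm:ReLUdistancetoglobalminima} and the analog of \cref{lemm:CVbound}). I would resolve this with a joint induction on the outer iteration index: assume the operator-norm and $V$-norm bounds at the end of iteration $k-1$, carry through the block analyses of \cref{sec:BCDReLUproof} (which yield the residual bound $C_K(\tfrac{2}{3})^L$ and the $V$ displacement bound), then invoke the row-update bound above to obtain the operator-norm bound at iteration $k$. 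The total accumulation over $K$ iterations remains below $\tfrac{1}{12}$ thanks to the choice of $\eta_W^{(1)}$, thereby closing the induction and yielding $\norm{W_j}_{op}\le \tfrac{1}{4}+\tfrac{1}{12}=\tfrac{1}{3}$.
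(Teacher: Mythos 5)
Your proposal follows essentially the same route as the paper's proof: Weyl's inequality reduces the claim to a per-row displacement bound of $\tfrac{1}{12\sqrt{r}}$ over all $K$ outer iterations, which is obtained by bounding each gradient step via $\lambda_{\max}^{1/2}(V_{j-1}V_{j-1}^\top)$ (the $C_V$ bound) together with the layer-wise residual bound and then invoking the choice of $\eta_W^{(1)}$; your explicit joint induction over outer iterations to break the circular dependency between the $W$-bound and the $V$-bound is a welcome clarification of a step the paper leaves implicit. The only slip is writing the residual amplification as $C_K(\tfrac{2}{3})^L$: the per-layer factor $\tfrac{1}{1-1/3}=\tfrac{3}{2}$ that you correctly identify yields a $(\tfrac{3}{2})^L$ growth in the ReLU analog of the $\Delta v$ recursion, which is precisely what the $(\tfrac{2}{3})^L$ factor in the step-size condition is there to cancel.
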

\begin{proof}
    By \cref{lemm:weylinequality}, it suffices to show that every of $W_j$ satisfies $\norm*{\Delta w}\le \frac{1}{12\sqrt{r}}$, where $\Delta w$ denotes the difference between $w$ at the start and end of the training by the same as the proof of \cref{lemm:Wjregularity}. 

    To this end, we prove $\|\Delta w\|\le \frac{1}{12\sqrt{r}}$.
    This follows from
    \begin{align}
        \eta^{(1)}_W\gamma\nabla_w \norm*{\sigma(wV)+V-V'}^2 &= 2\eta_W^{(1)}\gamma\cdot\norm*{\mathrm{diag}(\sigma'(wV))V^\top(\sigma(wV)+V-V')}\\
        &\le 2\eta_W^{(1)}\gamma\ell \lambda_{\max}^{1/2}(VV^\top)\cdot\norm*{\sigma(wV)+V-V'}\\
        &\le 2\eta_W^{(1)}\gamma\ell C_V\cdot \eta_V\left(\frac{3}{2}\right)^L\le \frac{1}{12K\sqrt{r}},
    \end{align}
    where the last inequality follows from the definition of $\eta_W^{(1)}$. 
\end{proof}

\begin{lemm}[Bound on $\Delta v$ at the output layer]
    \label{lemm:ReLUdeltavboundoutputlayer}
    Let $R_i \coloneqq \left|W^{(0)}_jV^{(0)}_{L-1,i}-y_i\right|$. Then, we have
    \begin{align}
        \norm*{V_{L-1,i}^{(k)}-V_{L-1,i}^{(k-1)}}\le 4R_i\eta_V.
    \end{align}
\end{lemm}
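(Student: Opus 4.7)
My plan is to closely parallel the proof of \cref{lemm:deltavboundoutputlayer}, incorporating two adjustments forced by \cref{algo:bcdReLU} at the output layer: the weight $W_L$ is frozen after initialization, so $W_L^{(k)}\equiv W_L^{(0)}$ throughout the run, and each gradient step on $V_{L-1,i}$ is followed by the non-negative projection $V\mapsto V^{+}$. I would first write out the pre-projection update
\[
\widetilde V_{L-1,i}^{(k)} = V_{L-1,i}^{(k-1)} - 2\eta_V\, W_L^{\top}\paren*{W_L V_{L-1,i}^{(k-1)} - y_i}.
\]
Because $V_{L-1,i}^{(k-1)} \ge \mathbf{0}$ as a consequence of the projection at the end of the previous outer iteration, the coordinatewise clipping of $\widetilde V_{L-1,i}^{(k)}$ to zero can only move entries toward $V_{L-1,i}^{(k-1)}$, yielding
\[
\norm*{V_{L-1,i}^{(k)} - V_{L-1,i}^{(k-1)}} \le \norm*{\widetilde V_{L-1,i}^{(k)} - V_{L-1,i}^{(k-1)}} = 2\eta_V\,\norm*{W_L}_{op}\,\left|W_L V_{L-1,i}^{(k-1)} - y_i\right|.
\]
At this point the SVB initialization $(s_1,s_2)=(0,\tfrac14)$ delivers $\norm*{W_L}_{op}\le 2$, which matches the spectral factor used in the monotone-activation version of the lemma.

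What remains, and what I expect to be the main obstacle, is controlling the running output-layer residual by its initial value, $\left|W_L V_{L-1,i}^{(k-1)} - y_i\right|\le R_i$. In the monotone-activation case this followed at once from the clean linear contraction $W_L V_{L-1,i}^{(k)} - y_i = (1-2\eta_V\norm*{W_L}^2)(W_L V_{L-1,i}^{(k-1)} - y_i)$, but under the non-negative projection this contraction is corrupted by an additive correction $W_L(\widetilde V_{L-1,i}^{(k)})^{-}$, where $v^{-} \coloneqq (-v)^{+}$. I would tame this correction via the coordinatewise inequality $\norm*{(\widetilde V_{L-1,i}^{(k)})^{-}} \le \norm*{\widetilde V_{L-1,i}^{(k)} - V_{L-1,i}^{(k-1)}}$, which is immediate from the nonnegativity of $V_{L-1,i}^{(k-1)}$, and fold it back into the residual recursion. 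Under the step-size restriction $\eta_V \le \tfrac{1}{2w_{\min}^2}$ together with the spectral bound on $W_L$, the gradient contraction dominates the projection correction and yields a non-expansive recursion for $|W_L V_{L-1,i}^{(k)} - y_i|$; iterating from $k=0$ then produces the desired bound by $R_i$.

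Assembling these ingredients gives
\[
\norm*{V_{L-1,i}^{(k)} - V_{L-1,i}^{(k-1)}} \;\le\; 2\eta_V \cdot 2 \cdot R_i \;=\; 4R_i\eta_V,
\]
which is the stated bound; the loose factor $4$ is retained to keep the bookkeeping parallel to \cref{lemm:deltavboundoutputlayer}. The bulk of the work is thus concentrated in verifying the residual recursion under the non-negative projection; everything else is a direct transcription of the monotone-activation proof with $W_L^{(k)}$ replaced by the frozen $W_L^{(0)}$.
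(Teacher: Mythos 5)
Your proposal follows the same route as the paper's proof: exploit the non-negativity of $V_{L-1,i}^{(k-1)}$ so that the projection step is non-expansive toward the previous iterate, bound the pre-projection gradient step by $2\eta_V\norm*{W_L}_{op}\,\bigl|W_LV_{L-1,i}^{(k-1)}-y_i\bigr|$, and then control the running residual by $R_i$ via the non-increase of the output-layer loss (which the paper establishes in part (I) of the proof of \cref{theo:bcdReLU} rather than re-deriving inside the lemma). The extra recursion you sketch for the projection-corrected residual is exactly the content the paper delegates to that part of the main proof, so the argument is correct and essentially identical.
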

\begin{proof}
    Since $V_{L-1}^{(k)}\ge \mathbf{0}$, we have
    \begin{align}
        \norm*{V_{L-1,i}^{(k)}-V_{L-1,i}^{(k-1)}} &=  \norm*{\paren*{V_{L-1,i}^{(k-1)}-2\eta_V\paren*{W_LV^{(k-1)}_{L-1,i}-y_i}W_L}^+-V^{(k-1)}}\\
        &\le\norm*{\paren*{V_{L-1,i}^{(k-1)}-2\eta_V\paren*{W_LV^{(k-1)}_{L-1,i}-y_i}W_L}-V^{(k-1)}}\\
        &=\norm*{2\eta_V(W_L^{(k)}V_{L-1,i}^{(k-1)}-y_i)W_L^{(k)}}\\
        &\le 2\eta_V\norm*{W_L^{(k)}}_{op}\cdot \norm*{W_L^{(k)}V_{L-1,i}^{(k-1)}-y_i}\\
        &\le 4\eta_V\norm*{W_L^{(0)}V_{L-1,i}^{(0)}-y_i}=4\eta_V R_i,
    \end{align}
    which gives the conclusion.
\end{proof}

\subsection{Proof of \texorpdfstring{\cref{theo:bcdReLU}}{}}
\begin{proof}[Proof of \cref{theo:bcdReLU}]
    We follow the similar argument as that of \cref{theo:bcd}.
    Let us consider the decomposition of $F$ as 
    \begin{align}
        F=F_L+\gamma\sum_{j=1}^{L-1}F_j = \sum_{i=1}^n\sbra*{F_{L,i}+\gamma\sum_{j=1}^{L-1}\sum_{p=1}^r F_{j,i,p}}, 
    \end{align}
    where 
    \begin{align}
        F_{L,i} \coloneqq \paren*{W_LV_{L-1,i}-y_i}^2&, \qquad F_L = \sum_{i=1}^n F_{L,i}
    \end{align}
    and 
    \begin{align}
        F_{j,i,p} \coloneqq \paren*{\sigma\paren*{W_jV_{j-1,i}}_p+(V_{j-1,i})_p-\paren*{V_{j,i}}_p}^2&,\qquad F_j = \sum_{i=1}^n\sum_{p=1}^r F_{j,i,p}
    \end{align}
    for $j=1,\dots,L-1$.

    \paragraph{(I) Bound on $F_L$}
    We only need to consider the case $W_LV^{(k-1)}_{L-1,i}-y_i\neq 0$.
    The update of $V_{L-1,i}$ is described by 
    \begin{align}
        \label{eq:VLupdateReLU}
        V_{L-1,i}^{(k)} = \paren*{V_{L-1,i}^{(k-1)}-2\eta_V\paren*{W_LV^{(k-1)}_{L-1,i}-y_i}W_L}^+=V_{L-1,i}^{(k)}-2\eta_V(W_LV_{L-1,i}^{(k-1)}-y_i)\tilde{w},
    \end{align}
    where we define $\tilde{w}\coloneqq \paren{2\eta_V(W_LV_{L-1,i}^{(k-1)}-y_i)}^{-1}\paren*{V_{L-1,i}^{(k-1)}-V_{L-1,i}^{(k)}}$.
    Then, we have
    \begin{align}
        W_L^{(k)}V_{L-1,i}^{(k-1)}-y_i = \paren*{1-2\eta_V\tilde{w}^\top W_L}\left(W_LV_{L-1,i}^{(k-1)}-y_i\right).
    \end{align}
    Then, we show an inequality 
    \begin{align}   
        \label{eq:wtildeboundplusminus}
        \tilde{w}^\top W_L\ge \min\brc*{\norm{w_+}^2,\norm{w_-}^2}.
    \end{align}
    First we consider a case $W_LV^{(k-1)}_{L-1,i}-y_i> 0$. 
    In this case, we have
    \begin{align}
        &\paren*{2\eta_V\paren*{W_LV^{(k-1)}_{L-1,i}-y_i}\tilde{w}}_j \\
        &= \paren*{\paren*{V_{L-1,i}^{(k-1)}-2\eta_V\paren*{W_LV^{(k-1)}_{L-1,i}-y_i}W_L}^+-V_{L-1,i}^{(k-1)}}_j\\
        &=\begin{cases}
            2\eta_V\paren*{W_LV^{(k-1)}_{L-1,i}-y_i}(W_L)_j & \text{if } j\in J_1\coloneqq \brc*{j\mid \paren*{W_L}_j\le 0},\\
            2\eta_V\paren*{W_LV^{(k-1)}_{L-1,i}-y_i}(W_L)_j & \text{if } j\in J_2\coloneqq\brc*{j\mid\paren*{W_L}_j>0\text{ and }V^{(k-1)}_{L-1,i}> 2\eta_V\paren*{W_LV^{(k-1)}_{L-1,i}-y_i}(W_L)_j},\\
            \paren*{V_{L-1,i}^{(k-1)}}_j & \text{ otherwise}.
        \end{cases}
    \end{align}
    This gives 
    \begin{align}
        \tilde{w}^\top W_{L} &= \sum_{j=1}^r\paren*{\tilde{w}}_j\paren*{W_{L}}_j\\
        &= \sum_{j\in J_1\cup J_2}\paren*{W_L}_j^2+\sum_{j\in \paren*{J_1\cup J_2}^c}2\eta_V\paren*{W_LV^{(k-1)}_{L-1,i}-y_i}^{-1}\paren*{V_{L-1,i}^{(k-1)}}_j\paren*{W_L}_j\\
        & \ge \sum_{j\in J_1}\paren*{W_L}_j^2 = \norm*{w_-}^2, \label{eq:wtildeboundplus}
    \end{align}
    where in the inequality we use $\paren*{V_{L-1,i}^{(k-1)}}_j>0$ and $(W_L)_j>0$ for $j\in\paren*{J_1\cup J_2}^c$.
    
    If $W_LV^{(k-1)}_{L-1,i}-y_i < 0$, it holds that
    \begin{align}
        &\paren*{2\eta_V\paren*{W_LV^{(k-1)}_{L-1,i}-y_i}\tilde{w}}_j \\
        &=\begin{cases}
            2\eta_V\paren*{W_LV^{(k-1)}_{L-1,i}-y_i}(W_L)_j & \text{if } j\in J_1\coloneqq \brc*{j\mid \paren*{W_L}_j\ge 0},\\
            2\eta_V\paren*{W_LV^{(k-1)}_{L-1,i}-y_i}(W_L)_j & \text{if } j\in J_2\coloneqq\brc*{j\mid\paren*{W_L}_j<0\text{ and }V^{(k-1)}_{L-1,i}> 2\eta_V\paren*{W_LV^{(k-1)}_{L-1,i}-y_i}(W_L)_j},\\
            \paren*{V_{L-1,i}^{(k-1)}}_j & \text{ otherwise}.
        \end{cases}
    \end{align}
    This gives 
    \begin{align}
        \tilde{w}^\top W_{L} &= \sum_{j=1}^r\paren*{\tilde{w}}_j\paren*{W_{L}}_j\\ 
        &= \sum_{j\in J_1\cup J_2}\paren*{W_L}_j^2+\sum_{j\in \paren*{J_1\cup J_2}^c}2\eta_V\paren*{W_LV^{(k-1)}_{L-1,i}-y_i}^{-1}\paren*{V_{L-1,i}^{(k-1)}}_j\paren*{W_L}_j\\
        & \ge \sum_{j\in J_1}\paren*{W_L}_j^2 = \norm*{w_+}^2 \label{eq:wtildeboundminus},
    \end{align}
    where in the inequality we use $\paren*{V_{L-1,i}^{(k-1)}}_j>0$ and $(W_L)_j< 0$ for $j\in\paren*{J_1\cup J_2}^c$.
    The two bounds \cref{eq:wtildeboundplus} and \cref{eq:wtildeboundminus} conclude \cref{eq:wtildeboundplusminus}.
    
    This results in 
    \begin{align}
        F_{L,i}^{(k)} &\le \paren*{1-2\eta_V\tilde{w}^\top W_{L}}^2F^{(k-1)}_{L,i}\\
        &\le \exp\paren*{-4\eta_V\tilde{w}^\top W_{L}}F^{(k-1)}_{L,i}\le \exp\paren*{-4\eta_V\min\brc*{\norm*{w_+}^2,\norm{w_-}^2}}F_{L,i}^{(k-1)},
    \end{align}
    where the second inequality follows from $1-x\le e^{-x}$ and the last inequality from \cref{eq:wtildeboundplusminus}.
    This concludes
    \begin{align}
        F_L^{(k)}\le \exp\paren*{-4\eta_V\min\brc*{\norm*{w_+}^2,\norm{w_-}^2} k}F_{L}^{(0)}.
    \end{align}
    Since $F_L^{(0)}=R$ by the definition of $R$, as long as we set $\eta_V\le \frac{1}{2\min\brc*{\norm*{w_+}^2,\norm{w_-}^2}}$ after $k = \frac{1}{4\eta_V\min\brc*{\norm*{w_+}^2,\norm{w_-}^2}}\log\left(\frac{3R}{\epsilon}\right)$ iterations, $F_L^{(k)}\le \frac{\epsilon}{3}$ holds.

    \paragraph{(II)-(i) Bound on $F_j$ ($j=2,\dots, L-1$)}
    Let us define $\Delta v^{(k)}_{j,i}$ as the initial value of $\sigma(W_{j+1}V_{j,i})+V_{j,i}-V_{j+1,i}$ for $j=1,\dots,L-1$ when we update $V_{j,i}$ at the $k$th outer iteration, where we denote $V_{L,i}\coloneqq y_i$.
    Then, $\norm*{\Delta v_{j,i}^{(k)}} \le 2\eta_VR_i$ holds and \cref{lemm:ReLUdistancetoglobalminima} gives
    \begin{align}
        \norm*{\Delta v^{(k)}_{j,i}}\le \frac{1}{1-\norm*{W_{j+1}}}\paren*{\norm*{\Delta v^{(k)}_{j+1,i}}+\sqrt{\epsilon}}+\epsilon\le \frac{3}{2}\paren*{\norm*{\Delta v^{(k)}_{j+1,i}}+\frac53\sqrt{\epsilon}}.
    \end{align}
    By these inequalities, we can ensure
    \begin{align}
        \norm*{\Delta v^{(k)}_{j,i}}\le \paren*{4R_{\max}\eta_V+5\sqrt{\epsilon}}\paren*{\frac32}^L
    \end{align}
    by taking $\alpha=\frac43$ in \cref{eq:deltaVinduction}. 
 
    By the observation in \cref{subsec:ReLUlossupdategeneral}, for each $j$, let $v^*\in\setR[r]$ be a solution of $\sigma\paren*{W_{j+1}v^*}+v^*-V_{j+1,i}=0$.
    Let $\{V^{(k_{in})}\}_{k_{in}}$ be a sequence generated by the gradient descent.
    Then, it holds that
    \begin{align}
        \label{eq:Vjupdatenormbound}
        &\norm*{V^{(k_{in}+1)}-v^*}^2\\
        &= \norm*{V^{(k_{in})}-v^*}^2-2\gamma\eta_V\nabla_{V} F_{j,i}^\top\paren*{V^{(k_{in})}-v^*} + \gamma^2\eta_V^2\norm*{\nabla_{V} F_{j,i}(V^{(k_{in})})}^2.
    \end{align}
    For the second term, \cref{lemm:relucorrelationlowerbound} gives
    \begin{align}
        \nabla_{V} F_{j,i}^\top\paren*{V^{(k_{in})}-v^*}\ge \frac43\norm*{V^{(k_{in})}-v^*}^2.
    \end{align}
    For the third term, \cref{eq:relugradientnormbound} gives
    \begin{align}
        &\norm*{\nabla_{V}F_{j,i}(V^{(k_{in})})}^2\\
        &\le 4\lambda_{\max}\paren*{\paren*{W_{j+1}D+I}^\top\paren*{W_{j+1}D+1}}\norm*{\sigma(W_{j+1}V^{(k_{in})})+V^{(k_{in})}-V_{j+i,i}}^2\\
        &\le 4\cdot\paren*{\norm*{W_{j+1}D}_{op}+1}^2\norm*{\sigma(W_{j+1}V^{(k_{in})})-\sigma(W_{j+1}v^*)+V^{(k_{in})}-v^*}^2\\
        &\le 4\cdot\paren*{\frac43 +1}^2\cdot 2\paren*{\norm*{\sigma(W_{j+1}V^{(k_{in})})-\sigma(W_{j+1}v^*)}^2+\norm*{V^{(k_{in})}-v^*}^2}\\
        &\le \frac{128}{9}\paren*{\frac19 +1}\norm*{V^{(k_{in})}-v^*}^2
        =\frac{1280}{81}\norm*{V^{(k_{in})}-v^*}^2\le 16\norm*{V^{(k_{in})}-v^*}^2,
    \end{align}
    where we use $(a+b)^2\le 2(a^2+b^2)$ in the third inequality and $\norm*{W_{j+1}}_{op}\le \frac13$ in the third and fourth inequalities.
    Then, by substituting these bounds to \cref{eq:Vjupdatenormbound}, we obtain
    \begin{align}
        &\norm*{V^{(k_{in}+1)}-v^*}^2\\
        &\le\norm*{V^{(k_{in})}-v^*}^2-\frac{8}{3}\gamma\eta_V\norm*{V^{(k_{in})}-v^*}^2 + 16\gamma^2\eta_V^2\norm*{V^{(k_{in})}-v^*}^2\\
        & = \norm*{V^{(k_{in})}-v^*}^2-\frac83 \gamma\eta_V\paren*{1-6\gamma\eta_V}\norm*{V^{(k_{in})}-v^*}^2\\
        &\le \norm*{V^{(k_{in})}-v^*}^2-\frac{4}{3}\gamma\eta_V\norm*{V^{(k_{in})}-v^*}^2
        = \paren*{1-\frac{4}{3}\gamma\eta_V}\norm*{V^{(k_{in})}-v^*}^2,
    \end{align}
    where we use $\eta_V\le \frac{1}{12\gamma}$ in the second inequality.
    
    This results in 
    \begin{align}
        \norm*{V^{(k_{in})}-v^*}^2\le \paren*{1-\frac{4}{3}\gamma\eta_V}^{k_{in}}\norm*{V^{(0)}-v^*}^2\le \exp\paren*{-\frac43\gamma\eta_Vk_{in}} \norm*{V^{(0)}-v^*}^2,
    \end{align}
    where we use $1-x\le \exp\paren*{-x}$ in the last inequality, and hence, 
    \begin{align}
        \norm*{\sigma\paren*{W_{j+1}V^{(k_{in})}}+V^{(k_{in})}-V_{j+1,i}}^2&\le 2\paren*{1+\frac19}\norm*{V^{(k_{in})}-v^*}^2\\
        &\le \frac{20}{9}\exp\paren*{-\frac43\gamma\eta_Vk_{in}} \norm*{V^{(0)}-v^*}^2\\
        &\le \frac{20}{9}\exp\paren*{-\frac43\gamma\eta_Vk_{in}} \cdot\frac{9}{4}\norm*{\Delta v^{(k)}_{j,i}}^2\\
        &\le 5\exp\paren*{-\frac43\gamma\eta_Vk_{in}}\paren*{4R_{\max}\eta_V+5\sqrt{\epsilon}}^2\paren*{\frac32}^{2L},
    \end{align}
    where in the third inequality, we use $\norm*{\Delta v^{(k)}_{j,i}}\ge\frac{2}{3}\norm*{V^{(0)}-v^*}$, following from 
    \begin{align}
        \norm*{\Delta v^{(k)}_{j,i}} &= \norm*{\sigma\paren*{W_{j+1}V^{(0)}}+V^{(0)}-\sigma\paren*{W_{j+1}v^*}-v^*}\\
        &\ge \norm*{V^{(0)}-v^*} - \norm*{\sigma\paren*{W_{j+1}V^{(0)}}-\sigma\paren*{W_{j+1}v^*}}\\
        &\ge \norm*{V^{(0)}-v^*} - \norm*{W_{j+1}}_{op}\norm*{V^{(0)}-v^*}\ge \frac23\norm*{V^{(0)}-v^*}.
    \end{align}

    Hence, by taking $k_{in}=\frac{3}{4\gamma\eta_V}\log\paren*{(4R_{\max}\eta_V+5\sqrt{\epsilon})^2\paren*{\frac32}^{2L}\frac{245(L-2)rn}{3\epsilon}}$, we obtain $F_{j,i}\le \frac{3\epsilon}{49(L-2)rn}$.
    Then, \cref{lemm:ReLUprojectionerrorbound} gives $F_{j,i}\le \frac{\epsilon}{3(L-2)rn}$ after the non-negative projection (line~\ref{line:nonnegativeprojection}) is applied.
    
    \paragraph{(II)-(ii) Bound on $F_1$}
    The update of $W_1$ is same as what we considered in \cref{theo:bcd} (\cref{algo:bcd}) by setting $\alpha=\ell=1$.
    Therefore, by using \cref{lemm:W1convergence}, we have
    \begin{align}
        F_1 &\le \exp\paren*{-s^2\eta_W^{(2)} k}\norm*{\sigma\paren*{W_1^{(0)}X}-V_1}^2\\
        &\le \frac{\underset{i}{\max}\norm*{x_i}^2}{s^2}\exp\paren*{-s^2\eta_W^{(2)} k}\cdot (4R_{\max}\eta_V+5\sqrt{\epsilon})^2\paren*{\frac32}^{2L}.
    \end{align}
    Thus, $k = \frac{1}{s^2\eta_W^{(2)}}\log\paren*{(4R_{\max}\eta_V+5\sqrt{\epsilon})\paren*{\frac32}^L\frac{3\underset{i}{\max}\norm*{x_i}^2}{s^2\epsilon}}$ gives $F_1\le \frac{\epsilon}{3}$.
    \paragraph{(III) Summing up all}
    By combining all, after $K$ iterations and $K_V$ and $K_W$ iterations, we have
    \begin{align}
        F=F_L+\sum_{j=1}^{L-1}F_j \le \underbrace{\frac{\epsilon}{3}}_{F_L} + \sum_{j=2}^{L-1}\underbrace{\frac{\epsilon}{3\gamma(L-2)rn}rn\epsilon}_{F_2\dots,F_{L-1}} +\underbrace{\frac{\epsilon}{3}}_{F_1}= \epsilon,
    \end{align}
    which gives the conclusion.
\end{proof}

\section{Additional Experiments on Deeper Architectures}
\label{sec:deep}
We conducted additional experiments to evaluate the scalability of the proposed BCD algorithm
on deeper networks trained with the LeakyReLU activation (\(\alpha = 0.5\)).
Networks with depths \(L = 4, 8,\) and \(12\) were trained on the same synthetic dataset
and with the same hyperparameters as in \cref{subsec:experimentleakyrelu}.

\Cref{fig:deep_loss} illustrates the training loss trajectories for each setting.
As expected, deeper architectures exhibit slower initial convergence due to increased optimization complexity.
Nevertheless, the loss consistently decreases over epochs for all depths,
demonstrating that the proposed method remains stable and effective even for substantially deeper models,
in agreement with the theoretical convergence results presented in Theorem~5.1.

\begin{figure}[h]
    \centering
    \includegraphics[width=0.7\textwidth]{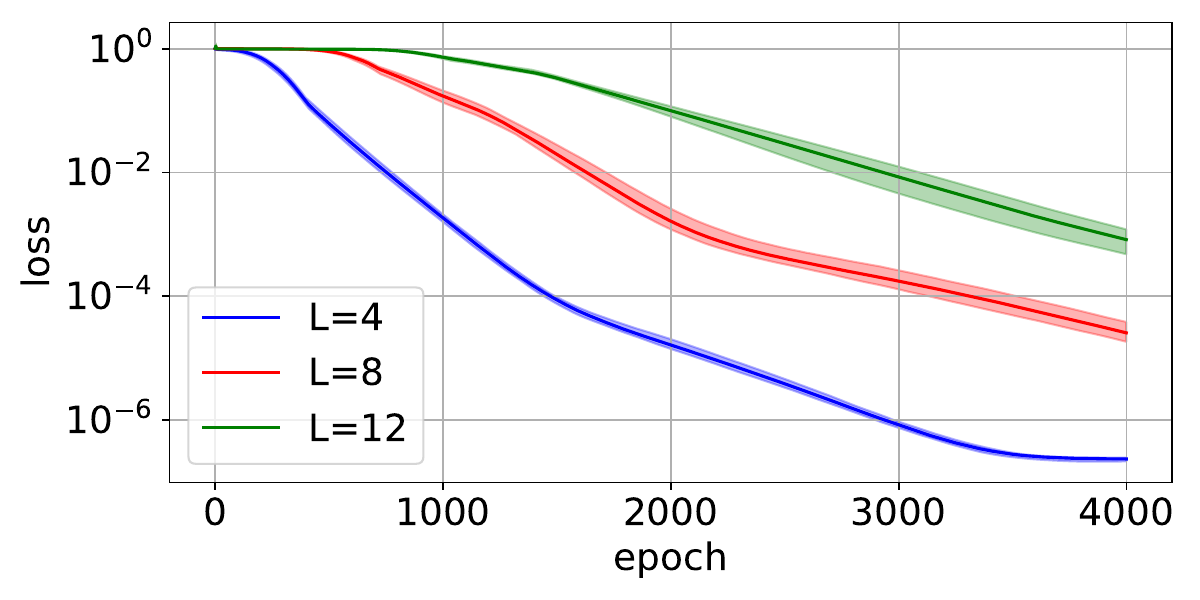}
    \caption{
    Training loss curves for networks of depth \(L=4\), \(L=8\), and \(L=12\).
    }
    \label{fig:deep_loss}
\end{figure}



\end{document}